\documentclass[preprint]{imsart}

\usepackage{geometry}
\usepackage{fancyhdr}
\usepackage{amsmath, mathtools}
\usepackage{bbm, mathrsfs}
\usepackage{amssymb,amsthm,upgreek}
\usepackage[inline]{enumitem}
\usepackage{hyperref,xcolor, textcmds}
\usepackage{xargs}
\usepackage{booktabs}
\usepackage{tikz}
\usepackage{caption, subcaption, placeins}
\usepackage{todonotes}
\usepackage{algorithm}
\usepackage{algpseudocode}

\usepackage{tikz}
\usetikzlibrary{intersections} 
\usetikzlibrary{arrows.meta,calc,decorations.pathreplacing}

\usetikzlibrary{calc,arrows.meta,positioning, fit}

\usepackage{pgfplots}
\usepgfplotslibrary{fillbetween}
\pgfplotsset{compat=newest}
\usepgfplotslibrary{groupplots}
\pgfplotsset{compat=1.18}
\usepgfplotslibrary{statistics}
\usepackage[noabbrev,capitalize]{cleveref}

\newtheorem{theorem}{Theorem}
\crefname{theorem}{theorem}{Theorems}
\Crefname{Theorem}{Theorem}{Theorems}

\newtheorem{lemma}{Lemma}
\crefname{lemma}{lemma}{lemmas}
\Crefname{Lemma}{Lemma}{Lemmas}

\crefname{corollary}{corollary}{corollaries}
\Crefname{Corollary}{Corollary}{Corollaries}

\newtheorem{proposition}{Proposition}
\crefname{proposition}{proposition}{propositions}
\Crefname{Proposition}{Proposition}{Propositions}

\crefname{remark}{remark}{remarks}
\Crefname{Remark}{Remark}{Remarks}

\newtheorem{example}[theorem]{Example}
\crefname{example}{example}{examples}
\Crefname{Example}{Example}{Examples}

\crefname{figure}{figure}{figures}
\Crefname{Figure}{Figure}{Figures}

\crefname{algorithm}{algorithm}{algorithms}
\Crefname{Algorithm}{Algorithm}{Algorithms}

\newtheorem{assumption}{\textbf{H}\hspace{-3pt}}
\Crefname{assumption}{\textbf{H}\hspace{-3pt}}{\textbf{H}\hspace{-3pt}}
\crefname{assumption}{\textbf{H}}{\textbf{H}}

\Crefname{assumptionAO}{\textbf{AO}\hspace{-3pt}}{\textbf{AO}\hspace{-3pt}}
\crefname{assumptionAO}{\textbf{AO}}{\textbf{AO}}

\Crefname{assumptionL}{\textbf{L}\hspace{-3pt}}{\textbf{L}\hspace{-3pt}}
\crefname{assumptionL}{\textbf{L}}{\textbf{L}}

\crefname{definition}{definition}{definitions}
\Crefname{Definition}{Definition}{Definitions}

\setlist[enumerate]{leftmargin=*}

\def\msa{\mathsf{A}}

\def\msd{\mathsf{D}}

\def\mss{\mathsf{S}}

\def\mso{\mathsf{O}}

\def\msm{\mathsf{M}}
\def\msu{\mathsf{U}}

\def\msx{\mathsf{X}}
\def\msy{\mathsf{Y}}

\def\mcb{\mathcal{B}}  

\def\mcy{\mathcal{Y}}
\def\mcx{\mathcal{X}}

\def\mcu{\mathcal{U}}

\def\mcr{\mathcal{R}}

\def\rset{\mathbb{R}}

\def\rmd{\mathrm{d}}

\newcommand{\R}{\mathbb R}

\def\couplingPi{\pi_{0,1}}

\newcommandx{\functionspace}[2][1=+]{\mathbb{F}_{#1}(#2)}
\newcommand{\argmax}{\operatorname*{arg\,max}}

\newcommandx{\VarDeux}[3][3=]{\operatorname{Var}^{#3}_{#1}\left\{#2 \right\}}

\newcommand{\1}{\mathbbm{1}}

\newcommand{\LeftEqNo}{\let\veqno\@@leqno}

\newcommand{\N}{\ensuremath{\mathbb{N}}}

\newcommand{\PE}{\mathbb{E}}
\newcommand{\PP}{\mathbb{P}}

\newcommandx{\Vnorm}[2][1=V]{\| #2 \|_{#1}}
\newcommandx{\VnormEq}[2][1=V]{\left\| #2 \right\|_{#1}}
\newcommandx{\norm}[2][1=]{\ifthenelse{\equal{#1}{}}{\left\Vert #2 \right\Vert}{\left\Vert #2 \right\Vert^{#1}}}
\newcommandx{\normLigne}[2][1=]{\ifthenelse{\equal{#1}{}}{\Vert #2 \Vert}{\Vert #2\Vert^{#1}}}

\newcommand{\parentheseDeux}[1]{\left[ #1 \right]}

\newcommand{\ps}[2]{\left\langle#1,#2 \right\rangle}

\newcommandx\probaMarkovTilde[2][2=]
{\ifthenelse{\equal{#2}{}}{{\widetilde{\mathbb{P}}_{#1}}}{\widetilde{\mathbb{P}}_{#1}\left[ #2\right]}}

\def\ie{\textit{i.e.}}

\def\eqsp{\;}
\newcommand{\coint}[1]{\left[#1\right)}

\newcommand{\ooint}[1]{\left(#1\right)}
\newcommand{\ccint}[1]{\left[#1\right]}

\def\metric{\mathbf{g}}
\def\cut{\mathrm{Cut}}
\def\ID{\mathrm{ID}}

\newcommandx{\weight}[2][2=n]{\omega_{#1,#2}^N}

\def\dist{\operatorname{dist}}

\newcommandx\sequence[3][2=,3=]
{\ifthenelse{\equal{#3}{}}{\ensuremath{\{ #1_{#2}\}}}{\ensuremath{\{ #1_{#2}, \eqsp #2 \in #3 \}}}}
\newcommandx\process[3][2=,3=]
{\ifthenelse{\equal{#3}{}}{\ensuremath{( #1_{#2})}}{\ensuremath{( #1_{#2})_{ #2 \in #3}}}}
\newcommandx\processt[2][2=]
{\ifthenelse{\equal{#2}{}}{\ensuremath{( #1_{t})}}{\ensuremath{( #1_{t})_{ t \in #2}}}}
\newcommandx\sequenceD[3][2=,3=]
{\ifthenelse{\equal{#3}{}}{\ensuremath{\{ #1_{#2}\}}}{\ensuremath{( #1)_{ #2 \in #3} }}}

\newcommandx{\sequencen}[2][2=n\in\N]{\ensuremath{\{ #1_n, \eqsp #2 \}}}
\newcommandx\sequenceDouble[4][3=,4=]
{\ifthenelse{\equal{#3}{}}{\ensuremath{\{ (#1_{#3},#2_{#3}) \}}}{\ensuremath{\{  (#1_{#3},#2_{#3}), \eqsp #3 \in #4 \}}}}
\newcommandx{\sequencenDouble}[3][3=n\in\N]{\ensuremath{\{ (#1_{n},#2_{n}), \eqsp #3 \}}}

\def\eg{e.g.}

\newcommand{\opnorm}[1]{{\left\vert\kern-0.25ex\left\vert\kern-0.25ex\left\vert #1
    \right\vert\kern-0.25ex\right\vert\kern-0.25ex\right\vert}}

\def\bfe{\mathbf{e}}

\def\bfv{\mathbf{v}}
\def\bfu{\mathbf{u}}

\def\Id{\operatorname{Id}}

\newcommandx{\CPE}[3][1=]{{\mathbb E}_{#1}\left[\left. #2 \, \middle \vert \, #3 \right. \right]} 

\newcommandx{\CPELigne}[3][1=]{{\mathbb E}_{#1}[\left. #2 \,  \vert \, #3 \right. ]} 

\newcommandx{\CPVar}[3][1=]{\mathrm{Var}^{#3}_{#1}\left\{ #2 \right\}}
\newcommand{\CPP}[3][]
{\ifthenelse{\equal{#1}{}}{{\mathbb P}\left(\left. #2 \, \right| #3 \right)}{{\mathbb P}_{#1}\left(\left. #2 \, \right | #3 \right)}}

\def\scrA{\mathscr{A}}

\newcommandx{\osc}[2][1=]{\mathrm{osc}_{#1}(#2)}

\def\Id{\operatorname{Id}}

\def\sphere{\mss}

\def\Jac{\operatorname{Jac}}

\newcommand\coupling[2]{\Gamma(\mu,\nu)}

\renewcommand{\geq}{\geqslant}
\renewcommand{\leq}{\leqslant}

\newcommandx{\wasserstein}[3][1=\distance,3=]{\mathscr{W}_{#1}^{#3}\left(#2\right)}
\newcommandx{\wassersteinLigne}[3][1=\distance,3=]{\mathscr{W}_{#1}^{#3}(#2)}
\newcommandx{\wassersteinD}[1][1=\distance]{\mathscr{W}_{#1}}
\newcommandx{\wassersteinDLigne}[1][1=\distance]{\mathscr{W}_{#1}}

\newcommand{\txts}{\textstyle}

\def\rmM{\mathrm{M}}

\def\pre_Stif{\mathcal{V}}

\def\bfo{\mathbf{o}}
\def\Vlyap{\mathfrak{V}}

\newcommandx{\Voi}[1][1=i]{\Vlyap_{\bfo,#1}}
\newcommandx{\Vlyapc}[2][1=\bfo,2=i]{\Vlyap_{#1,#2}}
\newcommandx{\tVlyapc}[2][1=\bfo,2=i]{\tilde{\Vlyap}_{#1,#2}}

\def\Wlyap{\mathfrak{W}}

\newcommandx{\Woi}[1][1=i]{\Wlyap_{\bfo,#1}}
\newcommandx{\Wlyapc}[2][1=\bfo,2=i]{\Wlyap_{#1,#2}}

\newcommand{\npi}{q_1}

\newcommand{\mcut}{\msm\setminus\cut(x_1)}
\newcommand{\grass}{\mathrm{Gr}(n,p)}
\newcommand{\stiefel}{\mathrm{St}(n,p)}
\def\Rvol{\rho_{\metric}}
\newcommand{\sinc}{\mathrm{sinc}}

\newcommand{\Exp}{\mathrm{Exp}}
\newcommand{\Log}{\mathrm{Log}}

\newcommand{\Input}{\item[\textbf{Input:}]}
\newcommand{\Output}{\item[\textbf{Output:}]}

\newcommand{\frips}{FRIPS}

\newcommand{\Smsm}[1]{S_{#1}\msm}

\newcommand{\Dmsm}[1]{D_{#1}\msm}

\def\cuttime{\mathbf{c}}

\def\grad{\mathrm{grad}}
\def\curv{\mathbf{R}}
\def\detA{\scrA_{x_1,\xi}}

\usepackage{autonum}

\begin{document}

\begin{frontmatter}

\title{Sampling from multi-modal distributions on Riemannian manifolds with training-free stochastic interpolants}


\begin{aug}

\author[A]{\fnms{Alain}~\snm{Durmus}},
\author[A]{\fnms{Maxence}~\snm{Noble}}
\and
\author[A,B]{\fnms{Thibaut}~\snm{Pellerin\thanks{Alphabetical order. Corresponding author: \url{thibaut.pellerin@cea.fr}}}}


\address[A]{CMAP, CNRS, Ecole polytechnique, 91120 Palaiseau, France }

\address[B]{Université Paris-Saclay, CEA, List, F-91120 Palaiseau, France }

\end{aug}

\begin{abstract}
In this paper, we propose a general methodology for sampling from un-normalized densities defined on Riemannian manifolds, with a particular focus on multi-modal targets that remain challenging for existing sampling methods. Inspired by the framework of diffusion models developed for generative modeling, we introduce a sampling algorithm based on the simulation of a non-equilibrium deterministic dynamics that transports an easy-to-sample noise distribution toward the target. At the marginal level, the induced density path follows a prescribed stochastic interpolant between the noise and target distributions, specifically constructed to respect the underlying Riemannian geometry. In contrast to related generative modeling approaches that rely on machine learning, our method is entirely training-free. It instead builds on iterative posterior sampling procedures using only standard Monte Carlo techniques, thereby extending recent diffusion-based sampling methodologies beyond the Euclidean setting. We complement our approach with a rigorous theoretical analysis and demonstrate its effectiveness on a range of multi-modal sampling problems, including high-dimensional and heavy-tailed examples.
\end{abstract}

\end{frontmatter}

\section{Introduction}
In this paper, we study the problem of sampling from a \emph{target} probability distribution supported on a state space $\msm$, assuming that its density is available up to a normalization constant. This problem can be viewed as the counterpart of generative modeling: while the objective in both cases is to generate samples from the target, generative modeling typically assumes access to a finite dataset of available samples.

While many classical examples focus on discrete state spaces or Euclidean domains, sampling problems defined on Riemannian manifolds arise frequently. A canonical example is when $\msm$ is the hypersphere of $\rset^{d+1}$ for some dimension $d$, denoted by $\sphere^d$, which appears naturally in applications such as imaging \cite{marignier_posterior_2023} or constrained probabilistic models \cite{lan_spherical_2013}. Somewhat counterintuitively, sampling on $\sphere^d$ may even be preferable to sampling directly on $\mathbb{R}^d$: for instance, when the target has heavy tails or a large effective support, mapping the distribution to the sphere via stereographic projection can provide substantial practical benefits \cite{yang_stereographic_2024}. Further examples include Bayesian inference problems in which parameters inherently lie on non-Euclidean manifolds \cite{graham_manifold_2022, pourzanjani_bayesian_2021}, as well as settings where $\msm$ is defined through constraints that endow it with a Riemannian structure, such as polytopes \cite{noble_unbiased_2023}.

In Euclidean spaces, Markov chain Monte Carlo (MCMC) methods constitute the dominant class of sampling algorithms, and many of them admit principled extensions to Riemannian manifolds; see, for example, \cite{cheng_theory_2023, durmus_geodesic_2024}. Under suitable assumptions on the target distribution, most notably log-concavity, strong theoretical guarantees can be established, including non-asymptotic convergence rates \cite{durmus_non-asymptotic_2016,dwivedi_log-concave_2019}. Nonetheless, the practical performance of these methods often deteriorates when the target is complex. A well-known difficulty arises for multi-modal targets, whose probability mass concentrates around several isolated regions, commonly referred to as ``modes''. This issue is further exacerbated in high-dimensional settings and affects both Euclidean and Riemannian sampling schemes.

In contrast, the challenge of multi-modality has been largely mitigated in generative modeling through the widespread success of score-based diffusion models \cite{song2020score}. These models simulate a non-equilibrium \emph{stochastic} dynamics that gradually transforms a simple noise distribution, \eg, Gaussian, into the data distribution, thereby naturally accommodating highly multi-modal structures. The evolution of this dynamics is designed to follow, at the marginal level, a path of intermediate distributions obtained by applying progressively decreasing Gaussian convolutions to the target. A central ingredient of these models is the score function, namely the gradient of the logarithm of the intermediate densities, which is generally intractable and therefore learned from data using machine learning techniques. Closely related to diffusion models, flow matching methods \cite{lipman_flow_2023} propose an alternative formulation based on \emph{deterministic} dynamics that similarly transport the noise distribution to the target. Empirically, such deterministic flows are often less sensitive to time discretization errors than their stochastic counterparts. Both diffusion and flow matching models rely on score-based quantities and can in fact be unified within the framework of stochastic interpolants \cite{albergo_stochastic_2023}, which provides a principled way to navigate between stochastic and deterministic dynamics that share the same prescribed marginal path. In the paper, we adopt this unifying perspective and terminology.

The remarkable empirical performance of diffusion models in Euclidean generative modeling has recently motivated their adaptation to sampling problems; see, for instance, \cite{he_zeroth-order_2024, huang_faster_2024,grenioux_stochastic_2024}. In this context, the absence of data samples makes score estimation the central challenge. Existing approaches typically adapt classical MCMC techniques to estimate these scores on the fly, \ie, during the simulation of the dynamics, yielding encouraging practical results. In parallel, diffusion-based models on Riemannian manifolds have been the subject of early and extensive investigation \cite{bortoli_riemannian_2022, huang_riemannian_2022}. More recently, the most efficient approaches have relied on flow matching, notably the work of \cite{chen_flow_2024}, which leverages tractable geodesics to construct geometry-aware dynamics while avoiding the direct simulation of manifold-valued diffusion processes. A natural question is whether such method can be adapted to the sampling setting. Despite its appeal, this question has not been addressed so far, and closing this gap is the main objective of the present work. Our main contributions are summarized as follows:
\begin{itemize}
    \item Inspired by \cite{chen_flow_2024}, we develop a rigorous stochastic interpolant framework on a broad class of Riemannian manifolds to construct non-equilibrium dynamics transporting a noise distribution to the target. Tailored to the sampling setting, we establish theoretical results that justify the approach and provide explicit expressions for its core components.
    \item Building on ideas from diffusion-based sampling \cite{grenioux_stochastic_2024}, we introduce a practical methodology to simulate the resulting deterministic Riemannian dynamics. The approach relies on estimating the vector field of the dynamics on the fly, expressed as conditional expectations, using standard MCMC tools. These tools are employed in a setting where they are significantly more effective than when directly used to sample from the target. We refer to the resulting method as \textbf{Flow-based Riemannian Iterative Posterior Sampling} (\frips{}). We further detail the FRIPS algorithm for two representative manifolds, namely $\R^d$ and the $d$-sphere $\sphere^d$, for which we derive efficient simplifications.
    \item Finally, we demonstrate the effectiveness of FRIPS on a range of controlled Riemannian sampling problems, where pronounced multi-modality and high dimensionality present substantial challenges. Our results show that FRIPS consistently outperforms existing Riemannian sampling methods under comparable computational budgets, highlighting its potential for real-world applications.
\end{itemize}

\paragraph*{Notation} We introduce notation used throughout the paper. Let $(\msx,\mcx)$ and $(\msy,\mcy)$ be measurable spaces, let $\mu$ be a measure on $(\msx,\mcx)$, and let $f\colon \msx\to\msy$ be a measurable map. The symbol $\sharp$ denotes the standard \textit{pushforward} operation, defined by
\begin{equation}
A\in\mcy \mapsto f_{\sharp}\mu(A) = \mu\left(f^{-1}(A)\right)\eqsp.
\end{equation}
If a random variable $X$ taking values in $(\msx,\mcx)$ has distribution $\nu$, where $\nu$ is a probability measure on $(\msx,\mcx)$, we write $X\sim\nu$. For $d\geqslant 1$, $\mathcal{N}(0,1)$ denotes the standard normal distribution on $\R^d$, while $\mcu(0,1)$ denotes the uniform distribution on $\ccint{0,1}$.

\section{Riemmanian stochastic interpolant and its Markov projection}
\label{sec:theory}

In this section,  we provide assumptions on the state space
where the measure we wish to sample from is defined. We also introduce the theoretical foundations underlying our sampler and in particular the Riemmanian Markovian projection. More precisely, we begin with a brief overview of Riemannian geometry to formalize our setting. Building on this background, we then present the main concepts associated with Riemannian stochastic interpolants in the specific context of sampling from a given target distribution and its Markovian projection. Although related constructions were previously introduced by \cite{chen_flow_2024} to perform generative modeling, we enrich this framework by providing precise assumptions and theoretical results that establish the validity of the approach in our sampling context. 

\subsection{A brief detour on Riemannian geometry : background and notation}\label{sub:geometry}

In this section, we introduce the necessary concepts from
Riemannian geometry that are needed to describe the sampling problem at hand and our method to address it. Additional details are provided in
\Cref{app:geometry} and we refer to \cite{lee_introduction_2018, chavel_riemannian_2006} for
basics on this topic. We first make the following assumption on the state
space $\msm$.

\begin{assumption}\label{ass:1}
    The state space $\msm$ is a complete, connected, $d$-dimensional, smooth manifold endowed with a smooth Riemannian metric $\metric$.
\end{assumption}

We denote by $\mcb(\msm)$ the Borel $\sigma$-algebra associated to $\msm$.
Assumption~\Cref{ass:1} ensures the existence of a measure on $(\msm, \mcb(\msm))$, referred to as the \emph{Riemannian measure} of $\msm$, which we denote by $\Rvol$ (see \Cref{app:geometry} or \cite[Section II.5]{sakai_riemannian_1996}). Note that this measure reduces to the Lebesgue measure if $\msm$ is a Euclidean space endowed with the trivial metric. In this paper, we consider a target distribution denoted by $\pi_1$, with density with respect to the Riemannian measure $\Rvol$, of the form:
\begin{equation}
    \label{eq:def_target}
    \txts \pi_1(\rmd x) = p_1 (x) \Rvol(\rmd x) \eqsp, \text{ where } p_1(x) = \npi(x)/  \int_{\msm}\npi(y)\Rvol(\rmd y)\eqsp,
\end{equation}
and $\npi$ a non-negative and $\Rvol$-integrable function on $\msm$. Typically, the normalizing constant  $\int_{\msm}\npi(y)\Rvol(\rmd y)$ is unknown, and therefore we suppose here that $p_1$ is only known up to a multiplicative constant. Unless specified otherwise, all other probability densities defined in the rest of this paper are supposed to be defined with respect to $\Rvol$. \\

We denote the \emph{tangent space} at $x\in\msm$ by $T_x\msm$. It is equipped with the inner product provided by the metric $\metric_x$, denoted by $\langle v,w \rangle_{x}$ for $v,w \in T_x\msm$, with the associated norm $\Vert v\Vert_{x}^2 = \ps{v}{v}_{x}$. The \emph{tangent bundle} $T\msm$ is defined to be the disjoint union of those tangent spaces.
Under \Cref{ass:1}, following e.g.,  \cite[Theorems 4.27, 6.19]{lee_introduction_2018}, for any $(x,v) \in T\msm$, there exists a path $\upgamma_{x,v} : \R \to \msm$, with initial value $\upgamma_{x,v}(0) = x$ and velocity $\dot{\upgamma}_{x,v}(0) = v$, solution of the geodesic equation associated to the metric $\metric$: $\nabla_{\dot{\upgamma}} \dot{\upgamma} = 0$, where $\nabla$ is the \emph{Levi-Civita connection} associated to $\metric$. In charts, this corresponds to a second order Ordinary Differential Equation (ODE). Any such path is called a \emph{geodesic}.
For any $(x,y)\in\msm \times \msm$, there exists a \emph{minimizing geodesic} between $x$ and $y$, \ie, a geodesic $\upgamma_x^y : \R \rightarrow \msm$ such that minimizes the functional $\upgamma \mapsto \int_0^1 \norm{\dot{\upgamma}(t)}_{\upgamma(t)} \rmd t$ under the endpoint constraints $\upgamma_x^y(0)=x,\upgamma_x^y(1)=y$, see \cite[Corollary 6.21]{lee_introduction_2018}. The minimum of this functional defines the \emph{geodesic distance} between $x$ and $y$, denoted by $\dist(x,y)$.

For any $x\in\msm$, the \emph{cut locus} at $x$, denoted by $\cut(x)$, corresponds to the set of points $y$ such that there exists more than one minimizing geodesic mapping $x$ to $y$; see \cite[Chapter 10]{lee_introduction_2018} or \cite[Chapter II]{chavel_riemannian_2006}. Intuitively, it corresponds to the set of points for which the shortest path to $x$ is ambiguous. We denote $\msd =  \{(x,y)\in \msm\times\msm \,: \, y \not \in \cut(x) \}$. Then, for every $(x,y)\in\msd$, the minimizing geodesic $\upgamma_x^y$ is unique \cite[Proposition 10.32]{lee_introduction_2018}.

Given a differentiable function $f : \msm \rightarrow \R$, we denote its differential at $x\in\msm$ as $\rmd f_x$. The \emph{Riemannian gradient} $\grad f(x) \in T_x\msm$ is the only tangent vector at $x$ satisfying $\rmd f_x = \langle \grad f(x),\,\cdot\,\rangle_x $.

Any function $u:\msm \to T\msm$ is called a \emph{vector field}. If not specified otherwise, any vector field will be assumed smooth. 
We write $\mathrm{div}(\bfu)$ for the \emph{Riemannian divergence} of a continuously differentiable vector field $\bfu$, which is usually defined using local coordinates: see \Cref{app:geometry}.

Geodesic curves induce the \emph{exponential map} on $\msm$, defined for  $x\in \msm$ and $v \in T_x \msm$ as $\Exp_x(v) = \upgamma_{x,v}(1)$, where $\upgamma_{x,v}$ is the geodesic with initial value $\upgamma_{x,v}(0) = x$ and velocity $\dot{\upgamma}_{x,v}(0) = v$. Given $x \in\msm$, the map $x\mapsto \Exp_x( v)$  is a diffeomorphism once restricted to the \emph{injectivity domain} $\ID(x)\subset T_x\msm$ onto $\msm\setminus \cut(x)$, where
\begin{equation}
  \label{eq:def_id_x}
  \ID(x)=\{v\in T_x\msm\,:\,\Vert v\Vert_{x}< \cuttime_x(v/\Vert v\Vert_{x})\}\eqsp,
\end{equation}
with $\cuttime_x$ denoting the \emph{cut time} function at $x$; see \cite[Proposition 10.32, Theorem 10.34]{lee_introduction_2018}. For any $x\in\msm, v\in T_x\msm$, it is defined as
\begin{equation}
  \label{eq:def_cuttime}
  \cuttime_x(v) = \sup\{t>0\,:\,{\upgamma_{x,v}}_{|\ccint{0,t}}~\text{is minimizing}\}\eqsp.
\end{equation}
 This restriction allows us to define its inverse, referred to as the \emph{logarithm map} $y\in\msm\setminus\cut(x)\mapsto \Log_x(y)$.
As by definition of minimizing geodesics, the following equalities hold: $\dist(x,y) = \Vert \Log_xy\Vert_{x}$ for any $y \in \msm \setminus\cut(x)$ and $\dist(x,\Exp_x v) = \Vert v\Vert_{x}$ for any $v\in\ID(x)$. Furthermore, we have $\dot{\upgamma}_x^y(0) = \Log_xy$. 

\begin{example}[The $d$-sphere example.]\label{ex:sphere} We briefly illustrate the Riemmanian notions introduced above in the case where $\msm$ is the $d$-sphere $\sphere^d=\{x \in\rset^{d+1}\,:\, \norm{x} = 1\}$, with its metric $\metric$ induced by the flat metric of $\R^{d+1}$. We refer to \cite{lee_introduction_2018} for additional details. Here, for every point $x\in\sphere^d$, the tangent space $T_x\sphere^d$ corresponds to the collection of vectors $v\in\R^{d+1}$ such that $\ps{x}{v} = 0$, with $\ps{x}{v}$ being the usual dot product between $x$ and $y$ in $\R^{d+1}$, and the geodesics are great circles. In particular, we can derive the exponential and logarithmic maps as follows: for $x \in\msm$, $v \in T_x\msm$ and $y \in \msm \setminus \cut(x)$,
  \begin{align}
    \label{eq:exp_log_sphere}
 \Exp_xv = x\cos(\Vert v\Vert) + (v/\norm{v})\sin(\Vert v\Vert) \eqsp,
    \quad  \Log_x y=(\widehat{xy}/\sin(\widehat{xy})){y-\cos(\widehat{xy})x}\eqsp,
\end{align}
where $\widehat{xy} = \arccos\ps{x}{y}$ denotes the angle between two points $x$ and $y$. Note that the cut locus of any point $x$ only contains the antipodal point $-x$; hence,
we have $\msd = \{(x,y)\in\sphere^d\times\sphere^d\,:\,\widehat{xy}<\pi\}$. As great circles with length greater than $\pi$ necessarily connect two antipodal points, the cut time at any point $x$, in any direction $v$, is equal to $\pi$, and therefore, we have $\ID(x) = \{v\in T_x\sphere^d\,:\,\Vert v\Vert < \pi\}$ (\ie{}, the sphere of radius $\pi$ in $T_x\sphere^d$).
\end{example}

\paragraph*{Outline of the next subsections} In the rest of the section, we revisit in a rigorous manner the conceptual framework of so-called ``Riemannian stochastic interpolants'' introduced by \cite{chen_flow_2024}, within the specific context of sampling. 

In \Cref{subsec:interpolant}, we first construct an explicit path of un-normalized densities bridging an easy-to-sample \emph{noise} distribution, which we will denote throughout the paper by $\pi_0$, to the target distribution $\pi_1$, and tailored to the Riemannian geometry introduced above. 

In \Cref{subsec:markovian}, we then derive marginal-preserving deterministic dynamics, which critically rely on {denoising posterior distributions} analyzed in \Cref{subsec:posterior}.

\subsection{Riemannian stochastic interpolation}
\label{subsec:interpolant} Following the stochastic interpolant formulation originally formulated by \cite{albergo_stochastic_2023}, we aim to build an {interpolation process}, denoted by
$\processt{X}[\ccint{0,1}]$, such that $X_0$ and $X_1$ are respectively distributed according to $\pi_0$ and $\pi_1$. 
To do so, we align closely with \cite{chen_flow_2024} and specifically define $\processt{X}[\ccint{0,1}]$ by leveraging a coupling between $\pi_0$ and
$\pi_1$ (\eg, independent coupling), and geodesic interpolations on $\msm$.

\paragraph*{Initial boundary coupling} In our framework, $(X_0,X_1)$ is built as a couple of random variables with distribution $\couplingPi$, defined on $\msm^2$ as a coupling between $\pi_1$ and $\pi_1$, \ie, $X_0$ (resp. $X_1$) is marginally distributed according to $\pi_0$ (resp. $\pi_1$). Further, we make the following assumption on $\couplingPi$.
\begin{assumption}\label{ass:abs}
    The coupling $\couplingPi$ between  $\pi_0$ and  $\pi_1$ admits a density $p_{0,1}$ with respect to $\Rvol\otimes\Rvol$.
  \end{assumption}
Note that \Cref{ass:abs} holds, for instance, when both $\pi_0$ and $\pi_1$ are absolutely continuous with respect to $\Rvol$ and the coupling is simply $\couplingPi = \pi_0\otimes \pi_1$ (independent coupling), as done in most of flow matching applications. 

Under \Cref{ass:abs}, the density $p_1$ \eqref{eq:def_target} is a marginal density of $p_{0,1}$: for almost all $x_1$, we have $p_1(x_1) = \int_{\msm} p_{0,1}(x_0,x_1) \Rvol(\rmd x_0)$. Similarly, we denote by $p_0$ the first marginal density defined for $\Rvol$ almost all $x_0$ by  $p_0(x_0) = \int_{\msm} p_{0,1}(x_0,x_1) \Rvol(\rmd x_1)$ for any $x_0 \in\msm$ and . We also denote by $p_{0|1}$ the conditional density of $X_0$ given $X_1$, and defined for $x_0,x_1\in\msm^2$ by $p_{0|1}(x_0|x_1) = p_{0,1}(x_0,x_1)/p_{1}(x_1)$ if $p_1(x_1) \neq 0$, $p_{0|1}(x_0|x_1) = p_0(x_0)$ otherwise.

\paragraph*{Geodesic interpolation} Given $(X_0,X_1)\sim \couplingPi$, a natural candidate for the interpolation process $(X_t)_{t\in [0,1]}$ is given by a {deterministic interpolant} that exploits the minimizing geodesic between $X_0$ and $X_1$. To formalize this idea, we first introduce for any $x_1\in \msm$ and any $t \in \ccint{0,1}$, an interpolation map $\psi_t(\cdot;x_1)$ defined by
\begin{equation}\label{eq:psi_def}
 \psi_t(\cdot;x_1) :    x_0 \in \msm\setminus\mathrm{Cut}(x_1) \mapsto \upgamma_{x_0}^{x_1}(t)=\Exp_{x_1}((1-t)\Log_{x_1}x_0)\eqsp,
\end{equation}
and detail its properties in \Cref{prop:diffeo}.
We denote the image of $y\mapsto \psi_t(y;x_1)$, for $x_1 \in\msm$ by
\begin{equation}
    \label{eq:def_o_t}
    \mso_t(x_1)  = \{ \psi_t(y;x_1) \,: \, y \in \msm \setminus \cut(x_1)\} \eqsp.
\end{equation}
\vfill
\begin{proposition}\label{prop:diffeo}
Assume \Cref{ass:1} and \Cref{ass:abs}. Then, the following results hold.
\begin{enumerate}[label = (\roman*),wide, labelindent=0pt]
    \item
\label{prop_smooth_psi_t_i} For $\Rvol\otimes \Rvol$-almost any $(x_0,x_1)\in \msm^2$, the map $t\in \ccint{0,1}\mapsto  \psi_t(x_0\,;x_1)$ is well-defined and smooth.
\item
\label{prop_smooth_psi_t_ii}
For any $x_1 \in \msm$ and $t\in \coint{0,1}$,
the map $\psi_t(\cdot;x_1) $ is a smooth diffeomorphism from $\mcut$ into its  image $\mso_t(x_1)$, whose inverse writes for any $x_t \in \mso_t(x_1)$,
    \begin{equation}\label{eq:inv_psi}
        \psi_t^{-1}(x_t;x_1) = \Exp_{x_1}\left(\frac{\Log_{x_1}x_t}{1-t}\right)\eqsp.
      \end{equation}
\item
\label{prop_smooth_psi_t_iii}
The set $\mso_t(x_1)$ is an open set of $\msm$, and $x_t\in\mso_t(x_1)$ if and only if
\begin{equation}
\dist(x_1,x_t) < (1-t)\times \cuttime_{x_1}\left(\frac{\Log_{x_1}x_t}{\dist(x_1,x_t)}\right)\,,
\end{equation}
where $\cuttime_{x_1}$ is the cut time at $x_1$, defined in \eqref{eq:def_cuttime}.
\end{enumerate}
\end{proposition}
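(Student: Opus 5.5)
The plan is to reduce everything to the structure of $\Exp_{x_1}$ restricted to the injectivity domain $\ID(x_1)$, which is a diffeomorphism onto $\msm\setminus\cut(x_1)$ with inverse $\Log_{x_1}$. Write $\ID(x_1)$ as a star-shaped open set in $T_{x_1}\msm$: by \eqref{eq:def_id_x}, $v\in\ID(x_1)$ iff $\Vert v\Vert_{x_1}<\cuttime_{x_1}(v/\Vert v\Vert_{x_1})$, and since $\cuttime_{x_1}$ is continuous on the unit sphere (with values in $(0,\infty]$), $\ID(x_1)$ is open. The key elementary observation is that scaling by $s\in(0,1]$ maps $\ID(x_1)$ into itself, so $(1-t)\ID(x_1)\subset\ID(x_1)$ for $t\in\coint{0,1}$, and multiplication by $(1-t)$ is a linear diffeomorphism of $T_{x_1}\msm$ onto itself.

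For \ref{prop_smooth_psi_t_i}, I would first show that for every $x_1$, $\Rvol(\cut(x_1))=0$ (a standard fact, e.g.\ \cite[Chapter 10]{lee_introduction_2018}, since $\cut(x_1)=\Exp_{x_1}(\partial\ID(x_1))$ and the boundary is the graph of a continuous function over the unit sphere, hence Lebesgue-null, and $\Exp_{x_1}$ is smooth hence maps null sets to null sets). Then by Fubini/Tonelli the set $\{(x_0,x_1): x_0\in\cut(x_1)\}$, which is closed in $\msm^2$ (complement of the open set $\msd$) and hence measurable, is $\Rvol\otimes\Rvol$-null. For $(x_0,x_1)\in\msd$, $\Log_{x_1}x_0$ is defined and $t\mapsto\Exp_{x_1}((1-t)\Log_{x_1}x_0)$ is smooth as a composition of the smooth map $\Exp_{x_1}$ with an affine curve in $T_{x_1}\msm$; it coincides with the unique minimizing geodesic reversed, which matches \eqref{eq:psi_def}. (\Cref{ass:abs} is only needed to speak of almost-every pairs under the coupling.)

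For \ref{prop_smooth_psi_t_ii}, decompose $\psi_t(\cdot;x_1)=\Exp_{x_1}\circ m_{1-t}\circ\Log_{x_1}$ where $m_s(v)=sv$. Here $\Log_{x_1}:\mcut\to\ID(x_1)$ is a diffeomorphism, $m_{1-t}$ is a diffeomorphism of $\ID(x_1)$ onto the open set $(1-t)\ID(x_1)\subset\ID(x_1)$, and $\Exp_{x_1}$ restricted to $\ID(x_1)$ is a diffeomorphism onto its image, so its restriction to the open subset $(1-t)\ID(x_1)$ is a diffeomorphism onto $\mso_t(x_1)=\Exp_{x_1}((1-t)\ID(x_1))$. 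Inverting the composition gives $\Exp_{x_1}\circ m_{1/(1-t)}\circ\Log_{x_1}$, i.e.\ \eqref{eq:inv_psi}, where $\Log_{x_1}x_t$ is legitimate since $x_t\in\mso_t(x_1)\subset\msm\setminus\cut(x_1)$.

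For \ref{prop_smooth_psi_t_iii}, openness follows since $\mso_t(x_1)$ is the diffeomorphic image of an open set under $\Exp_{x_1}|_{\ID(x_1)}$, which is an open map onto the open set $\mcut$. For the characterization: $x_t\in\mso_t(x_1)$ iff $x_t\notin\cut(x_1)$ and $\Log_{x_1}x_t\in(1-t)\ID(x_1)$, i.e.\ $\Vert\Log_{x_1}x_t\Vert/(1-t)<\cuttime_{x_1}(\Log_{x_1}x_t/\Vert\Log_{x_1}x_t\Vert)$, and $\Vert\Log_{x_1}x_t\Vert=\dist(x_1,x_t)$. Conversely, if the inequality holds (interpreting it with $x_t\notin\cut(x_1)$ implicit, or noting that the inequality forces $\dist<\cuttime$ for the minimizing direction so $x_t$ is not a cut point), we recover membership. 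The case $x_t=x_1$ should be handled separately (it trivially belongs, image of $x_0=x_1$). The main obstacle I expect is the continuity of $\cuttime_{x_1}$ (needed for openness of $\ID(x_1)$ and nullness of $\cut(x_1)$); I would simply cite \cite[Theorem 10.34]{lee_introduction_2018} for it rather than reprove it, and be careful with the interpretation of the inequality when $x_t\in\cut(x_1)$.
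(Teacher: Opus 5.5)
Your proposal is correct and follows essentially the same route as the paper: $\Rvol(\cut(x_1))=0$ plus Fubini for (i), the factorization $\psi_t(\cdot;x_1)=\Exp_{x_1}\circ m_{1-t}\circ\Log_{x_1}$ through $\ID(x_1)$ for (ii), and openness of diffeomorphic images together with the equivalence $x_t\in\mso_t(x_1)\iff\Log_{x_1}x_t/(1-t)\in\ID(x_1)$ for (iii). Your extra care about the point $x_t=x_1$, where the direction $\Log_{x_1}x_t/\dist(x_1,x_t)$ is undefined, and about cut points of $x_1$, where the inequality can never hold, fills small gaps that the paper's proof passes over silently.
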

\begin{figure}
    \centering
    \begin{tikzpicture}

\def\R{1.5}
\def\Rinner{1.0} 
\def\dotSize{2pt}

\begin{scope}[shift={(0,0)}]

\coordinate (x1) at ({cos(60)*\R},{sin(60)*\R});
\coordinate (xt) at ({cos(-30)*\R},{sin(-30)*\R});
\coordinate (x0) at ({cos(-75)*\R},{sin(-75)*\R});

\draw[thick] (0,0) circle (\R);

\draw[red, line width=2pt] ({cos(60-120)*\R},{sin(60-120)*\R}) arc[start angle=-60, end angle=180, radius=\R];

\fill (x1) circle (\dotSize) node[above] {$x_1$};
\fill (xt) circle (\dotSize) node[right] {$x_{1/3}$};
\fill (x0) circle (\dotSize) node[above] {$x_0$};

\draw[blue, ->, thick] ({cos(-75)*\Rinner},{sin(-75)*\Rinner}) arc[start angle=-75, end angle=60, radius=\Rinner];

\end{scope}

\begin{scope}[shift={(6,0)}]

\coordinate (x1) at ({cos(60)*\R},{sin(60)*\R});
\coordinate (xt) at ({cos(-30)*\R},{sin(-30)*\R});
\coordinate (x0) at ({cos(-210)*\R},{sin(-210)*\R});
\coordinate (xt') at ({cos(90)*\R},{sin(90)*\R});

\draw[thick] (0,0) circle (\R);

\draw[red, line width=2pt] ({cos(60-60)*\R},{sin(60-60)*\R}) arc[start angle=0, end angle=120, radius=\R];

\fill (x1) circle (\dotSize) node[above] {$x_1$};
\fill (xt) circle (\dotSize) node[right] {$x_{2/3}$};
\fill (xt') circle (\dotSize) node[above] {$\psi_{2/3}(x_0;x_1)$};
\fill (x0) circle (\dotSize) node[left] {$x_0$};

\draw[blue, ->, thick] ({cos(-210)*\Rinner},{sin(-210)*\Rinner}) arc[start angle=-210, end angle=60, radius=\Rinner];

\end{scope}

\end{tikzpicture}
    \caption{Illustration of the invertibility property of the map $x_t\mapsto\psi_t^{-1}(x_t|x_1)$ in the case $\msm=\sphere^1$. Given $x_1\in\msm$ and $t\in[0,1)$, the image set $\mso_t(x_1)=\{x_t\in\sphere^1\,:\, \widehat{x_tx_1}<(1-t)\pi\}$ is shown in \textcolor{red}{red}, while the geodesic mapping $x_0$ to $x_1$ and passing through $x_t$ at time $t$ is depicted in \textcolor{blue}{blue}. \textbf{(Left)} If $t=1/3$ and $x_{1/3}$ lies in the image, then $x_0=\psi_{1/3}^{-1}(x_{1/3};x_1)$ satisfies $x_{1/3}=\psi_{1/3}(x_0;x_1)$. The geodesic passing through $x_{1/3}$ at time $1/3$ is therefore minimizing. \textbf{(Right)} If $t=2/3$ and $x_{2/3}$ lies outside the image, the geodesic passing through $x_{2/3}$ at time $2/3$ is not minimizing; consequently, $x_0=\psi_{2/3}^{-1}(x_{2/3};x_1)$ satisfies $x_{2/3}\neq\psi_{2/3}(x_0;x_1)$.}
    \label{fig:inverse}
\end{figure}
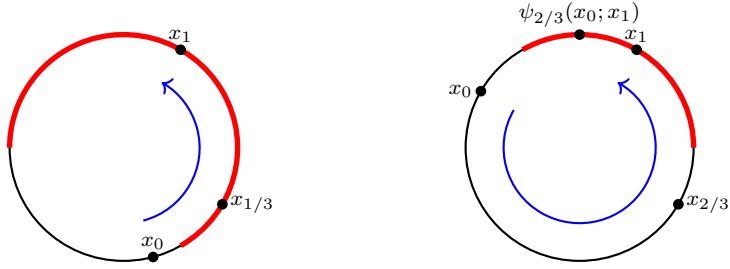

\begin{proof}
The proof of this proposition is postponed to \Cref{proof:diffeo}.
\end{proof}
\Cref{prop:diffeo} shows that for any time $t\in\coint{0,1}$, any $x_1\in\msm$, and any $x_t\in\mcut$, $\psi_t^{-1}(x_t;x_1)$ is the unique point $x_0\in\mcut$ such that the minimizing geodesic $\upgamma_{x_0}^{x_1}$ passes through $x_t$ at time $t$. Note that the expression \eqref{eq:inv_psi} is still well-defined if $x_t\notin\mso_t(x_1)$, however it is no longer the inverse of $x_0\mapsto \psi_t(x_0;x_1)$, which we illustrate in \Cref{fig:inverse} in the case of the sphere. Based on these results, we set for every $t \in \ccint{0,1}$,
\begin{equation}\label{eq:interpolation_def}
    X_t = \psi_t(X_0;X_1) \eqsp, \quad (X_0,X_1) \sim \couplingPi \eqsp.
\end{equation}
Note that \Cref{prop:diffeo} implies that $\PP(X_0\in\cut(X_1)) = \couplingPi(\msd^{\complement}) =  0$; hence the interpolation process  $\processt{X}[\ccint{0,1}]$ is almost surely well-defined. We define the time marginal distributions of $\processt{X}[\ccint{0,1}]$ by $(\pi_t)_{t\in\ccint{0,1}}$, \ie, $\pi_t(\msa) = {\psi_t}_{\sharp}\couplingPi(\msa) = \PP(\psi_t(X_0;X_1)\in\msa)$ for every $\msa\in\mcb(\msm)$, thereby obtaining a smooth distribution path bridging $\pi_0$ to $\pi_1$ on time interval $[0,1]$. For any $t\in [0,1]$, we further denote by $p_t$ the density of $\pi_t$ with respect to $\Rvol$.

\subsection{Riemannian Markovian projection}
\label{subsec:markovian}
As such, the interpolation process $(X_t)_{t\in [0,1]}$ defined in \eqref{eq:interpolation_def} cannot be used to sample from $\pi_1$, as it requires by definition to be able to sample from $\pi_1$. Following \cite{ chen_flow_2024, lipman_flow_2024}, we aim to build a Markov process $\processt{X^{\mathrm{M}}}[\ccint{0,1}]$ that shares the same time-marginal distributions as $\processt{X}[\ccint{0,1}]$, but is defined through the flow of a
time-inhomogeneous manifold-valued ODE of the form
\begin{equation}\label{eq:ode}
    \dot{x}_t  = \bfv_t(x_t) \eqsp,
\end{equation}
where $\bfv : (t,x)\in \coint{0,1}\times \msm \mapsto \bfv_t(x) \in T_x\msm$
is a continuous function, referred to as a
\emph{time-dependent vector field}. If ODE \eqref{eq:ode} admits
solutions on $\ccint{0,1}$, for any starting point $x_0\in\msm$, then the differential flow associated with this ODE is defined as the map
$\phi^{\bfv} : (t,x)\in \Omega \mapsto \phi^{\bfv}_t(x) \in \msm$,
such that for any $x \in \msm$,
$t \in \ccint{0,1} \mapsto \phi^{\bfv}_t(x)$ solves \eqref{eq:ode}
with initial condition $\phi^{\bfv}_0 \equiv \Id$, where $\Id$ denotes
the identity map. Note that in general, similarly as in the Euclidean case,
assuming that $\bfv$ is locally Lipschitz only ensures existence of
{local} solutions; see \cite[Theorem
9.48]{lee_introduction_2012}.

Importantly, existence of stochastic solutions to \eqref{eq:ode}  with arbitrarily prescribed marginals $(\mu_t)_{t\in\coint{0,1}}$ is guaranteed under appropriate conditions summarized now, restated from \cite[Chapter 1]{villani_optimal_2009}. A similar result can be found in \cite[Proposition 8.1.8]{ambrosio_gradient_2005} for the Euclidean case.
   Consider $(\mu_t)_{t\in\coint{0,1}}$ a family of probability measures defined on $\msm$,
   and   a time-dependent vector field  $(t,x)\in \coint{0,1}\times \msm\mapsto \bfv_t(x)$ that satisfy the following conditions.
   \begin{enumerate}[label=$\mathbf{C}$\arabic*),wide,leftmargin=*,labelwidth=!,labelindent=0pt]
   \item  \label{ass:continu_equation_i} The vector field $\bfv$ is continuous with respect to $t$ and locally Lipschitz continuous with respect to $x$.
   \item \label{ass:continu_equation_i_0} It holds $       \int_0^1\int_{\msm} \Vert\bfv_t(x)\Vert_{x}\mu_t(\rmd x)\rmd t < \infty$.
   \item  \label{ass:continu_equation_ii} The family  $(\mu_t)_{t\in\coint{0,1}}$  satisfy the continuity equation on $[0,1]$ associated to $\bfv$ in the weak sense
     \begin{equation}\label{eq:mass}
       \partial_t \mu_t(x) +\mathrm{div}(\mu_t(x) \bfv_t(x)) = 0\eqsp,
     \end{equation}
 \ie, for any smooth function $f:\ooint{0,1} \times \msm\to \rset$ with compact support, we have
    \begin{equation}\int_0^1\int_{\msm} \partial_t f(t,x) \mu_t(\rmd x) \rmd t  + \int_0^1\int_{\msm} \ps{\grad_x f(t,x)}{\bfv_t(x)}_{x} \mu_t(\rmd x) \rmd t= 0 \eqsp,
     \end{equation}
    where  $\grad_xf(t,x)$ denotes the Riemannian gradient of $y \mapsto f(t,y)$ at $x$.
   \end{enumerate}
Then, \eqref{eq:ode} admits a solution on $\coint{0,1}$ for $\mu_0$-almost all starting point $x_0$, and $\mu_t = {\phi_t}^{\bfv}_{\sharp}\mu_0$ holds for every $t\in\coint{0,1}$.

Based on this result, we propose designing a time-dependent vector
field satisfying conditions \ref{ass:continu_equation_i}--\ref{ass:continu_equation_ii} for the bridging distributions $(\pi_t)_{t\in [0,1]}$ introduced in \Cref{subsec:interpolant}. Indeed, equipped with such a vector field denoted by $\bfu$, there exists almost surely $\processt{X^{\rmM}}[\coint{0,1}]$ solution to the so-called \emph{probability flow} ODE
\begin{equation}
  \label{eq:defu_markov_proj}
  \dot{X}_t^{\rmM} = \bfu_t(X_t^{\rmM}) \eqsp, \quad X_0^{\rmM} \sim \pi_0\eqsp,
\end{equation}
 that has the same marginal distributions as the interpolation process $\processt{X}[\ccint{0,1}]$ defined in \eqref{eq:interpolation_def}. We refer to this solution as its Markovian projection. Remarkably, it can be approximately simulated using numerical methods, in order to obtain approximate samples from the target
distribution $\pi_1$ at time $t=1$, starting from samples drawn from $\pi_{t_0}$ for
any $t_0 \in \coint{0,1}$. While $t_0$ is most often chosen close to $0$ in generative modeling applications, this is not the case in the sampling methodology that is further developed here, as explained in \Cref{sec:riem-flow-iter}. In the rest of this section, we make explicit a valid candidate for $\bfu$.

\paragraph*{Construction of $\bfu$ and justification}

We start with a first lemma that ensures the existence and provides a practical expression of the time-derivative of the stochastic interpolant $(X_t)_{t\in [0,1]}$.

\begin{lemma}
\label{lem:condi_vector_field}
 Assume \Cref{ass:1} and \Cref{ass:abs} and let $(X_0,X_1) \sim \pi_{0,1}$.  The conditional vector field $t\in\ccint{0,1}\mapsto \dot{X}_t = \dot{\psi_t}(X_0;X_1) \in T_{X_t}\msm$ is almost surely well-defined and in addition,  it holds almost surely,
    \begin{equation} \label{eq:lem:condi_vector_field}
    \dot{X}_t = \frac{\Log_{X_t}X_1}{1-t}\eqsp.
\end{equation}
\end{lemma}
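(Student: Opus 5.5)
By \Cref{prop:diffeo}\ref{prop_smooth_psi_t_i}, combined with \Cref{ass:abs}, the event $\{(X_0,X_1)\in\msd\}$ has probability one. I will therefore fix a deterministic pair $(x_0,x_1)\in\msd$ and prove the identity pathwise. On this event $t\mapsto \psi_t(x_0;x_1)$ is well defined and smooth on $\ccint{0,1}$, so $\dot X_t$ exists almost surely. It remains to identify the derivative.

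Set $v=\Log_{x_1}x_0\in \ID(x_1)$. By \eqref{eq:psi_def}, $\psi_t(x_0;x_1)=\Exp_{x_1}((1-t)v)=\upgamma_{x_1,v}(1-t)$. This is the geodesic $\sigma(s)=\upgamma_{x_1,v}(s)$ run backwards, so
\[
\dot X_t=-\dot\sigma(1-t),
\]
and we must show $-\dot\sigma(1-t)=\Log_{x_t}(x_1)/(1-t)$ for $t\in\coint{0,1}$ (at $t=1$ the formula is read as undefined or by continuity). The key step is to show that $x_1\notin\cut(x_t)$ and to identify $\Log_{x_t}x_1$. Since $\|v\|_{x_1}<\cuttime_{x_1}(v/\|v\|)$, the segment $\sigma_{|\ccint{0,1}}$ is minimizing. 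Hence for $s=1-t\in\ooint{0,1}$ the restriction $\sigma_{|\ccint{0,s}}$ is minimizing, and it is the unique minimizing geodesic from $x_1$ to $\sigma(s)$.

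For uniqueness, I use the standard fact that a point strictly inside a minimizing segment is not conjugate to, and has a unique minimizer from, the endpoint: if there were another minimizing geodesic from $x_1$ to $\sigma(s)$, concatenating it with $\sigma_{|\ccint{s,1}}$ would give a broken curve of length $\dist(x_1,x_0)$, which is therefore minimizing and hence smooth, a contradiction. This is the standard argument in \cite[Chapter 10]{lee_introduction_2018}, i.e.\ that $\cut$ points occur only at or beyond the cut time. Thus $\sigma(s)\in\msm\setminus\cut(x_1)$, and by the symmetry of the cut locus ($y\in\cut(x)\iff x\in\cut(y)$, \cite[Chapter 10]{lee_introduction_2018}) we get $x_1\notin\cut(x_t)$.

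The unique minimizing geodesic from $x_t$ to $x_1$ is then the reversed curve $r\mapsto \sigma(s(1-r))$, $r\in\ccint{0,1}$. Its initial velocity is $-s\,\dot\sigma(s)$, and since $\dot\upgamma_x^y(0)=\Log_xy$ this gives
\[
\Log_{x_t}x_1=-(1-t)\dot\sigma(1-t)=(1-t)\dot X_t .
\]
Dividing by $1-t$ yields \eqref{eq:lem:condi_vector_field}. Alternatively, \Cref{prop:diffeo}\ref{prop_smooth_psi_t_iii} gives $x_t\in\mso_t(x_1)$ with $\dist(x_1,x_t)<\cuttime_{x_1}(\cdot)$ directly, which yields $x_t\notin\cut(x_1)$ without the concatenation argument.

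The main obstacle is precisely this justification that $x_1\notin\cut(x_t)$, so that $\Log_{x_t}x_1$ is defined and equals the reversed velocity. The rest is a direct chain-rule computation on $\Exp_{x_1}((1-t)v)$.
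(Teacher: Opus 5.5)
Your proof is correct and takes essentially the same route as the paper. Both identify the sub-arc of the minimizing geodesic between $X_t$ and $X_1$ as the unique minimizing geodesic from $X_t$ to $X_1$, read off its initial velocity as $\Log_{X_t}X_1$, and pick up the factor $1-t$ from the reparametrization. The only differences are that you parametrize from $x_1$ via $\Exp_{x_1}((1-t)v)$ and reverse, whereas the paper time-shifts $\upgamma_{x_0}^{x_1}$ from the $x_0$ side, and that your justification of $x_1\notin\cut(x_t)$ (star-shapedness of $\ID(x_1)$ plus symmetry of the cut locus) is more explicit than the paper's brief appeal to uniqueness of $\upgamma_{x_0}^{x_1}$.
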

\begin{proof}
The proof of this proposition is postponed to \Cref{proof:log}.
\end{proof}

For each time pair $(s,t) \in \ccint{0,1}^2$, we define the Markov kernel corresponding to the conditional distribution of $X_t$ given $X_s$, by $(x_s,\msa) \in\msm\times \mcb(\msm) \mapsto \pi_{t|s}(\msa|x_s)$, such that $\pi_{t|s}(\msa|X_s) = \PP(X_t\in\msa|X_s)$, almost surely, and denote if it exists by $p_{t|s}$ their density. Among these conditional distributions, the so-called \emph{denoising} posterior distributions corresponding to the family  $\{\pi_{1|t}\}_{t\in [0,1]}$, are key to derive marginal-preserving dynamics, as we  see next. In particular, we provide explicit expressions of their density in \Cref{subsec:posterior}.

Based on \Cref{lem:condi_vector_field}, for any $t\in\ccint{0,1}$,  we define the vector field $\bfu_t$ for $\pi_t$-almost every $x_t$ by
\begin{equation}\label{eq:u_def}
\bfu_t(x_t) = \mathbb{E}[\dot{X}_t|X_t=x_t]=\int_{\msm} \frac{\Log_{x_t}x_1}{1-t} \pi_{1|t}(\rmd x_1|x_t) \eqsp,
\end{equation}
and $\bfu_t = 0$ elsewhere. In order to show that $\bfu$ and $(\pi_t)_{t\in\ccint{0,1}}$ jointly satisfy \ref{ass:continu_equation_i}-\ref{ass:continu_equation_ii}, 
we make the following additional assumption.
\begin{assumption}\label{ass:finite}
\begin{enumerate*}[label=(\roman*)]
\item\label{ass:finite_iii} The time-dependent vector field $(t,x) \mapsto \bfu_t(x)$ defined in \eqref{eq:u_def} is locally Lipschitz-continuous with respect to $x$ and continuous with respect to $t$.
\item\label{ass:finite_ii} The coupling $\pi_{0,1}$ satisfies $\PE\left[\dist(X_0,X_1)\right] < +\infty$.
\end{enumerate*}
\end{assumption}

\Cref{ass:finite}-\ref{ass:finite_iii} is mainly a technical condition that we expect to be verified case by case. We show that it holds under mild regularity assumptions on $p_1$ in the case $\msm = \sphere^d$ and $\msm=\R^d$ in \Cref{app:sphere,app:euclidean}. Note that \Cref{ass:finite}-\ref{ass:finite_ii} holds, for instance, when $\msm$ is compact, or, in the Euclidean case, when $\pi_0$ and $\pi_1$ have finite first moments. In addition,
\Cref{ass:finite}-\ref{ass:finite_ii}  ensures that
we have for every $t\in\ccint{0,1}$,
\begin{equation}
\int_{\msm} \Vert\bfu_t(x)\Vert_{x}\pi_t(\rmd x) = \PE\left[\Vert\bfu_t(X_t)\Vert_{X_t}\right] \leq \PE\left[\PE\left[\Vert\dot{X}_t\Vert_{X_t}|X_t\right]\right]
    = \PE\left[\dist(X_0,X_1)\right]\eqsp\eqsp,
\end{equation}
where we first used the conditional Jensen inequality and then relied on the constant speed property of geodesics \cite[Corollary 5.6]{lee_introduction_2018} implying that $ \Vert\dot{X}_t\Vert_{X_t} = \Vert\dot{\upgamma}_{X_0}^{X_1}(t)\Vert_{X_t} = \Vert\dot{\upgamma}_{X_0}^{X_1}(0)\Vert_{X_0} = \dist(X_0,X_1)$. Therefore
\Cref{ass:finite}-\ref{ass:finite_ii} implies that  $\bfu$ and $(\pi_{t})_{t\in\ccint{0,1}}$ satisfy \ref{ass:continu_equation_i_0}.

With those additional assumptions, we can establish the following result.
\begin{proposition}
  \label{prop:verify_prop_cont}
  Assume \Cref{ass:1}, \Cref{ass:abs} and \Cref{ass:finite}. Then, the vector field $\bfu$, defined by \eqref{eq:u_def}, and the bridging distributions $(\pi_{t})_{t\in\ccint{0,1}}$ satisfy the continuity equation derived in \ref{ass:continu_equation_ii}.
\end{proposition}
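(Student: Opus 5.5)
The plan is the standard "conditional-to-marginal" argument, carried out directly on the test-function formulation of \ref{ass:continu_equation_ii}. Fix a smooth $f:\ooint{0,1}\times\msm\to\rset$ with compact support. By \Cref{prop:diffeo}-\ref{prop_smooth_psi_t_i}, for $\couplingPi$-almost every $(x_0,x_1)$ the curve $t\mapsto X_t=\psi_t(x_0;x_1)$ is smooth on $\ccint{0,1}$. Hence $t\mapsto f(t,X_t)$ is smooth almost surely, and the chain rule gives
\begin{equation}
\frac{\rmd}{\rmd t} f(t,X_t) = \partial_t f(t,X_t) + \ps{\grad_x f(t,X_t)}{\dot X_t}_{X_t}\eqsp.
\end{equation}
Since $f$ vanishes near $t=0$ and $t=1$, integrating over $\ccint{0,1}$ gives zero pathwise:
\begin{equation}
\int_0^1 \partial_t f(t,X_t)\,\rmd t + \int_0^1 \ps{\grad_x f(t,X_t)}{\dot X_t}_{X_t}\rmd t = 0 \quad \text{a.s.}
\end{equation}

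Next I take expectations and exchange the expectation with the time integral by Fubini. For this I need integrability. The first term is bounded by $\sup|\partial_t f|$. For the second, $\Vert\grad_x f\Vert$ is bounded because $f$ is smooth with compact support, and the constant-speed property gives $\Vert \dot X_t\Vert_{X_t}=\dist(X_0,X_1)$. \Cref{ass:finite}-\ref{ass:finite_ii} then supplies the domination. After the exchange, the first term is $\int_0^1\int_\msm \partial_t f(t,x)\,\pi_t(\rmd x)\,\rmd t$, since $X_t\sim\pi_t$ by definition.

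For the second term I condition on $X_t$ at each fixed $t$. Because $\grad_x f(t,X_t)$ is $\sigma(X_t)$-measurable,
\begin{equation}
\PE\left[\ps{\grad_x f(t,X_t)}{\dot X_t}_{X_t}\right]=\PE\left[\ps{\grad_x f(t,X_t)}{\PE[\dot X_t|X_t]}_{X_t}\right]=\int_\msm \ps{\grad_x f(t,x)}{\bfu_t(x)}_x\,\pi_t(\rmd x)\eqsp.
\end{equation}
The second equality uses \Cref{lem:condi_vector_field} and the definition \eqref{eq:u_def} of $\bfu_t$. Summing the two terms yields exactly the weak continuity equation.

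The main obstacle is making the conditional expectation of a tangent-vector-valued variable rigorous, since $\dot X_t$ lives in the varying fibre $T_{X_t}\msm$. I would handle it by writing $\dot X_t=\Log_{X_t}X_1/(1-t)$, a measurable function of $(X_t,X_1)$ taking values in $T_{X_t}\msm$. I would then disintegrate the joint law of $(X_t,X_1)$ as $\pi_t(\rmd x_t)\,\pi_{1|t}(\rmd x_1|x_t)$. This turns the expectation into an iterated integral over the fixed vector space $T_{x_t}\msm$, with inner integral $\bfu_t(x_t)$ and linearity of $\ps{\grad_x f(t,x_t)}{\cdot}_{x_t}$ applied inside.

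Two measure-theoretic points remain. First, $\Log_{X_t}X_1$ must be defined almost surely, that is, $X_1\notin\cut(X_t)$. For $t<1$ this holds because the geodesic from $X_0$ to $X_1$ is minimizing and $X_t$ is an interior point of it, so $X_1$ is not conjugate to $X_t$ and the minimizer is unique. Second, the joint map must be measurable, which follows from the smoothness of $\Exp$ and $\Log$ on $\msd$. The endpoint $t=1$ has measure zero and does not matter.
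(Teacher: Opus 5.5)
Your proposal is correct and follows the same route as the paper's proof: the pathwise chain rule for $t\mapsto f(t,X_t)$, integration over $(0,1)$ using the compact support of $f$, and then replacing $\dot X_t$ by $\PE[\dot X_t\mid X_t]=\bfu_t(X_t)$ via the tower property. The domination via the constant-speed property and \Cref{ass:finite}-\ref{ass:finite_ii}, the disintegration for the fibre-valued conditional expectation, and the check that $X_1\notin\cut(X_t)$ are details the paper leaves implicit, and you handle them correctly.
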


\begin{proof}
The proof of this result is postponed to \Cref{proof:mass}.
\end{proof}

 As a matter of fact, we directly obtain the following result on the Markovian projection of $(X_t)_{t\in [0,1]}$.

\begin{proposition}\label{prop:markov_proj}
  Assume \Cref{ass:1}, \Cref{ass:abs} and \Cref{ass:finite}. For any $t\in\ccint{0,1}$ we have $\pi_t = {\phi^{\bfu}_t}_{\sharp}\pi_0$, with $\phi^{\bfu}$ being the flow associated to the time-dependent vector field $\bfu$ defined in \eqref{eq:u_def}. As a consequence, the process $\processt{X^{\rmM}}[\ccint{0,1}]$ defined by the deterministic dynamics \eqref{eq:defu_markov_proj} has the same time marginal distributions as the interpolation process $\processt{X}[\ccint{0,1}]$, \ie, $X_t^M\sim\pi_t$ for any $t\in [0,1]$.
\end{proposition}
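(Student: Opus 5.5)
The plan is to invoke the existence result quoted from \cite[Chapter 1]{villani_optimal_2009}, with $\mu_t=\pi_t$ and $\bfv=\bfu$. That result needs conditions \ref{ass:continu_equation_i}--\ref{ass:continu_equation_ii}, and I will check them one at a time.

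Condition \ref{ass:continu_equation_i} is exactly \Cref{ass:finite}-\ref{ass:finite_iii}. Condition \ref{ass:continu_equation_i_0} was already derived in the text just above \Cref{prop:verify_prop_cont}. That derivation combined conditional Jensen with the constant-speed identity $\Vert\dot X_t\Vert_{X_t}=\dist(X_0,X_1)$, which holds almost surely by \Cref{lem:condi_vector_field}. It gives the pointwise bound $\int_{\msm}\Vert\bfu_t\Vert\rmd\pi_t\le \PE[\dist(X_0,X_1)]<\infty$ for every $t$. Integrating this bound over $t\in\ccint{0,1}$ yields \ref{ass:continu_equation_i_0}. Condition \ref{ass:continu_equation_ii} is precisely \Cref{prop:verify_prop_cont}. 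The cited result then gives, for $\pi_0$-almost every $x_0$, a solution of \eqref{eq:ode} on $\coint{0,1}$, together with $\pi_t={\phi^{\bfu}_t}_\sharp\pi_0$ for all $t\in\coint{0,1}$.

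Next I pass to the Markovian projection. Take $X^{\rmM}_0\sim\pi_0$ and set $X^{\rmM}_t=\phi^{\bfu}_t(X^{\rmM}_0)$. This is well defined almost surely, because the exceptional set of starting points is $\pi_0$-null. It solves \eqref{eq:defu_markov_proj}, and its law at each time is ${\phi^{\bfu}_t}_\sharp\pi_0=\pi_t$. Uniqueness of the flow, which comes from the local Lipschitz property, ensures that this process is the solution in question.

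The step I expect to be the main obstacle is the endpoint $t=1$, since the quoted result only covers $\coint{0,1}$ and $\bfu_t$ carries a factor $1/(1-t)$. I will first show that $X^{\rmM}_t$ converges almost surely as $t\to1$. Write $L(\omega)=\int_0^1\Vert\bfu_s(X^{\rmM}_s)\Vert\,\rmd s$. By Fubini and condition \ref{ass:continu_equation_i_0}, $\PE[L]<\infty$, so $L<\infty$ almost surely. Since $\dist(X^{\rmM}_s,X^{\rmM}_t)\le\int_s^t\Vert\bfu_r(X^{\rmM}_r)\Vert\,\rmd r$, the path is Cauchy as $t\to1$. Completeness from \Cref{ass:1} then gives a limit, which I define as $X^{\rmM}_1$, extending $\phi^{\bfu}_1$ almost everywhere.

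Almost sure convergence implies weak convergence of the laws, so $\mathrm{Law}(X^{\rmM}_t)=\pi_t$ converges weakly to $\mathrm{Law}(X^{\rmM}_1)$. On the other hand, $X_t=\psi_t(X_0;X_1)\to X_1$ almost surely by continuity of geodesics (\Cref{prop:diffeo}-\ref{prop_smooth_psi_t_i}), so $\pi_t$ also converges weakly to $\pi_1$. Uniqueness of weak limits gives $\mathrm{Law}(X^{\rmM}_1)=\pi_1$, which completes the claim for all $t\in\ccint{0,1}$.
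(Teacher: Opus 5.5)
Your proposal is correct and follows the paper's route. The paper's proof simply calls the result an immediate corollary of \Cref{prop:verify_prop_cont}, \ie{} conditions \ref{ass:continu_equation_i}--\ref{ass:continu_equation_ii}, obtained from \Cref{ass:finite}-\ref{ass:finite_iii}, the Jensen/constant-speed bound and \Cref{prop:verify_prop_cont}, fed into the existence result quoted from \cite{villani_optimal_2009}, exactly as you do.

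Your additional limiting argument at $t=1$ is also sound: $\int_0^1\Vert\bfu_s(X^{\rmM}_s)\Vert\,\rmd s$ is almost surely finite because $\PE[\dist(X_0,X_1)]<\infty$, the Cauchy property plus completeness gives a limit, and $X_t\to X_1$ identifies that limit's law with $\pi_1$. This fills a point the paper's one-line proof passes over, since the quoted result only covers $\coint{0,1}$ and $\bfu_t$ is singular at $t=1$.
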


\begin{proof}
    This is an immediate corollary of \Cref{prop:verify_prop_cont}.
\end{proof}

In generative modeling applications, the time-dependent vector field $\bfu$ given in \eqref{eq:u_def} is learnt by a neural network by leveraging a mean-square error formulation, using available samples $x_1$ from $\pi_1$. In the context of sampling, such samples do not exist. Following \cite{grenioux_stochastic_2024}, we propose to approximate $\bfu$ using Monte Carlo methods, by intermediately sampling from the {denoising posterior distributions} $\{\pi_{1|t}\}_{t\in [0,1]}$ that appear in \eqref{eq:u_def}; see \Cref{sec:riem-flow-iter}. To apply this strategy, we characterize their density $\{p_{1|t}\}_{t\in[0,1]}$ in the next section.

\subsection{Expression of denoising posterior densities}\label{subsec:posterior} Let $t\in [0,1)$.  Assuming that the conditional distribution $\pi_{t|1}$ admits a density $p_{t|1}$ with respect to $\Rvol$, then for any $x_t\in \msm$, it follows from the Bayes' rule that the density of $\pi_{1|t}(\cdot|x_t)$ is given by
\begin{equation}\label{eq:bayes}
    p_{1|t}(x_1|x_t) =\frac{p_{t|1}(x_t|x_1)\npi(x_1)}{\int_\msm p_{t|1}(x_t|y)\npi(y)\rmd y} \eqsp.
\end{equation}
Based on this observation, characterizing $p_{1|t}$ actually amounts to show that $\pi_{t|1}$ is absolutely continuous with respect to $\Rvol$  and identifying $p_{t|1}(\cdot|x_1)$ for any $x_1\in \msm$, which is the purpose of this section.

To this end, we remark that
\begin{equation}\label{eq:p_t|1}
    \pi_{t|1}(\cdot|x_1)={\psi_t(\cdot;x_1)}_{\sharp}\pi_{0|1}(\cdot|x_1) \eqsp,
\end{equation}
where $\psi_t$ is the geodesic interpolant introduced in \eqref{eq:psi_def}. Hence, we aim to use a change-of-variable formula via the diffeomorphism $\psi_t(\cdot;x_1)$ defined from $\mcut$ onto $\mso_t(x_1)$ in \eqref{eq:p_t|1}, which requires the introduction of further concepts from integration on Riemannian manifolds; see \eg, \cite[Chapter III]{chavel_riemannian_2006} for details.

Fix a point $z\in\msm$. Recall that, under \Cref{ass:1}, the map $\Exp_z$ is a diffeomorphism from $\ID(z)\subset T_z\msm$, defined in \eqref{eq:def_id_x} onto $\msm\setminus\cut(z)$. By restriction to $\ID^*(z) = \ID(z)\setminus\{0\}$, we obtain a diffeomorphism onto $\msm^z = \msm\setminus\{\cut(z) \cup \{z\}\}$. Denote $S_z\msm\subset T_z\msm$ the unit sphere of $T_z\msm$.
Then, it holds that $  \ID^*(z) = \{  t\xi\,:\,\xi\in S_z\msm, 0< t<\cuttime_z(\xi)\}$.

Let $\Dmsm{z}\subset   S_z\msm \times \ooint{0,+\infty}$ denotes the set of pairs $(\xi,t)$ such that $t\xi\in\ID^*(z)$.
The map $(\xi,t) \mapsto \varphi_z(\xi,t) = \Exp_z t\xi$ is a diffeomorphism from $\Dmsm{z}$ onto $\msm^z$,   
whose inverse writes
\begin{equation}
    x\in\msm^z\mapsto \varphi^{-1}_z(x) = \left(\frac{\Log_z x}{\dist(z,x)},\dist(z,x)\right)\eqsp,
\end{equation}
\ie{} the distance from $x$ to $z$ and  the unit vector in $S_z\msm$ proportional to the initial velocity vector of the minimizing geodesic $\upgamma_z^x$. Since $\Rvol(\cut(z)\cup\{z\})=0$, almost every point in $\msm$ can be parameterized in this way, referred to as its \emph{spherical coordinates with base point $z$}: see \Cref{fig:spherical} for a simple illustration. In particular, this transformation allows to make explicit any integral against $\Rvol$ using this diffeomorphism, as explained next.
\begin{figure}
    \centering
    \begin{tikzpicture}[scale=2.2,>=Stealth]

\def\R{1}       
\def\angz{130}  
\def\angx{50}   

\coordinate (O) at (0,0);
\coordinate (Z) at ({\R*cos(\angz)},{\R*sin(\angz)});
\coordinate (X) at ({\R*cos(\angx)},{\R*sin(\angx)});

\coordinate (Tdir) at ({-sin(\angz)},{cos(\angz)});

\coordinate (XiPos)  at ($(Z) + 1*(Tdir)$);  
\coordinate (XiNeg)  at ($(Z) - 1*(Tdir)$);  

\draw[thick,black] (\R,0) arc[start angle=0,end angle=180,radius=\R];

\draw[blue,thick] ($(Z)-1.2*(Tdir)$) -- ($(Z)+1.2*(Tdir)$);
\node[blue,above left] at ($(Z)-0.8*(Tdir)$) {$T_z\mathsf{M}$};

\filldraw[red] (XiPos) circle (0.7pt);
\filldraw[red] (XiNeg) circle (0.7pt);

\node[red,below right] at (XiNeg) {$\xi\in S_z\mathsf{M}$};

\draw[green!70!black,thick]
    ({\R*cos(\angz)},{\R*sin(\angz)}) arc[start angle=\angz,end angle=\angx,radius=\R];

\filldraw[black] (Z) circle (0.8pt) node[above left,xshift=-1pt] {$z$};

\filldraw[black] (X) circle (0.8pt) node[above right,xshift=2pt] {$x=\mathrm{Exp}_z t\xi$};

\path let \p1=($(Z)!0.5!(X)$) in
    node[green!70!black] at (\p1) {$t = \widehat{zx}$};

\end{tikzpicture}
    \caption{Illustration of the spherical coordinates with a base point $z$ on the manifold $\msm=\sphere^1$. Apart from $z$ and $-z$, every point $x\in\msm$ is associated with a unique pair $(t,\xi)$ such that $x=\Exp_z t\xi$, with $\xi$ (colored in \textcolor{red}{red}) being a unit vector of $T_z\msm$ (here, there are only two such vectors), and $t=\dist(z,x)=\widehat{zx}$ (colored in \textcolor{green}{green}).}
    \label{fig:spherical}
\end{figure}
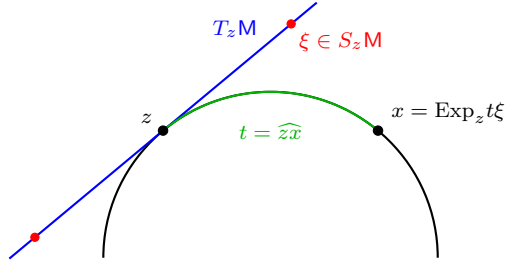

For any $\xi\in S_z\msm$ and $t\geq 0$, let $\tau_{z,\xi}(t)\colon T_z\msm \to T_{\upgamma_{z,\xi}(t)}\msm$ denote parallel transport along the geodesic $\upgamma_{z,\xi}$; see \cite[Theorem 4.32]{lee_introduction_2018}. We then define, for each $\xi\in S_z\msm$ and $t\geq 0$, the linear map $\mcr_{z,\xi}(t)\colon T_z\msm \to T_z\msm$ by
\begin{equation}
    \mcr_{z,\xi}(t)v = \tau_{z,\xi}(t)^{-1} \parentheseDeux{ \curv\left(\dot{\upgamma}_{z,\xi}(t),\tau_{z,\xi}(t) v\right)\, \dot{\upgamma}_{z,\xi}(t)}\eqsp,
  \end{equation}
for any $v\in T_z\msm$, where $\curv$ denotes the \emph{Riemannian curvature tensor} \cite[Section I.4]{chavel_riemannian_2006}.
For any $\xi\in S_z\msm$, we consider the associated linear second-order ODE
\begin{equation}\label{eq:ode_A}
    \ddot{\scrA}_{z,\xi} + \mcr_{z,\xi}(t)\scrA_{z,\xi} = 0\eqsp,
\end{equation}
with initial conditions $\scrA_{z,\xi}(0)=0$ and $\dot{\scrA}_{z,\xi}(0)=\Id$.

Using the symmetries of the curvature tensor (see \cite[Equation I.4.4]{chavel_riemannian_2006}), the operator $\mcr_{z,\xi}(t)$ vanishes on $\R\xi$ for all $t\geq 0$. Consequently, any solution $\scrA_{z,\xi}$ of \eqref{eq:ode_A} can be viewed as an endomorphism of $\xi^{\perp}\subset T_z\msm$, the orthogonal complement of $\R\xi$. Denoting by $(t,\xi)\in (0,+\infty)\times S_z\msm \mapsto \scrA_{z,\xi}(t)$ the solution of \eqref{eq:ode_A} associated with $\xi$, we obtain the following change-of-variable formula \cite[Theorem III.3.1]{chavel_riemannian_2006}: for any $z\in\msm$ and any $\Rvol$-integrable function $f$,
\begin{equation}\label{eq:change-variable}
        \int_{\msm} f(x)\Rvol(\rmd x) = \int_{S_z\msm} \mu_{z}(\rmd \xi) \int_0^{\cuttime_z(\xi)} \rmd t \,\, f(\Exp_z t\xi) \det\scrA_{z,\xi}(t)  \eqsp,
    \end{equation}
where $\mu_z$ denotes the Riemannian measure on $S_z\msm$ induced by the Lebesgue measure on $T_z\msm$, identified with $\R^d$ via a choice of orthonormal basis.

In \Cref{app:geometry}, we provide a detailed derivation of \eqref{eq:change-variable} and emphasize the connection between $\scrA$, the differential of the exponential map, and Jacobi fields. In particular, the function $(t,\xi)\mapsto \det \scrA_{z,\xi}(t)$, or equivalently, the Jacobian determinant of the exponential map, admits a closed-form expression in locally symmetric spaces \cite{chevallier_exponential-wrapped_2022}. 
When $\msm$ has constant sectional curvature, the computation of $\det \scrA_{z,\xi}(t)$ for all $(t,\xi)\in\ooint{0,+\infty}\times S_z\msm$ is immediate: it equals $\sin(t)^{d-1}$ on $\sphere^d$ and $t^{d-1}$ on $\R^d$; see \cite[Section III.1]{chavel_riemannian_2006}.

Following \eqref{eq:change-variable}, we are able to make explicit the conditional density $p_{1|t}$, as shown in the following proposition.
\begin{proposition}\label{prop:density}
  Assume \Cref{ass:1} and \Cref{ass:abs}. Then, for any $t\in\coint{0,1}$ and $x_1\in\msm$, $\pi_{t|1}(\cdot|x_1)$ has a density with respect to $\Rvol$, defined by
    \begin{equation}\label{eq:pt1}
        x_t\in \mcut \mapsto p_{t|1}(x_t|x_1) = p_{0|1}(\psi_t^{-1}(x_t;x_1)|x_1)\Jac_t(x_t;x_1)\1_{\mso_t(x_1)}(x_t)\eqsp,
    \end{equation}
    $\Rvol$-almost everywhere, with
     \begin{equation}\label{eq:jac_a}
    \Jac_t(x_t;x_1) = \frac{1}{1-t}\frac{\det \scrA_{x_1,\xi_t}\left(\frac{\dist(x_t,x_1)}{1-t}\right)}{\det \scrA_{x_1,\xi_t}\left(\dist(x_t,x_1)\right)}\eqsp,
\end{equation}
$\xi_t$ denoting $\Log_{x_1}x_t/\dist(x_t,x_1)$.
    If we further assume that the conditional density $x_0\mapsto p_{0|1}(x_0|x_1)$
  is smooth for any $x_1 \in\msm$, then $x_t\mapsto p_{t|1}(x_t|x_1)$ is smooth on $\mso_t(x_1)$ for any $t \in \coint{0,1}$.
 
\end{proposition}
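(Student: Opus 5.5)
The plan is to compute the pushforward in \eqref{eq:p_t|1} by working in spherical coordinates with base point $x_1$, where $\psi_t(\cdot;x_1)$ becomes a radial rescaling. Fix $t\in\coint{0,1}$ and $x_1\in\msm$. By \Cref{prop:diffeo}-\ref{prop_smooth_psi_t_ii}, $\psi_t(\cdot;x_1)$ is a diffeomorphism from $\mcut$ onto $\mso_t(x_1)$. Since $\Rvol(\cut(x_1))=0$ and $\pi_{0|1}(\cdot|x_1)$ has density $p_{0|1}(\cdot|x_1)$ (by \Cref{ass:abs} and the convention adopted when $p_1(x_1)=0$), we have for any bounded measurable $g$
\begin{equation}
\int g\,\rmd\pi_{t|1}(\cdot|x_1)=\int_{\mcut} g(\psi_t(x_0;x_1))\,p_{0|1}(x_0|x_1)\,\Rvol(\rmd x_0)\eqsp.
\end{equation}

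\textbf{Radial form of $\psi_t$.} Apply \eqref{eq:change-variable} with $z=x_1$, writing $x_0=\Exp_{x_1}(s\xi)$ with $\xi\in S_{x_1}\msm$ and $0<s<\cuttime_{x_1}(\xi)$. The point $x_0=x_1$ is $\Rvol$-null and can be ignored. By \eqref{eq:psi_def}, $\psi_t(x_0;x_1)=\Exp_{x_1}((1-t)s\xi)$, so in spherical coordinates $\psi_t$ is $(\xi,s)\mapsto(\xi,(1-t)s)$. Substituting $r=(1-t)s$, the integral becomes
\begin{equation}
\int_{S_{x_1}\msm}\mu_{x_1}(\rmd\xi)\int_0^{(1-t)\cuttime_{x_1}(\xi)} g(\Exp_{x_1} r\xi)\,p_{0|1}\big(\Exp_{x_1}(r\xi/(1-t))\big|x_1\big)\,\frac{\det\scrA_{x_1,\xi}(r/(1-t))}{1-t}\,\rmd r\eqsp.
\end{equation}

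\textbf{Reading off the density.} Multiply and divide by $\det\scrA_{x_1,\xi}(r)$, then apply \eqref{eq:change-variable} again in reverse with variable $x_t=\Exp_{x_1}(r\xi)$. For $r<(1-t)\cuttime_{x_1}(\xi)\le\cuttime_{x_1}(\xi)$ we have $r=\dist(x_t,x_1)$ and $\xi=\xi_t$. The restriction $r<(1-t)\cuttime_{x_1}(\xi)$ is exactly the indicator of $\mso_t(x_1)$ by \Cref{prop:diffeo}-\ref{prop_smooth_psi_t_iii}. The point $\Exp_{x_1}(r\xi/(1-t))$ equals $\psi_t^{-1}(x_t;x_1)$ by \eqref{eq:inv_psi}. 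This yields \eqref{eq:pt1} with $\Jac_t$ as in \eqref{eq:jac_a}.

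\textbf{Main obstacle.} The delicate point is justifying the division, namely that $\det\scrA_{x_1,\xi}(r)>0$ for $0<r<\cuttime_{x_1}(\xi)$. This holds because $\det\scrA_{x_1,\xi}(r)$ is, up to the factor $r^{d-1}$, the Jacobian determinant of $\Exp_{x_1}$ at $r\xi$, as explained in \Cref{app:geometry}. That Jacobian cannot vanish, since there are no conjugate points before the cut time (equivalently, $\Exp_{x_1}$ is a diffeomorphism on $\ID(x_1)$). One must also check measurability of $\xi\mapsto\cuttime_{x_1}(\xi)$ so that Fubini applies; this follows from continuity of the cut time.

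\textbf{Smoothness.} Assume $p_{0|1}(\cdot|x_1)$ is smooth. On $\mso_t(x_1)\setminus\{x_1\}$, the maps $x_t\mapsto\psi_t^{-1}(x_t;x_1)$, $\dist(x_t,x_1)$ and $\xi_t$ are smooth, as compositions of $\Log_{x_1}$ and $\Exp_{x_1}$ on their diffeomorphism domains. The map $(r,\xi)\mapsto\det\scrA_{x_1,\xi}(r)$ is smooth by smooth dependence of ODE solutions on parameters, and it is positive as shown above. Hence $p_{t|1}(\cdot|x_1)$ is smooth there.

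At $x_t=x_1$ a separate argument is needed. The ratio $\det\scrA(r/(1-t))/\det\scrA(r)$ should be rewritten as $(1-t)^{-(d-1)}$ times the ratio of the Jacobians of $\Exp_{x_1}$ evaluated at $\Log_{x_1}x_t/(1-t)$ and at $\Log_{x_1}x_t$. Both Jacobians are smooth in $\Log_{x_1}x_t$ near $0$ and equal to $1$ at $0$, so the ratio is smooth near $x_1$. Alternatively, $p_{t|1}(\cdot|x_1)$ can be identified directly as the pushforward density under the smooth diffeomorphism $\psi_t(\cdot;x_1)$, using the standard Riemannian change of variables with $|\det\rmd\psi_t^{-1}|$.
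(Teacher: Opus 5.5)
Your proposal is correct and follows the paper's proof essentially step for step: you push $\pi_{0|1}(\cdot|x_1)$ forward in spherical coordinates based at $x_1$, where $\psi_t(\cdot;x_1)$ acts as the radial rescaling $s\mapsto(1-t)s$; you substitute, multiply and divide by $\det\scrA_{x_1,\xi}$ (nonvanishing before the cut time), and read off the indicator of $\mso_t(x_1)$ and $\psi_t^{-1}$ via \Cref{prop:diffeo}. Your extra step at $x_t=x_1$, which rewrites the ratio of $\det\scrA$ terms through the Jacobian of $\Exp_{x_1}$ so that it is smooth in $\Log_{x_1}x_t$ near $0$, fills a point the paper's smoothness argument passes over, since $\dist(x_t,x_1)$ and $\xi_t$ are not smooth there.
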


\begin{proof}
    The proof of this result is postponed to \Cref{proof:spherical}.
\end{proof}

In particular, the term $\Jac_t(\cdot;x_1)$ introduced in \eqref{eq:jac_a} coincides with the Jacobian determinant of the inverse map $\psi_t^{-1}(\cdot;x_1)$, as prescribed by the standard change-of-variable formula. We stress that the conditional density $p_{t|1}(\cdot|x_1)$ may be supported on a \emph{strict} subset of $\msm$ whenever $\mso_t(x_1)\neq \msm$. According to \Cref{prop:diffeo}, this situation arises precisely when $\ID(x_1)\neq T_{x_1}\msm$, \ie{} when the exponential map $\Exp_{x_1}$ is not a global diffeomorphism. This phenomenon is unavoidable on any non-trivial manifold: under \Cref{ass:1}, if $\Exp_{x_1}$ were a global diffeomorphism, then $\msm$ would be diffeomorphic to its tangent space $T_{x_1}\msm$, and hence to $\R^d$. The support restriction reflects a genuinely geometric aspect of the interpolation procedure, intrinsic to the non-Euclidean setting.
As an illustration, when $\msm=\sphere^d$, $p_{t|1}(x_t|x_1)=0$ holds whenever $\widehat{x_tx_1}\geq (1-t)\pi$.

Although the expression in \eqref{eq:pt1} is generally intractable for arbitrary couplings $\pi_{0,1}$ and manifolds $\msm$, we focus in this work on settings where it can be handled explicitly. One such case is obtained by choosing $\pi_{0,1}$ as the independent coupling and taking $\pi_0$ to be the uniform distribution on $\msm$, assumed to be compact. In this situation, the conditional density reduces to $p_{0|1}(\cdot|x_1)\propto \1_{\msm}$. An illustration of this construction for $\msm=\sphere^2$ is provided in \Cref{fig:pt1}.
\begin{figure}
    \centering
\includegraphics[width=0.6\linewidth]{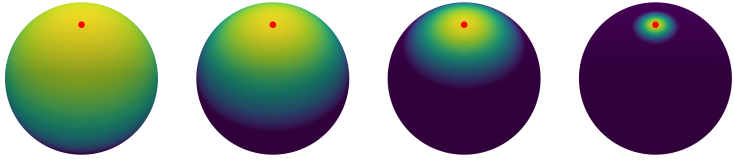}
    \caption{Illustration of  $x_t\mapsto p_{t|1}(x_t|x_1)\propto\Jac_t(x_t|x_1)\1_{\mso_t(x_1)}(x_t)$ for different values of $t$, in the case where $\msm=\sphere^2$, with $x_1\in \msm$ being displayed in \textcolor{red}{red}. \textbf{(From left to right)}: $t=0.3,0.5,0.7,0.9$. While $p_{t|1}(\cdot|x_1)$ is close to the uniform distribution for small times, we observe that it concentrates more and more around $x_1$ as $t$ increases, since its support gradually shrinks.}
    \label{fig:pt1}
\end{figure}

\section{FRIPS methodology}
\label{sec:riem-flow-iter}
Relying on the theoretical concepts elaborated in \Cref{sec:theory}, we present a generic sampling algorithm that outputs approximate samples from a target distribution $\pi_1$ of the form \eqref{eq:def_target}, by simulating the Markovian process $\processt{X^{\rmM}}_{t\in\ccint{0,1}}$ described in \eqref{eq:defu_markov_proj}. Since the velocity field $\bfu$ driving the underlying ODE is not tractable, we propose to adopt a training-free strategy previously used in Euclidean space \cite{huang_reverse_2024,he_zeroth-order_2024, vacher_polynomial_2025, grenioux_stochastic_2024}: we estimate $\bfu_t(X_t^{\rmM})$ \emph{on-the-fly} by (i) sampling from the intermediate denoising posterior distributions evoked in \Cref{subsec:posterior} and (ii) computing an empirical approximation of the expectation in \eqref{eq:u_def}. We coin this methodology as \emph{Flow-based Riemannian Iterative Posterior Sampling} (FRIPS). We specify its main generic features in \Cref{subsec:generic_method}, before describing versions that can be implemented in practice in \Cref{sec:sampling}. For the rest of the paper, we assume that $\msm$ satisfies \Cref{ass:1} (illustrated by the running example of the sphere), that $\pi_{0,1}$ is the independent coupling and $\pi_0, \pi_1$ are absolutely continuous with respect to $\Rvol$, thus satisfying \Cref{ass:abs}. We assume that the $\Exp$ and $\Log$ maps are available in closed-form, as well as the Jacobian factor \eqref{eq:jac_a}.

\subsection{Generic version of FRIPS}
\label{subsec:generic_method}
In principle, one can get a sample from $\pi_1$ by following the probability flow ODE \eqref{eq:defu_markov_proj}, starting from a sample $x_0\sim\pi_0$. Unfortunately, even if the vector field $\bfu$ was perfectly known, integrating exactly this ODE is not feasible; henceforth, we need to rely on some discretization schemes. For ease of presentation, we restrict our methodology to the use of the Euler scheme, but generalization to the Exponential Integration scheme \cite{durmus_quantitative_2015} or higher-order methods could also be applied; see \eg{} \cite{munthe-kaas_high_1999, crouch_numerical_1993} regarding numerical integration on manifolds. Given a sequence of $K\geq 1$ increasing timesteps $0 \leq t_0 \leq \cdots \leq t_K \leq 1$, with $t_K$ being close to $1$, applying the Euler scheme to the ODE \eqref{eq:defu_markov_proj} amounts to define the sequence of iterates $\{X_k\}_{k=0}^{K}$ with $X_{0} \sim \pi_{t_0}$ and for any $k\in \{0, ..., K-1\}$
\begin{equation}
  \label{eq:def_euler_ideal}
  X_{k+1} = \Exp_{X_{k}} (h_{k+1} \bfu_{t_k}(X_{k})) \eqsp,
\end{equation}
where $h_{k+1} = t_{k+1} - t_k$. In practice, we aim to simulate this specific discretized sequence, which requires the simultaneous ability to (i) estimate precisely the velocity field $\bfu$ at times $\{t_k\}_{k=0}^{K-1}$ and (ii) sample the initial state $X_{0}\sim \pi_{t_0}$ accurately for any chosen starting time $t_0$. We respectively treat these two main aspects in the next paragraphs, and summarize the generic version of FRIPS in \Cref{alg:1}.

\begin{algorithm}
\caption{FRIPS : generic version}
\label{alg:1}
\begin{algorithmic}[1]
  \Input Initial time $t_0\in\coint{0,1}$, timesteps $\{t_k\}_{k=1}^K$
\Output an approximate sample $X_{K}$ from  $\pi_1$
\State Sample $X_{0} = \texttt{RLA-Init}(t_0)$, see \Cref{alg:ula}, or $X_{0} = \texttt{Pseudo-IMH-Init}(t_0)$, see \Cref{alg:pseudo_mcmc}
\For{$k = 0$ to $K-1$}
    \State Estimate $\hat{\bfu}_{t_k}({X}_k) = \texttt{Velocity}(t_{k},X_{k})$, see \Cref{alg:mean}
    \State Set $X_{k+1} = \Exp_{X_{k}} (h_{k+1}\hat{\bfu}_{t_k}(X_k))$, see \eqref{eq:def_euler_ideal}
\EndFor
\State \Return $X_K$
\end{algorithmic}
\end{algorithm}
\begin{algorithm}
\caption{\texttt{Velocity}$(t,x_t)$ : generic estimation of $\bfu_t(x_t)$}
\label{alg:mean}
\begin{algorithmic}[1]
\Input Time $t\in\coint{0,1}$, point $x_t\in\msm$, number of Monte Carlo samples $N\geq 1$
\State Get weights and samples $\{w^i, X_1^i\}_{i=1}^N$ via $\texttt{PosteriorSampling}(t,x_t)$, see \Cref{alg:sample_ula,alg:rejection,alg:importance}
\State \Return $\sum_{i=1}^N w^i \Log_{x_t}X_1^i/(1-t)$
\end{algorithmic}
\end{algorithm}
\begin{algorithm}
\caption{$\texttt{Score}(t,x_t)$ : generic estimation of $\grad \log p_t (x_t)$}
\label{alg:score}
\begin{algorithmic}[1]
\Input Time $t\in\coint{0,1}$, point $x_t\in\msm$, number of samples Monte Carlo $N\geq 1$
\State Get weights and samples $\{w^i, X_1^i\}_{i=1}^N$ via $\texttt{PosteriorSampling}(t,x_t)$, see \Cref{alg:sample_ula,alg:rejection,alg:importance}
\State \Return $\sum_{i=1}^N w^i \grad\log p_{t|1}(x_t|X_1^i)$, based on \eqref{eq:pt1}
\end{algorithmic}
\end{algorithm}

\begin{algorithm}
\caption{$\texttt{RLA-Init}(t_0)$ : generic RLA initialization of FRIPS at $t_0$}
\label{alg:ula}
\begin{algorithmic}[1]
\Input Time $t_0\in\coint{0,1}$, step-size $\delta >0$, number of RLA steps $M\geq 1$
\Output approximate sample ${X}^M_{0}\in\msm$ from $\pi_{t_0}$
\State Set an initial point ${X}^0_0\in\msm$
\For{$m = 0$ to $M-1$}
    \State Compute $v = \texttt{Score}(t_0,{X}^{m}_0)$, see \Cref{alg:score}
    \State Draw $z\in T_{{X}_0^{m}}\msm$ from $\mathcal{N}(0,I_d)$ 
    (identifying $T_{{X}_0^{m}}\msm \approx \R^d$ by picking an orthonormal basis)
    \State Set ${X}_0^{m+1}=\Exp_{{X}_0^{m}}(\delta v +\sqrt{\delta}z)$ (Riemannian Langevin update)
\EndFor
\State \Return ${X}_0^M$
\end{algorithmic}
\end{algorithm}

\begin{algorithm}
\caption{$\texttt{Pseudo-IMH-Init}(t_0)$ : pseudo-IMH initialization of FRIPS at $t_0$}
\label{alg:pseudo_mcmc}
\begin{algorithmic}[1]
\Input Time $t_0\in\coint{0,1}$, number of IMH steps $M\geq 1$
\Output approximate sample ${X}^M_{0}\in\msm$ from $\pi_{t_0}$
\State Set an initial point ${X}^0_0\in\msm$
\For{$m = 0$ to $M-1$}
    \State Draw $\tilde{X}_0^{m+1}\sim p_0$
    \State Get un-normalized weights $\{\tilde{w}_m^i\}_{i=1}^N$ via $\texttt{IS-PosteriorSampling}(t_0,X_0^m)$
    \State Get un-normalized weights $\{\tilde{w}_{m+1}^i\}_{i=1}^N$ via $\texttt{IS-PosteriorSampling}(t_0,\tilde{X}_0^{m+1})$
    \State Compute $\hat{p}_{t_0}(X_0^m) = \sum_{i=1}^N \tilde{w}_m^i$ and $\hat{p}_{t_0}(\tilde{X}_0^{m+1}) = \sum_{i=1}^N \tilde{w}_{m+1}^i$
    \State Define $\alpha = \min\left(1,\hat{p}_{t_0}(\tilde{X}_0^{m+1})/\hat{p}_{t_0}(X_0^m)\right)$
    \State Draw $U\sim\mathcal{U}(0,1)$
     \If{$U\leq \alpha$} $X_0^{m+1} = \tilde{X}_0^{m+1}$ \Else $\,\,X_0^{m+1}=X_0^{m}$
    \EndIf
\EndFor
\State \Return ${X}_0^M$
\end{algorithmic}
\end{algorithm}

\paragraph*{Estimation of the velocity field }

Consider $t\in\coint{0,1}$ and $x_t\in\msm$.
In generative modeling applications, the vector field $\bfu_t$ is learned by a neural network as the solution of a regression problem, by leveraging available samples from $\pi_1$, thus enabling a straightforward estimation of $\bfu_t(x_t)$. In contrast, here we do not suppose to have access to samples from $\pi_1$ but only to its un-normalized density $\npi$, see \eqref{eq:def_target}. Nonetheless, by remarking that $\bfu_t(x_t)$ is the integral of the tractable function $x_1\mapsto \Log_{x_t}x_1/(1-t)$ against the measure $\pi_{1|t}(\rmd x_1|x_t)$, whose density $p_{1|t}(\cdot|x_t)$ is known up to a normalization constant by \eqref{eq:bayes}, we can estimate it by using Monte-Carlo methods similarly to previous Euclidean diffusion-based sampling methods \cite{huang_reverse_2024,he_zeroth-order_2024, vacher_polynomial_2025, grenioux_stochastic_2024}. We denote by $\hat{\bfu}_t(x_t)$ the obtained estimation of $\bfu_t(x_t)$, of the form
\begin{equation}\label{eq:u_est}
    \hat{\bfu}_t(x_t) = \sum_{i=1}^N w^i\frac{\Log_{x_t}X_1^i}{1-t}
\end{equation}
where $\{X^i_1\}_{i=1}^N$ are $N\geq 1$ particles with corresponding normalized weights $\{w^i\}_{i=1}^N$ such that $\sum_{i=1}^N w^i \updelta_{X^i_1}\approx \pi_{1|t}(\cdot|x_t)$. We refer to \Cref{alg:mean} for the generic version of this algorithmic step, and to \Cref{sec:sampling} for its concrete implementation via several Monte Carlo methods.

\paragraph*{Why carefully choosing $t_0$ matters}
Overall, FRIPS methodology amounts to replace the original sampling problem targeting $\pi_1$ (potentially multi-modal) with a sequence of $K$ sampling sub-problems, associated to the distributions $\{\pi_{1|t_k}(\cdot|X_{k})\}_{k=0}^{K-1}$, where $X_k$ corresponds to the $k$-th iterate of the sequence defined in \eqref{eq:def_euler_ideal}, which are expected to be easier than the original one. Regarding their complexity, we can make the following key observations:
\begin{itemize}[wide]
    \item In the case where $t_0=0$ (as done in generative modeling), getting a sample $X_{0}$ from $\pi_{t_0}=\pi_0$ does not pose any difficulty. However, the very first step of the main loop in \Cref{alg:1} requires to be able to sample from the posterior $\pi_{1|0}(\cdot|X_{0})$, which is strictly equal to the target $\pi_1$ by choice of the independent coupling $\pi_{0,1}$. In particular, this first sub-problem is as difficult as the original one.
   
    \item On the other hand, we note that $\pi_{1|t}(\cdot|x_t)$ converges to the Dirac measure at $x_t$ as $t$ is closer to $1$. Thus, we expect  $\pi_{1|t}(\cdot|x_t)$ to resemble to a unimodal distribution centered around $x_t$ for $t$ close to $1$, a scenario for which existing Monte Carlo methods can be efficient. Hence, setting $t_0>0$ close enough to $1$ is a guarantee that the simplicity of the subsequent sampling sub-problems is thus appealing. Nevertheless, if $t_0$ is too close from 1, sampling accurately from $\pi_{t_0}$ to get the initial state $X_{0}$ is challenging, as $\pi_{t_0}$ resembles more and more to $\pi_1$. 
\end{itemize}
At the light of this discussion, the starting time $t_0$ in $[0,1)$ should be carefully chosen such that estimating the velocity field $\bfu$ and computing the initial state $X_0$ are both of relative difficulty. In practical applications, the choice of $t_0$ can significantly impact the overall performance of FRIPS. We discuss heuristic approaches to tune $t_0$ in \Cref{sec:t0} \emph{without access to target samples}, by leveraging supplementary information on the target distribution $\pi_1$. For now, we assume that $t_0$ is arbitrarily fixed and explain below how to sample the initial state. 

\paragraph*{Computation of the initial state} To sample the initial state $X_0$ distributed according to $\pi_{t_0}$, we may adopt the Langevin-based initialization strategy from \cite{grenioux_stochastic_2024, huang_reverse_2024}. In Euclidean spaces, this approach aims at running the \emph{Unadjusted Langevin Algorithm} (ULA) \cite{roberts_exponential_1996} on $\pi_{t_0}$ by leveraging the expression of its score $\grad \log p_{t_0}$ (as such, intractable) given by the Tweedie's formula
\begin{equation}\label{eq:tweedie}
    \grad \log p_{t_0} (x_{t_0}) = \int_{\msm}\grad \log p_{t_0|1}(x_{t_0}|x_1) \pi_{1|t_0}(\rmd x_1|x_{t_0})\eqsp,
\end{equation}
where the conditional score $\grad \log p_{t_0|1}(x_{t_0}|x_1)$ (with respect to the first variable $x_{t_0}$) is tractable, computed from \eqref{eq:p1t}. In our Riemannian setting, for any $x_{t_0}\in \msm$, we can estimate $\grad \log p_{t_0}(x_{t_0})$ in the exact same fashion as $\bfu_{t_0}(x_{t_0})$ (see \Cref{alg:mean}), using Monte Carlo approximation of the posterior distribution $\pi_{1|t_0}(\cdot|x_{t_0})$. The generic version of this estimation is summarized in \Cref{alg:score}, while the full Langevin-based initialization step, relying on the Riemannian version of ULA \cite{cheng_theory_2023,gatmiry_convergence_2022} denoted by RLA, is presented in \Cref{alg:ula}.

To be theoretically well grounded, this Langevin-based strategy requires sufficient regularity of the terms in \eqref{eq:tweedie} to justify exchanging the Riemannian gradient and the integral. However, this condition may fail to hold in some situations; see \Cref{sub:grass_exp} for an illustrative example. In such cases, we instead propose a zero-th order analog of this initialization strategy. Observing that $p_{t_0}(x_{t_0})$ can be interpreted as the normalizing constant of the posterior distribution $x_1 \mapsto q_1(x_1)p_{1|t_0}(x_1| x_{t_0})$ for a fixed state $x_{t_0}$, we resort to an instance of \emph{pseudo-marginal MCMC} \cite{andrieu2009pseudo} to sample from $\pi_{t_0}$. Specifically, we instantiate a Metropolis-Hastings (MH) scheme based on a prescribed proposal Markov kernel $\mathcal{Q}$, in which each evaluation of $p_{t_0}$ is replaced by an unbiased Importance Sampling estimator, using the technique described in paragraph (c) in \Cref{sec:sampling}. For compact manifolds, with $\pi_0$ the uniform distribution, a natural choice is $\mathcal{Q}=\pi_0$, which yields an Independent MH (IMH) algorithm summarized in \Cref{alg:pseudo_mcmc}.

\subsection{Posterior sampling with Monte Carlo methods}\label{sec:sampling}
A key aspect of FRIPS algorithm is the ability to efficiently and accurately sample from the posterior distributions $\{\pi_{1|t_k}(\cdot|X_{k})\}_{k=0}^{K-1}$ to perform velocity field estimation (\Cref{alg:mean}), and if needed score estimation (\Cref{alg:score}). By taking inspiration from previous diffusion-based sampling methods in Euclidean space \cite{grenioux_stochastic_2024, he_zeroth-order_2024, huang_reverse_2024, vacher_polynomial_2025}, we describe below three Riemannian-based Monte Carlo methods that solve this task: (a) Markov Chain simulation, (b) rejection sampling and (c) importance sampling - each one being a version of the generic step called \texttt{PosteriorSampling} used in \Cref{alg:mean,alg:score}. In the rest of this section, we consider an arbitrary posterior $\pi_{1|t}(\cdot|x_t)$, where $t\in [0,1)$ and $x_t\in \msm$, which we aim to sample from.  Based on the result of \Cref{prop:density}, we recall that this distribution admits a density $x_1\mapsto p_{1|t}(x_1|x_t)$ satisfying
\begin{equation}\label{eq:p1t}
    p_{1|t}(x_1|x_t) \propto p_{0|1}( \psi_t^{-1}(x_t;x_1)|x_1) \Jac_t(x_t;x_1)\1_{\mso_t(x_1)}(x_t) q_1(x_1)\eqsp.
\end{equation}

\paragraph*{(a) Markov Chain Monte Carlo (MCMC)}\label{sec:mcmc_method} As widely done in general sampling tasks, we first propose to get samples from $\pi_{1|t}(\cdot|x_t)$ by building a Markov chain whose invariant distribution is precisely $\pi_{1|t}(\cdot|x_t)$, as presented by \cite{grenioux_stochastic_2024}. Depending on the regularity of the target distribution $\pi_1$, one may either use (i) \emph{zeroth-order} schemes, that only involve the density $q_1$ (for instance geodesic slice sampling $\cite{durmus_geodesic_2024}$), or (ii) \emph{first-order} approaches, such as RLA, assuming the existence and access to the score $\grad \log q_1$ on $\msm$. In the experiments of \Cref{sec:expe}, we employ a Riemannian analogue of the Metropolis-Adjusted Langevin Algorithm (MALA) \cite{roberts_exponential_1996}, which can be viewed as an extension to general Riemannian manifolds of \cite{girolami_riemann_2011}. The RLA proposal is followed by a Metropolis-Hasting correction step, relying on the posterior density \eqref{eq:p1t} and the (typically approximate) proposal density $f$ given in \eqref{eq:trans_density}, see \Cref{app:details}.

While MCMC methods are proven to have a large mixing time when targeting highly multi-modal distributions (see \eg{} \cite{mangoubi_simple_2021}), we expect to use them in a more favorable setting, by targeting posterior distributions that are less complex than the target distribution $\pi_1$ and increasingly simpler (\ie, more unimodal) as $t$ increases. This can be ensured by carefully setting $t_0$ as discussed in the previous section. In particular, if $t_0$ is not well tuned, the intermediate MCMC runs could have a prohibitive mixing time, thus producing biased posterior samples under a limited computational budget. In this case, errors could accumulate at each global step $k$ of \Cref{alg:1}. 

One key advantage of using MCMC methods for posterior sampling is the possibility of wisely warm-starting Markov chains from one global step of \Cref{alg:1} to another. Indeed, for a given $k\in\{0, K-1\}$, the posterior distributions $\pi_{1|t_k}(\cdot|X_{k})$ and $\pi_{1|t_{k+1}}(\cdot|X_{{k+1}})$ are likely to be close to each another if $K$ is set large enough. Hence, using the last state of the Markov chains built at time $t_{k}$ as initialization of the chains built at time $t_{k+1}$ (granted this last state lies in the support of $\pi_{1|t_{k+1}}(\cdot|X_{{k+1}})$, which may depends on $(t_{k+1}, X_{k+1})$, see \eqref{eq:p1t}) could drastically reduce the number of MCMC iterations needed for convergence. Simulating a long MCMC chain could then only be necessary for the first sampling sub-problem (when the posterior distribution is more complex) at the initialization of FRIPS, see \Cref{alg:ula,alg:pseudo_mcmc}. We discuss the initialization of the corresponding Markov chains in \Cref{app:expe}.

\paragraph*{(b) Rejection Sampling (RS)}\label{sec:rejection} Interestingly, we remark that, in all Riemannian settings considered in this paper, we are always able to sample without any bias from the distribution with un-normalized density $x_1\mapsto p_{t|1}(x_t|x_1)$, denoted by $\nu_{1|t}(\cdot|x_t)$; see \Cref{sec:expe} for applications. For instance, in the particular Euclidean case where $\pi_0$ is set to be Gaussian (with $\pi_{0,1}$ still being the independent coupling), $\nu_{1|t}(\cdot|x_t)$ is a tractable Gaussian distribution; see \Cref{app:euclidean}. In the case of the $d$-sphere or Grassmann manifold with $\pi_0$ the uniform distribution, we verify in \Cref{app:sphere,app:grassmann} that $\nu_{1|t}(\cdot|x_t)$ is a simple pushforward of $\pi_0$, enabling unbiased sampling as well. Note however that, for general Riemannian manifolds or base distribution $\pi_0$, sampling exactly from $\nu_{1|t}(\cdot|x_t)$ is not necessarily straightforward. In this case, only approximate samples from $x_1\mapsto p_{t|1}(x_t|x_1)$ may be obtained, for instance by applying MCMC routines. 
Assume now that the target un-normalized density $q_1$ is upper bounded by some $q_1^\star>0$; in this case,  $x_1\mapsto p_{t|1}(x_t|x_1)q_1^\star$ is still a valid un-normalized density for $\nu_{1|t}(\cdot|x_t)$. Then, recalling that $p_{1|t}(x_1|x_t) \propto p_{t|1}(x_t|x_1)q(x_1)$ holds for any $x_1\in \msm$, we may perform rejection sampling to target $\pi_{1|t}(\cdot|x_1)$ by (i) sampling a proposal state $\tilde{x}_1 \sim \nu_{1|t}(\cdot|x_t)$ and (ii) applying the acceptance-rejection step defined by the acceptance rate 
\begin{equation}\label{eq:acceptance_rs}
\alpha^{\text{RS}}(\tilde{x}_1)=\frac{p_{t|1}(x_t|\tilde{x}_1)q_1(\tilde{x}_1)}{p_{t|1}(x_t|\tilde{x}_1)q_1^\star} = \frac{q_1(\tilde{x}_1)}{q_1^\star}\eqsp.
\end{equation}
We summarize this procedure in \Cref{alg:rejection}. Note that this method has previously been used in Euclidean state spaces by \cite{he_zeroth-order_2024}. In particular, it requires knowledge of an upper bound $q_1^\star$ on $q_1$. If not provided, the sequential nature of \Cref{alg:1} allows for iterative updates of the bound $q_1^\star$ used in \Cref{alg:rejection} every time $q_1$ is evaluated, as suggested in \cite{he_zeroth-order_2024}. 

Although using the RS method for posterior sampling is appealing, as we are guaranteed to obtain unbiased samples from $\pi_{1|t}(\cdot|x_t)$, granted unbiased samples from the proposal $\nu_{1|t}(\cdot|x_t)$, we expect the acceptance rate $\alpha^{\text{RS}}$ to be low in average due to the potential mismatch between $\pi_{1|t}$ and $\nu_{1|t}$, especially in high dimension. In the case where this rate is prohibitively low during the execution of FRIPS, RS may be replaced by one of the other posterior sampling methods.

\paragraph*{(c) Importance Sampling (IS)}\label{sec:is}Taking inspiration from \cite{huang_reverse_2024, vacher_polynomial_2025}, we propose a third posterior sampling approach based on IS \cite{agapiou_importance_2017} that relies on the same reasoning as the RS method. To be more precise, we propose to set the distribution $\nu_{1|t}(\cdot|x_t)$, defined in the previous paragraph, as the IS proposal to target $\pi_{1|t}(\cdot|x_t)$; in this case, the corresponding un-normalized IS weights are single evaluations of the target density $q_1$, similarly to \eqref{eq:acceptance_rs}. To obtain normalized weights associated to the proposal states, we adopt the self-normalization strategy as summarized in \Cref{alg:importance}.
Although the overall approach incurs bias due to the use of self-normalizing weights, it has the advantage of requiring a fixed number of target density evaluations given by the number of samples $N$, making it a favorable candidate with limited computational budget. In contrast, the RS method is unbiased, but needs more than $N$ target evaluations to obtain $N$ samples. Finally, we emphasize that IS suffers from the same fundamental drawback as RS related to the mismatch between the proposal $\nu_{1|t}$ and $\pi_{1|t}$; we refer to \cite{chatterjee_sample_2017} for a more in-depth discussion.

\begin{algorithm}
\caption{ $\texttt{MALA-PosteriorSampling}(t,x_t)$ : MALA method with target $\pi_{1|t}(\cdot|x_t)$}
\label{alg:sample_ula}
\begin{algorithmic}[1]
\Input Time $t\in\coint{0,1}$, point $x_t\in\msm$, step size $\delta>0$, number of MCMC steps $M\geq 1$, number of MCMC samples $N\leq M$
\Output approximate samples $\{X_1^i\}_{i=1}^N$ from $\pi_{1|t}(\cdot|x_t)$ with normalized weights $\{w^i\}_{i=1}^N$
\State Set an initial point $y_1^0\in\msm$ such that $x_t\in\mso_t(y_1^0)$
\For{$m = 0$ to $M-1$}
    \State Compute $v = \grad \log p_{1|t}(y_1^{m}|x_t)$ based on \eqref{eq:p1t}
    \State Compute $\bar{y}_1^{m+1}=\Exp_{y_1^{m}}(\delta v)$
    \State Draw $Z\in T_{\bar{y}_1^{m}}\msm$ from $\mathcal{N}(0,1)$ (identifying $T_{\bar{y_1}^{m}}\msm \approx \R^d$ by picking an orthonormal basis)
    \State Compute $\tilde{y}_1^{m+1}=\Exp_{\bar{y}_1^{m}}(\sqrt{2\delta}Z)$ (Riemannian Langevin proposal)
    \State Compute $\alpha = \min\left(1,\frac{p_{1|t}(\tilde{y}_1^{m+1}|x_t)f(y_1^m|\tilde{y}_1^{m+1})}{ p_{1|t}(y_1^{m+1}|x_t)f(\tilde{y}_1^{m+1}|y_1^m)}\right)$ based on \eqref{eq:p1t} and \eqref{eq:trans_density} (see \Cref{app:details}) 
    \State Draw $U\sim\mathcal{U}(0,1)$
    \If{$U\leqslant \alpha$}  ${y}_1^{m+1} = \tilde{y}_1^{m+1}$ \Else $\,\,{y}_1^{m+1}={y}_1^{m}$
    \EndIf
\EndFor
\State Set $X_1^i=y_1^{M-N+i}$ and $w^i=1/N$, for any $i\in\{1, \ldots, N\}$
\State \Return $\{w^i, X_1^i\}_{i=1}^N$
\end{algorithmic}
\end{algorithm}

\begin{algorithm}
\caption{\texttt{RS-PosteriorSampling}$(t,x_t)$ : RS method with target $\pi_{1|t}(\cdot|x_t)$}
\label{alg:rejection}
\begin{algorithmic}[1]
\Input Time $t\in\coint{0,1}$, point $x_t\in\msm$, upper bound $q^* > \sup_{\msm} q$, number of RS samples $N\geq 1$
\Output approximate samples $\{X_1^i\}_{i=1}^N$ from $\pi_{1|t}(\cdot|x_t)$ with normalized weights $\{w^i\}_{i=1}^N$
\For{$i= 1$ to $N$}
\Repeat
    \State Sample $\tilde{X}_1\sim \nu_{1|t}(\cdot|x_t)$ associated to the un-normalized density $x_1\mapsto p_{t|1}(x_t|x_1)$, see \eqref{eq:pt1}
    \State Draw $U\sim\mathcal{U}(0,1)$
\Until{$\alpha^{\text{RS}}(\tilde{X}_1)>U$}, see \eqref{eq:acceptance_rs}
\State Set $X_1^i=\tilde{X}_1$
\EndFor
\State Set $w^i=1/N$, for any $i\in\{1, \ldots, N\}$
\State \Return $\{w^i, X_1^i\}_{i=1}^N$
\end{algorithmic}
\end{algorithm}

\begin{algorithm}
\caption{\texttt{IS-PosteriorSampling}$(t,x_t)$ : IS method with target $\pi_{1|t}(\cdot|x_t)$}
\label{alg:importance}
\begin{algorithmic}[1]
\Input Time $t\in\coint{0,1}$, point $x_t\in\msm$, number of IS samples $N\geq 1$
\Output approximate samples $\{X_1^i\}_{i=1}^N$ from $\pi_{1|t}(\cdot|x_t)$ with normalized weights $\{w^i\}_{i=1}^N$
\For{$i= 1$ to $N$}
    \State Sample $X_1^i\sim \nu_{1|t}(\cdot|x_t)$ associated to the un-normalized density $x_1\mapsto p_{t|1}(x_t|x_1)$, see \eqref{eq:pt1}
    \State Set $\tilde{w}^i= q_1(X_1^i)$
\EndFor
\State Set $w^i=\tilde{w}^i/\sum_{j=1}^N \tilde{w}^j$, for any $i\in\{1, \ldots, N\}$
\State \Return $\{w^i, X_1^i\}_{i=1}^N$
\end{algorithmic}
\end{algorithm}

\newpage

\section{FRIPS in practice} After describing the general methodology of FRIPS in \Cref{sec:riem-flow-iter}, we provide here several heuristics \emph{leveraging partial knowledge about $\pi_1$} to correctly set FRIPS hyper-parameters in practical settings.

\subsection{Choice of $t_0$}\label{sec:t0} As explained in the previous section, the most critical hyper-parameter of FRIPS is without a doubt the starting time $t_0\in [0,1)$, especially in the case where $\pi_1$ is multi-modal. In this section, we provide a heuristic to tune it based on prior knowledge about $\pi_1$ within a realistic setting, \ie, without access to ground truth samples. This strategy relies on the intrinsic dynamics of the Markovian projection described hereafter.

\paragraph*{Markovian dynamics in presence of multi-modality} In \cite{biroli_dynamical_2024}, the authors study a Euclidean stochastic interpolant derived from a diffusion model (\ie, almost-independent coupling $\pi_{0,1}$ with $\pi_0$ being Gaussian) in the case where $\pi_1$ is multi-modal. In particular, they highlight three regimes of the associated Markovian projection directly related to the multi-modality of $\pi_1$, that are identified by \emph{transition times} $t_S$ (speciation time) and $t_C$ (collapse time) such that $0<t_S<t_C<1$. They show that, when $t>t_C$, the particle $X_t^M$ has already committed to one of the modes of $\pi_1$, which strongly determines its evolution up to the final state $X_1^M$, while it is mostly mode-blind when $t<t_S$. In particular, this can be analyzed equivalently through the lens of the so-called \emph{duality of log-concavity} \cite{grenioux_stochastic_2024} for the posterior distributions $\{\pi_{1|t}\}_{t\in [0,1]}$: when $t>t_C$, $\pi_{1|t}$ resembles a uni-modal distribution (thus corroborating the ``collapsing'' behavior of the dynamics), while it is multi-modal for $t<t_S$. Both works come to the conclusion that the key to obtain accurate sampling in a diffusion-based formulation in multi-modal scenarios is to carefully handle the in-between time window $[t_S, t_C]$. Such regimes are by definition inherent to FRIPS (independently of the Riemannian nature of the state space). Following the arguments developed in \Cref{subsec:generic_method}, an appropriate starting time $t_0$ should hence belong to the critical intermediary phase $(t_S,t_C)$. We emphasize that the initialization of $t_0$ is deeply discussed in \cite{grenioux_stochastic_2024}, albeit in an Euclidean setting. The authors demonstrate theoretically and empirically the existence of a sweet spot for $t_0$ (similar to the interval $[t_S,t_C]$), at which both the initialization targeting $\pi_{t_0}$ and the subsequent iterative sampling sub-problems are relatively simple. While this analysis requires additional assumptions on the target and relies on a different interpolation process, it suggests the existence of an optimal starting time for other stochastic interpolation methods such as FRIPS.  

\paragraph*{Heuristic estimation of $t_0$} Based on this characterization of the Markovian dynamics, we propose a heuristic to estimate the transition times $t_S$ and $t_C$ (and therefore, set $t_0\in (t_S, t_C)$), granted prior knowledge about the target distribution. In the rest of this paragraph, we consider a multi-modal distribution $\pi_1$ defined as a mixture of $J$ uni-modal distributions $\{\pi_{1,j}\}_{j=1}^J$, each with their normalized density $q_{1,j}$ supported on $\msm$, such that
\begin{equation}
    \pi_1(\rmd x)=\sum_{j=1}^J w_j \pi_{1,j}(\rmd x)\eqsp,
\end{equation}
where $\{w_j\}_{j=1}^J$ are \emph{unknown} positive weights with $\sum_{j=1}^J w_j = 1$. We assume that we have access to ``mode-blind'' samples from $\pi_1$, \ie, a collection of (possibly approximate) $N_j$ samples from each $\pi_{1,j}$, with knowledge of their mode label, such that the whole dataset is not representative of the ground truth relative weights $\{w_j\}_{j=1}^J$. In practical applications, ``mode-blind'' samples may be available if we have access to the location of the modes, \eg{} in the context of drug design \cite{jorgensen_many_2004}. Still, estimating the true weights remains a challenging task \cite{grenioux2025improving}.

To empirically observe the speciation and collapse phenomenons evoked above, we propose to leverage those samples as follows. Given $j\in \{1,\ldots,J\}$, a time $t\in [0,1)$ and samples $\{x_1^i\}_{i=1}^{N_j}$ drawn from $\pi_{1,j}$, we run FRIPS initialized at $t_0=t$ with $X_0\sim \pi_{t_0|1}(\cdot|x_1^i)$ for each $i\in\{1, \ldots, N_j\}$, and observe if the resulting sample $\hat{x}_1$ is likely to have been drawn from $\pi_{1,j}$; to do so, we compute the posterior estimator associated to the empirical distribution of the available samples, defined by
\begin{equation} \label{eq:mode_weight_posterior}
    \text{Mode}(\hat{x}_1) = \argmax_{i=1, \ldots, J}\frac{N_i q_{1,i}(\hat{x}_1)}{\sum_{j=1}^J N_j q_{1,j}(\hat{x}_1)} \eqsp ,
\end{equation}
and verify if it is equal to $j$. The resulting ``Return Accuracy'', computed by averaging the outputs of this procedure for all samples $\{x_1^i\}_{i=1}^{N_j}$, is denoted by $\tau_j(t)=\texttt{ReturnAccuracy}(t,j)$, see \Cref{alg:mode_accuracy}. Regarding the value of $\tau_j(t)$, we expect the following behavior. 
\begin{itemize}[wide]
    \item If $t<t_S$, then $X_0$ is mode-blind and the performance of Monte Carlo posterior sampling is limited by the multi-modality of $\pi_{1|t}$. For these two reasons, we expect to have $\tau_j(t)\approx 1/J$, which corresponds to the regime where simulated particles return to the modes of $\pi_1$ with a random assignment.
    \item If $t>t_C$, we expect to have $\tau_j(t)\approx 1$, as the initialization $X_0$ is well informed about the origin mode, and Monte Carlo posterior sampling methods are likely to perform well.
\end{itemize}
Hence, by varying the time input $t$, we expect to have an increasing profile for $t\mapsto \tau_j(t)$, from $1/J$ to 1. As such, we may identify the transition times $t_S$ and $t_C$ by averaging the obtained information for all distributions $\{\pi_j\}_{j=1}^J$, and therefore, accordingly set $t_0$, see \Cref{sub:sphere_exp} for an illustration.

\begin{algorithm}
\caption{$\texttt{ReturnAccuracy}(t,j)$ : return accuracy of $\pi_{1,j}$, by starting FRIPS at $t_0=t$}
\label{alg:mode_accuracy}
\begin{algorithmic}[1]
\Input Time $t\in\coint{0,1}$, samples $\{x_1^i\}_{i=1}^{N_j}$ drawn from $\pi_{1,j}$
\State Initialize $\hat{N}_j=0$
\For{$i=1$ to $N_j$}
    \State Sample $x_{t}^i \sim \pi_{t_0|1}(\cdot|x_1^i)$ 
    \State Run \Cref{alg:1} initialized at $t_0=t$ and $X_0=x_{t}^i$, yielding a sample $\hat{x}_1^i$
    \State Compute $\text{Mode}(\hat{x}_1^i)$, see \eqref{eq:mode_weight_posterior}
    \If{$\text{Mode}(\hat{x}_1^i)=j$} Update $\hat{N}_j \leftarrow \hat{N}_j +1$
    \EndIf
\EndFor
\State Compute the return accuracy $\tau_j=\hat{N}_j/N_j$
\State \Return $\tau_j$
\end{algorithmic}
\end{algorithm}

\subsection{Very first initialization of FRIPS} The initialization of FRIPS with RLA strategy, see \Cref{alg:ula}, or Pseudo-IMH procedure, see \Cref{alg:pseudo_mcmc}, requires an initial sample ${X}_0^0\in \msm$, before running the MCMC refinement steps. While setting ${X}_0^0\sim\pi_0$ appears to be a natural choice, one should keep in mind that the next step of the algorithm involves Monte-Carlo estimation of an integral against $\pi_{1|t_0}(\cdot|{X}_0^0)$, which may be ill-conditioned if $p_{t_0}({X}_0^0)$ is too close to $0$. Unfortunately, without any knowledge about the target distribution, choosing a candidate state ${X}_0^0$ is not straightforward. Assuming that we have access to ``mode-blind'' samples from $\pi_1$, see \Cref{sec:t0} for more details, we propose to set ${X}_0^0 = \psi_{t_0}(X_0,\hat{X}_1)$ where $X_0\sim \pi_0$ and $\hat{X}_1$ is precisely a ``mode-blind'' sample, hence providing a ''plausible'' sample from $\pi_{t_0}$ to start FRIPS. Furthermore, by generating several such plausible samples at time $t_0$, we may compute a rough estimate of the dispersion of $\pi_{t_0}$, and tune the hyperparameters of \Cref{alg:ula,alg:pseudo_mcmc} accordingly.

\subsection{General noise schedule and time discretization} In this work, we define the interpolation process $(X_t)_{t\in [0,1]}$ as a minimizing geodesic, that is, a minimizing interpolation path with constant speed. While this choice appears the most natural and simple, we may consider, for any $(x_0,x_1)\in\msd$, the interpolation function
\begin{equation}
    t\in\ccint{0,1}\mapsto\psi^\alpha_t(x_0;x_1) = (\upgamma_{x_0}^{x_1}\circ \alpha)(t)=\Exp_{x_1}((1-\alpha(t))\Log_{x_1}x_0)\eqsp,
\end{equation}
where $\alpha:\ccint{0,1}\rightarrow\ccint{0,1}$ is an arbitrary increasing, differentiable function such that $\alpha(0)=0$ and $\alpha(1)=1$. As a consequence, by setting the stochastic interpolant as $X^\alpha_t = \psi^\alpha_t(X_0;X_1)$ almost surely for any $t\in\ccint{0,1}$, we would obtain the time derivative $\dot{X}^\alpha_t = \dot{\alpha}(t)\dot{\upgamma}_{X_0}^{X_1}(\alpha(t))$. Hence, the velocity field associated to the Markovian projection of $(X^\alpha_t)_{t\in [0,1]}$ would be given by 
\begin{equation}
    \bfu^\alpha_t(x_t) = \mathbb{E}[\dot{X}^\alpha_t|X^\alpha_t=x_t]= \int_{\msm}\frac{\dot{\alpha}(t) \Log_{x_t}x_1}{1-\alpha(t)} \pi^\alpha_{1|t}(\rmd x_1|x_t) \eqsp, 
\end{equation}
where $\pi^\alpha_{1|t}(\cdot|x_t)$ denotes the posterior distribution of $X_1^\alpha$ given $X^\alpha_t=x_t$, thus generalizing \eqref{eq:u_def}. Adopting the nomenclature of diffusion models, we refer to $\alpha$ as the noise schedule. In the Euclidean setting, various noise schedules are for instance studied by \cite{grenioux_stochastic_2024}, and are proved to have an influence on the sampling performance. Specifically, the authors design noise schedules that match pre-defined profiles of the Signal-to-Noise Ratio (SNR), which informs on the increasing recovery of information about $\pi_1$ with respect to $\pi_0$ as $t$ increases. In the case of the Euclidean linear interpolant $X_t = (1-t) X_0 + t X_1$, the SNR is for instance defined by 
\begin{equation}\label{eq:snr}
    \operatorname{SNR}(t)= \frac{\PE\left[\Vert t X_1\Vert^2\right]}{\PE\left[\Vert(1-t)X_0\Vert^2\right]} \propto \frac{t^2}{(1-t)^2}
\end{equation}
Additionally, they propose to set the time discretization $\{t_k\}_{k=0}^{K}$ so that the log-SNR gap between each step is kept constant, instead of using the canonical uniform discretization $t_k = t_0+k(t_K-t_0)/K$. Empirically, this choice allows for a drastic reduction in the number of global steps $K$ while maintaining sampling quality. 

For simplicity, we adopt the standard noise schedule $t\mapsto \alpha(t)=t$ together with a uniform time discretization in our numerical experiments (\Cref{sec:expe}), as the choice of $K$ has only a limited impact on performance in the settings considered. We defer the practical implementation of \frips{} for general noise schedules and non-uniform time discretization schemes to future work.

\section{Numerical experiments}\label{sec:expe}

In this section, we quantitatively assess the performance of \frips{} in comparison with classical Monte Carlo samplers on a collection of challenging multi-modal target distributions defined on Riemannian manifolds. Following the recommendations of \cite{grenioux2025improving}, we focus on controlled \emph{bi-modal} targets for which the exact mode weights are known, and evaluate sampling performance through the ability of the generated samples to recover these proportions. 

Concretely, we mainly report the relative statistical error of the Monte Carlo estimate of the weight of the dominant mode, systematically denoted by $w \in (0,1)$ and associated with mode~$j=1$. Given $N$ generated samples $\{\hat{x}_1^i\}_{i=1}^N$, this estimator is defined as
\begin{align}
    \hat{w}=(1/N)\sum_{i=1}^N \mathbf{1}_{\text{Mode}(\hat{x}_1^i)=1}
\end{align}
where the posterior mode assignment $\text{Mode}(\cdot)$ is specified in \eqref{eq:mode_weight_posterior}. In all experiments, we set $N=4096$ and repeat each sampling procedure four times to obtain confidence intervals. 

Overall, this mode weight metric provides a clear and interpretable measure of multi-modal sampling performance. For completeness, we may additionally report complementary sample-based metrics. To ensure a fair comparison across methods, we systematically match the total number of evaluations of $q_1$, which constitutes the main computational bottleneck in practice, within each experiment. Further details on the experimental setup and the choice of hyperparameters are deferred to \Cref{app:expe}.

\subsection{Mixture of Gaussian distributions on the $d$-sphere}
\label{sub:sphere_exp}

Inspired by \cite{grenioux_stochastic_2024}, we first consider an \emph{unbalanced} mixture of two isotropic \emph{Riemannian Gaussian} distributions \cite{pennec_intrinsic_2006} (Riemannian MoG) on the sphere $\sphere^d$, with strongest mode weight $w=2/3$. The target density $p_1$ is defined as
\begin{equation}\label{eq:mog_target}
    x\in\sphere^d \mapsto \frac{2/3}{Z_1}\exp\left(\frac{-\dist(\mu_1,x)^2}{2\sigma_d^2}\right) + \frac{1/3}{Z_2}\exp\left(\frac{-\dist(\mu_2,x)^2}{2\sigma_d^2}\right)\eqsp,
\end{equation}
where means $\mu_1\in\sphere^d$ and $\mu_2\in\sphere^d$ are chosen as antipodal points, \eg, $\mu_1 = (1,0,..,0)$ and $\mu_2 = (-1,0,...,,0)$, scalar standard deviation $\sigma_d$ decreases with $d$\footnote{This choice compensates for the shrinking volume of the sphere as $d$ increases, which would otherwise reduce the separation between modes and artificially simplify the sampling task. In particular, we set $\sigma_4 = \pi/10$, $\sigma_{16}=\pi/12$ and $\sigma_{32}=\sigma_{64}=\sigma_{128}=\pi/18$.}, and $Z_1, Z_2$ are two normalizing constants. 
By symmetry of $\sphere^d$, we have $Z_1=Z_2$, and these constants are therefore omitted.

We consider increasingly challenging regimes by taking $d \in \{4,16,32,64,92\}$. To highlight the benefit of the proposed approach, we compare each baseline sampler described in \Cref{subsec:posterior} (MALA, RS, IS), applied directly to $\pi_1$, with the corresponding instances of \frips{} that use these methods as posterior sampling backbones.

We take $\pi_0$ to be the uniform distribution. This comparison is designed to illustrate the advantage of leveraging denoising non-equilibrium dynamics for complex multi-modal targets.

\begin{table*}
\centering
\caption{Average relative error in estimating the dominant mode weight for the bi-modal Gaussian mixture target \eqref{eq:mog_target}. For \frips{} variants, we report the best performance over a wide range of $t_0$ values, as shown in \Cref{fig:weight}. Bold font indicates the best result within each setting, while N/A denotes methods that encountered numerical issues.}
\label{tab:weight}
\resizebox{\textwidth}{!}{%

\begin{tabular}{@{}l c c c c c@{}}
\hline
\textsc{Methods} 
& $d=4$ 
& $d=16$ 
& $d=32$ 
& $d=64$ 
& $d=96$\\
\hline
\frips{}-\textsc{MALA} 
& $2.87\% \pm \text{\scriptsize 1.15\%}$ 
& $\mathbf{1.39\% \pm \text{\scriptsize 0.45\%}}$ 
& $\mathbf{1.27\% \pm \text{\scriptsize 0.65\%}}$ 
& $\mathbf{1.82\% \pm \text{\scriptsize 0.94\%}}$
& $\mathbf{2.29\% \pm \text{\scriptsize 0.94\%}}$\\

\frips{}-\textsc{IS} 
& $\mathbf{0.76\% \pm \text{\scriptsize 0.46\%}}$ 
& $2.89\% \pm \text{\scriptsize 0.64\%}$ 
& $10.53\% \pm \text{\scriptsize 2.42\%}$ 
& $20.80\% \pm \text{\scriptsize 1.58\%}$
& $22.77\% \pm \text{\scriptsize 1.32\%}$\\

\frips{}-\textsc{RS} 
& $2.09\% \pm \text{\scriptsize 0.45\%}$ 
& $22.55\% \pm \text{\scriptsize 1.05\%}$ 
& $23.57\% \pm \text{\scriptsize 1.22\%}$ 
& $23.39\% \pm \text{\scriptsize 0.93\%}$
& $22.25\% \pm \text{\scriptsize 1.48\%}$\\
\hline
\textsc{MALA} 
& $14.54\% \pm \text{\scriptsize 1.82\%}$ 
& $15.80\% \pm \text{\scriptsize 0.57\%}$ 
& $22.88\% \pm \text{\scriptsize 1.30\%}$ 
& $23.52\% \pm \text{\scriptsize 1.03\%}$
& $20.42\% \pm \text{\scriptsize 0.80\%}$\\
\textsc{IS} 
& $\mathbf{0.59\%} \pm \text{\scriptsize 0.60\%}$ 
& $\mathbf{1.33\% \pm \text{\scriptsize 0.86\%}}$ 
& $\mathbf{1.29\% \pm \text{\scriptsize 0.77\%}}$ 
& $13.24\% \pm \text{\scriptsize 0.37\%}$ 
& $24.84\% \pm \text{\scriptsize 0.60\%}$\\
\textsc{RS} 
& $\mathbf{0.76\% \pm \text{\scriptsize 0.48\%}}$ 
& $\mathbf{1.08\% \pm \text{\scriptsize 0.42\%}}$ 
& N/A 
& N/A 
& N/A\\
\hline
\end{tabular}
}
\end{table*}

\paragraph*{Main results}

The main results are summarized in \Cref{tab:weight}; additional results based on the Wasserstein distance are reported as well: see \Cref{app:sphere},  \Cref{tab:wass}. 

Among the baseline methods, MALA consistently fails across all dimensions due to severe mode trapping induced by the strong multi-modality of the target. RS and IS perform adequately in low-dimensional settings ($d \in \{4,16\}$), although RS already exhibits numerical instabilities at $d=32$, while IS remains effective up to this dimension. Beyond $d=32$, none of the classical samplers achieves satisfactory performance. 
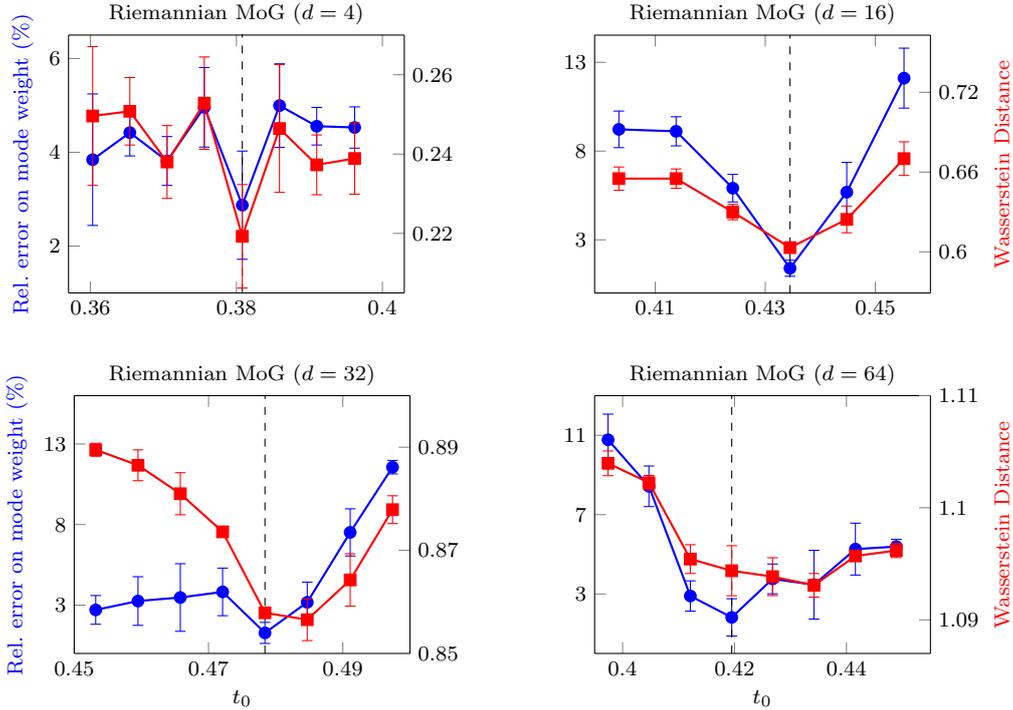
\begin{figure}[h!]
    \centering
    \begin{subfigure}[t]{0.45\textwidth}
        \centering
            \begin{tikzpicture}

\begin{axis}[
title style={yshift=-1.5ex},
    title = {Riemannian MoG ($d=4$)},
    name=leftaxis,
    width=6cm,
    height=5cm,
    ylabel={\textcolor{blue}{Rel. error on mode weight (\%)}},
    ymin=1., ymax=6.5,
    xmin=0.357, xmax=0.403,
    ytick={2,4,6},
    axis y line*=left,
    grid=none,
    legend style={at={(0.5,-0.2)}, anchor=north, legend columns=-1},
]

\addplot+[color=blue, thick, mark=*, every mark/.append style={color=blue}, error bars/.cd, y dir=both, y explicit] coordinates {
(0.3603, 3.842163) +- (0,1.402730)
(0.3654, 4.418945) +- (0,0.496753)
(0.3705, 3.814697) +- (0,0.520806)
(0.3756, 4.959106) +- (0,0.848778)
(0.3808, 2.871704) +- (0,1.154400)
(0.3859, 4.995728) +- (0,0.891923)
(0.3910, 4.556274) +- (0,0.401477)
(0.3962, 4.528809) +- (0,0.440215)
};
\addplot[dashed, black] coordinates {(0.3808, 0) (0.3808, 7)};
\end{axis}

\begin{axis}[
    name=rightaxis,
    width=6cm,
    height=5cm,
    at={(leftaxis.north west)},
    anchor=north west,
    ymin=0.205, ymax=0.27,
    xmin=0.357, xmax=0.403,
    ytick={0.22,0.24,0.26},
    axis y line*=right,
    axis x line=none,
    grid=none
]

\addplot+[color=red, thick, mark=square*, every mark/.append style={color=red}, error bars/.cd, y dir=both, y explicit] coordinates {
(0.3603, 0.249600) +- (0,0.017509)
(0.3654, 0.250789) +- (0,0.008504)
(0.3705, 0.238004) +- (0,0.009181)
(0.3756, 0.252845) +- (0,0.011642)
(0.3808, 0.219271) +- (0,0.013045)
(0.3859, 0.246444) +- (0,0.016096)
(0.3910, 0.237281) +- (0,0.007544)
(0.3962, 0.238892) +- (0,0.009003)
};

\end{axis}
\end{tikzpicture}
    \end{subfigure}
    \begin{subfigure}[t]{0.45\textwidth}
        \centering
            \begin{tikzpicture}

\begin{axis}[title style={yshift=-1.5ex},
title = {Riemannian MoG ($d=16$)},
    name=leftaxis,
    width=6cm,
    height=5cm,
    ymin=0., ymax=14.531,
    xmin=0.399, xmax=0.460,
    axis y line*=left,
    grid=none,
    xtick={0.41,0.43,0.45},
    ytick={3,8,13},
    legend style={at={(0.5,-0.2)}, anchor=north, legend columns=-1},
]

\addplot+[color=blue, thick, mark=*, every mark/.append style={color=blue}, error bars/.cd, y dir=both, y explicit] coordinates {
(0.4034, 9.222412) +- (0,1.030772)
(0.4138, 9.112549) +- (0,0.820713)
(0.4241, 5.908203) +- (0,0.777282)
(0.4345, 1.394653) +- (0,0.451774)
(0.4448, 5.682373) +- (0,1.679088)
(0.4552, 12.109375) +- (0,1.689936)
};
\addplot[dashed, black] coordinates {(0.4345, 0) (0.4345, 15)};
\end{axis}

\begin{axis}[
    name=rightaxis,
    width=6cm,
    height=5cm,
    at={(leftaxis.north west)},
    anchor=north west,
    ylabel={\textcolor{red}{Wasserstein Distance}},
    ymin=0.569, ymax=0.763,
    xmin=0.399, xmax=0.460,
    ytick={0.6,0.66,0.72},
    axis y line*=right,
    axis x line=none,
    grid=none
]

\addplot+[color=red, thick, mark=square*, every mark/.append style={color=red}, error bars/.cd, y dir=both, y explicit] coordinates {
(0.4034, 0.655064) +- (0,0.008750)
(0.4138, 0.655072) +- (0,0.007273)
(0.4241, 0.629917) +- (0,0.005804)
(0.4345, 0.603143) +- (0,0.001195)
(0.4448, 0.624392) +- (0,0.010071)
(0.4552, 0.670117) +- (0,0.012656)
};
\end{axis}
\end{tikzpicture}
    \end{subfigure}\par\medskip
    \begin{subfigure}[t]{0.45\textwidth}
        \centering
            \begin{tikzpicture}

\begin{axis}[title style={yshift=-1.5ex},
    title={Riemannian MoG ($d=32$)},
    name=leftaxis,
    width=6cm,
    height=5cm,
    xlabel={$t_0$},
    ylabel={\textcolor{blue}{Rel. error on mode weight (\%)}},
    ymin=0., ymax=16.,
    xmin=0.45, xmax=0.5,
    xtick={0.45,0.47,.49},
    ytick={3,8,13},
    axis y line*=left,
    grid=none,
    legend style={at={(0.5,-0.2)}, anchor=north, legend columns=-1},
]

\addplot+[color=blue, thick, mark=*, every mark/.append style={color=blue}, error bars/.cd, y dir=both, y explicit] coordinates {
(0.4532, 2.703857) +- (0,0.891594)
(0.4595, 3.253174) +- (0,1.509371)
(0.4658, 3.472900) +- (0,2.097577)
(0.4721, 3.820801) +- (0,1.476243)
(0.4784, 1.275635) +- (0,0.653817)
(0.4847, 3.173828) +- (0,1.254774)
(0.4911, 7.513428) +- (0,1.464271)
(0.4974, 11.560059) +- (0,0.419148)
};
\addplot[dashed, black] coordinates {(0.4784, 0) (0.4784, 17)};
\end{axis}

\begin{axis}[
    name=rightaxis,
    width=6cm,
    height=5cm,
    at={(leftaxis.north west)},
    anchor=north west,
    xlabel = $t_0$,
    ymin=0.85, ymax=0.9,
    xmin=0.45, xmax=0.5,
    ytick={0.85,0.87,0.89},
    axis y line*=right,
    axis x line=none,
    grid=none
]

\addplot+[color=red, thick, mark=square*, every mark/.append style={color=red}, error bars/.cd, y dir=both, y explicit] coordinates {
(0.4532, 0.889500) +- (0,0.001257)
(0.4595, 0.886496) +- (0,0.002990)
(0.4658, 0.880982) +- (0,0.004079)
(0.4721, 0.873585) +- (0,0.001051)
(0.4784, 0.857886) +- (0,0.000660)
(0.4847, 0.856552) +- (0,0.004060)
(0.4911, 0.864258) +- (0,0.005081)
(0.4974, 0.877893) +- (0,0.002695)
};
\end{axis}
\end{tikzpicture}
    \end{subfigure}
    \begin{subfigure}[t]{0.45\textwidth}
        \centering
            \begin{tikzpicture}

\begin{axis}[title style={yshift=-1.5ex},
title = {Riemannian MoG ($d=64$)},
    name=leftaxis,
    width=6cm,
    height=5cm,
    xlabel={$t_0$},
    ymin=0., ymax=13.,
    xmin=0.395, xmax=0.455,
    ytick={3,7,11},
    axis y line*=left,
    grid=none,
    legend style={at={(0.5,-0.2)}, anchor=north, legend columns=-1},
]

\addplot+[color=blue, thick, mark=*, every mark/.append style={color=blue}, error bars/.cd, y dir=both, y explicit] coordinates {
(0.3974, 10.772705) +- (0,1.295140)
(0.4047, 8.428955) +- (0,1.023263)
(0.4121, 2.899170) +- (0,0.763574)
(0.4195, 1.818848) +- (0,0.938672)
(0.4268, 3.759766) +- (0,0.750509)
(0.4342, 3.466797) +- (0,1.735546)
(0.4416, 5.261230) +- (0,1.307635)
(0.4489, 5.389404) +- (0,0.363915)
};
\addplot[dashed, black] coordinates {(0.4195, 0) (0.4195, 15)};
\end{axis}

\begin{axis}[
    name=rightaxis,
    width=6cm,
    height=5cm,
    at={(leftaxis.north west)},
    anchor=north west,
    ylabel={\textcolor{red}{Wasserstein Distance}},
    ymin=1.087, ymax=1.11,
    xmin=0.395, xmax=0.455,
    ytick={1.09,1.1,1.11},
    axis y line*=right,
    axis x line=none,
    grid=none
]

\addplot+[color=red, thick, mark=square*, every mark/.append style={color=red}, error bars/.cd, y dir=both, y explicit] coordinates {
(0.3974, 1.103963) +- (0,0.001104)
(0.4047, 1.102235) +- (0,0.000648)
(0.4121, 1.095419) +- (0,0.001285)
(0.4195, 1.094379) +- (0,0.002235)
(0.4268, 1.093852) +- (0,0.001690)
(0.4342, 1.093080) +- (0,0.001052)
(0.4416, 1.095696) +- (0,0.000465)
(0.4489, 1.096184) +- (0,0.000626)
};
\end{axis}
\end{tikzpicture}
    \end{subfigure}
    \caption{Performances of \frips{} for the bi-modal targets defined in \Cref{tab:weight}, as a function of $t_0$. \textcolor{blue}{Blue}: Average relative error on strongest mode weight estimation. \textcolor{red}{Red}: Wasserstein distance with respect to ground truth samples. For each setting, we observe a ''sweet spot'', \ie{} a range of $t_0$ where both metrics reach a minimum. The vertical dashed line highlights the value $t_0$ leading to the best mode weight estimation, corresponding to the values reported in \Cref{tab:weight}.}
    \label{fig:weight}
\end{figure}

In contrast, \frips{} delivers robust and accurate results across all dimensions, with an average relative error below $3\%$. More specifically, \frips{}-IS performs best in the lowest-dimensional setting ($d=4$), while \frips{}-MALA already significantly improves upon standard MALA, demonstrating the benefit of the proposed framework without additional algorithmic complexity. For $d \geq 16$, \frips{}-MALA consistently achieves the best performance, as IS and RS become ineffective posterior samplers, in line with expectations. Notably, these observations are consistent with the findings of \cite{grenioux_stochastic_2024} in the Euclidean setting.

\paragraph*{Study of $t_0$ hyperparameter} 

For each target configuration in \Cref{tab:weight}, we first conducted a grid search over $t_0$ to identify values yielding optimal performance at the end of the \frips{} procedure; see \Cref{fig:weight} for an illustration. To further elucidate the role of $t_0$, we apply \Cref{alg:mode_accuracy} in this setting by exploiting ``mode-blind'' samples, obtained by considering a balanced version of the target distribution. The resulting return accuracy curves, shown in \Cref{fig:return}, indicate a collapse time $t_C$ between $0.5$ and $0.6$, beyond which the accuracy rapidly approaches~$1$. According to the heuristic discussed in \Cref{sec:t0}, this suggests that the posterior distribution $\pi_{1|t_0}$ becomes effectively unimodal for $t_0 \geq t_C$. Selecting $t_0$ beyond this threshold is therefore unlikely to further improve posterior sampling quality, while potentially complicating the initialization step. Empirically, we observe that the optimal values of $t_0$ (see \Cref{fig:weight}) lie in the range $\tau_1 \in \ccint{0.9,0.99}$ for these targets. A theoretical analysis of this phenomenon is left for future work.

\input{simus/return_4}

\subsection{Mixture of Student's distributions via stereographic projection}
\label{sub:stereo}
To further assess the performance of \frips{} in challenging multi-modal sampling scenarios, we consider a bi-modal target distribution $\pi_1$ with heavy tails, defined on Euclidean space, \ie, $\R^d$ with $d \geqslant 1$. Such targets are notoriously difficult for standard Euclidean sampling methods, including diffusion-based approaches that rely on Gaussian noising schemes, which are poorly adapted to heavy-tailed distributions; see, for instance, \cite{shariatianheavy} in the context of generative modeling. In contrast with the previous experiment in \Cref{sub:sphere_exp}, $\pi_1$ is not originally defined on a Riemannian manifold, a choice made deliberately. To demonstrate the effectiveness of \frips{} in this setting, we adopt the strategy proposed in \cite{yang_stereographic_2024} and rather aim to sample from the push-forward of $\pi_1$ under the inverse stereographic projection (SP). This yields a bi-modal distribution, denoted $\pi_1^S$, supported on the $d$-sphere $\sphere^d$, where \frips{} can be naturally applied. The underlying rationale is that sampling from the resulting Riemannian distribution using geometry-aware tools is expected to be more easier in practice than sampling directly from its Euclidean counterpart. Samples generated on the sphere are subsequently mapped back to $\R^d$ to recover samples from $\pi_1$.

Concretely, we consider an \emph{unbalanced} mixture of two multivariate Student's $t$-distributions on $\R^d$, with dominant mode weight $w=2/3$. The un-normalized target density $q_1$ is given by
\begin{equation}\label{eq:target_student}
    x\in\R^d \mapsto (2/3)\left(1+\frac{1}{\nu}\left\Vert\frac{x-\mu_1}{\tau}\right\Vert^2\right)^{-\frac{\nu+d}{2}} +(1/3)\left(1+\frac{1}{\nu}\left\Vert\frac{x-\mu_2}{\tau}\right\Vert^2\right)^{-\frac{\nu+d}{2}}\eqsp,
\end{equation}
where $\mu_1,\mu_2 \in \R^d$ are the mode locations, $\tau=0.05$ is the scale parameter, and $\nu=1$ denotes the degree of freedom, taken identical for both modes for simplicity. In \Cref{app:stereo}, we derive the corresponding un-normalized density $q_1^S$ associated with $\pi_1^S$, along with details on the stereographic projection operator. Importantly, the projected distribution remains bi-modal with dominant mode weight unchanged at $w=2/3$.

To isolate the combined effect of heavy tails and multi-modality, we fix the dimension to $d=4$ and consider two regimes of increasing difficulty, taking inspiration from \cite{grenioux2025improving}: a \emph{low mode separation} regime with $\mu_1=(3/2,\ldots,3/2)$ and $\mu_2=(-3/4,\ldots,-3/4)$, and a \emph{high mode separation} regime with $\mu_1=(2,\ldots,2)$ and $\mu_2=(-1,\ldots,-1)$. To highlight the benefit of combining \frips{} with SP, we compare its performance to a naive Euclidean counterpart that directly targets $\pi_1$ using \frips{} in $\rset^d$ with a Gaussian initial distribution $\pi_0$, which corresponds to the deterministic variant of the algorithm proposed in \cite{grenioux_stochastic_2024}. Both approaches use MALA for posterior sampling (\Cref{alg:sample_ula}).
We also include standard Monte Carlo baselines, namely MALA on $\R^d$, as well as MALA combined with SP, which can be viewed as a first-order instance of the approach in \cite{yang_stereographic_2024}, who originally consider a Random-Walk Metropolis-Hastings scheme.

Beyond assessing multi-modality through mode weight estimation, we additionally evaluate the ability of the samplers to capture heavy tails using a metric specifically designed for this purpose: the mean squared logarithmic error (MSLE), computed with respect to ground-truth samples; see, \eg, \cite[Section~3.2]{allouche_ev-gan_2022}. This metric quantifies the accuracy of tail sampling through a hyperparameter $\xi \in (0,1)$, corresponding to the recovery of the $\xi$th percentile of the distribution. In practice, we set $\xi=0.99$. All metrics are systematically evaluated in $\R^d$.

\paragraph*{Main results} 

The results are reported in \Cref{fig:student}, which displays the performance of \frips{} with stereographic projection (orange) and its Euclidean counterpart (blue) as a function of $t_0 \in [0,0.9]$. The MALA baselines, shown as dashed lines, are independent of $t_0$ and use the same color coding. Two main observations emerge. 

First, within each setting—Euclidean or Riemannian—there exists a value of $t_0$ for which \frips{} outperforms MALA simultaneously in terms of multi-modality and heavy-tail recovery. These optimal values differ markedly across settings: they correspond to relatively small $t_0$ values in the Euclidean case (around $t_0 \approx 0.2$) and much larger values in the Riemannian case (around $t_0 \approx 0.8$), highlighted by vertical dashed lines. 

This discrepancy reflects fundamentally different behaviors of the two algorithms. As expected, increasing the distance between modes exacerbates the difficulty of multi-modal sampling, as evidenced by the degradation of MALA’s mode weight estimates. More importantly, while the Euclidean and Riemannian versions of \frips{} exhibit comparable performance in recovering mode proportions, the Euclidean variant performs significantly worse in capturing heavy tails, as measured by MSLE. In contrast, \frips{} combined with SP achieves consistently strong tail performance, largely invariant to the degree of mode separation. These results underscore the advantage of the Riemannian formulation of \frips{} for sampling from distributions that are both multi-modal and heavy-tailed.

\begin{figure}
    \centering
\input{simus/stereo2}
\label{fig:student}
\end{figure}

\subsection{Mixture of Gaussian distributions on the Grassmann manifold}
\label{sub:grass_exp}
To further illustrate the generality of our approach, we apply \frips{} to multi-modal distributions defined on the Grassmann manifold. These matrix manifolds arise in a wide range of applications, including dimensionality reduction \cite{holbrook_bayesian_2016} and shape analysis \cite{hong_regression_2017,tong_diverse_2026}. Given $n>p\geqslant1$, we denote by $\grass$ the Grassmann compact manifold with dimension $d=p(n-p)$, whose elements represent $p$-dimensional linear subspaces of $\R^n$.
We refer the reader to \cite{bendokat_grassmann_2020} for an overview of the geometry of $\grass$, and summarize the relevant properties and formulas in \Cref{app:grassmann}. 

As a representative target example, we choose the same family of \emph{unbalanced} Riemannian MoG as in \eqref{eq:mog_target}, with $\sigma = 0.25$, $\mu_1$ and $\mu_2$ two points on $\grass$ separated by a prescribed distance $\dist(\mu_1,\mu_2)\in\{2.18,2.45\}$ respectively corresponding to a \emph{low mode separation} and \emph{high mode separation} regime. Experiments are carried out on the $15$-dimensional manifold $\text{Gr}(8,3)$, still assessing the performance of samplers via the statistical estimation of the dominant mode weight $w=2/3$ in both regimes. Similar to the sphere case, $\pi_0$ is the uniform distribution. 

\paragraph*{Computation of the initial state}
In contrast with the sphere experiments of \Cref{sub:sphere_exp}, we rely on the pseudo-marginal MCMC scheme described in \Cref{alg:pseudo_mcmc} to compute the initial state at time $t_0$. This is necessitated by the lack of regularity of the conditional density $p_{t_0|1}$ in \eqref{eq:pt1} near the boundary of its support, a phenomenon induced by the geometry of $\grass$; see \Cref{app:grassmann} for further discussion. Empirically, this initialization procedure performs well in considered moderate dimension $d=15$. However, we expect its scalability to be deteriorated in higher dimensions compared to \Cref{alg:ula}. Extending \Cref{alg:ula} to Grassmann manifolds could allow larger values of $d$ to be addressed while maintaining accurate sampling, similar to the sphere case of \Cref{sub:sphere_exp}. We leave this extension for future work.

\begin{figure}
\centering
\begin{tikzpicture}
\begin{groupplot}[group style={group size=2 by 1},height=5cm,width=7cm]

\nextgroupplot[title style={yshift=-1.5ex},
title = {Riemannian MoG (low mode separation)},
    ylabel={Rel. error on mode weight (\%)},
    xlabel ={$t_0$},
    legend style={font=\small, at={(0.5,1.05)}, anchor=south, legend columns=2},
    ymin=0,
    ymax = 20,
    grid style=dashed,
]
\addplot+[color=blue, thick, mark=*, every mark/.append style={color=blue}, error bars/.cd, y dir=both, y explicit] coordinates {
(0.100000, 16.200380) +- (0,0.823373)
(0.150000, 15.118976) +- (0,0.751931)
(0.200000, 14.514123) +- (0,1.305603)
(0.250000, 12.278001) +- (0,1.103697)
(0.300000, 10.243495) +- (0,1.101411)
(0.350000, 8.538910) +- (0,0.743619)
(0.400000, 5.991196) +- (0,0.954686)
(0.450000, 3.635936) +- (0,1.020961)
(0.500000, 2.242941) +- (0,0.572246)
(0.550000, 3.837553) +- (0,0.893568)
(0.600000, 4.359926) +- (0,0.245908)
(0.650000, 4.387420) +- (0,0.570482)
(0.700000, 5.752921) +- (0,0.813934)
(0.750000, 6.990120) +- (0,0.637242)
(0.800000, 7.485000) +- (0,1.323080)
(0.850000, 8.548074) +- (0,0.771284)
(0.900000, 8.474759) +- (0,0.545188)
};
\addplot[dashed, black] coordinates {(0.5, 0) (0.5, 20)};
\addplot[thick,blue, dashed] coordinates {(0.1,11.29) (0.9,11.29)};

\nextgroupplot[title style={yshift=-1.5ex},
title = {Riemannian MoG (high mode separation)},
    xlabel ={$t_0$},
    legend style={font=\small, at={(0.5,1.05)}, anchor=south, legend columns=2},
    ymin=0,
    ymax = 20,
    grid style=dashed,
]
\addplot+[color=blue, thick, mark=*, every mark/.append style={color=blue}, error bars/.cd, y dir=both, y explicit]
coordinates {
(0.100000, 18.573969) +- (0,0.800355)
(0.150000, 15.247279) +- (0,0.738178)
(0.200000, 14.477466) +- (0,0.729019)
(0.250000, 12.882854) +- (0,0.279778)
(0.300000, 10.032713) +- (0,1.296404)
(0.350000, 7.640795) +- (0,0.676184)
(0.400000, 5.972867) +- (0,0.894554)
(0.450000, 4.589037) +- (0,1.079810)
(0.500000, 4.231624) +- (0,0.526457)
(0.550000, 3.177714) +- (0,1.013033)
(0.600000, 3.333509) +- (0,1.364240)
(0.650000, 4.900628) +- (0,0.656458)
(0.700000, 6.192814) +- (0,0.982606)
(0.750000, 5.853730) +- (0,1.414385)
(0.800000, 7.448342) +- (0,0.599203)
(0.850000, 8.465595) +- (0,0.347766)
(0.900000, 9.711958) +- (0,1.785914)
};

\addplot[dashed, black] coordinates {(0.55, 0) (0.55, 20)};
\addplot[thick,blue, dashed] coordinates {(0.1,17.79)(0.9,17.79)};

\end{groupplot}
\end{tikzpicture}
    \caption{Performances of \frips{} (solide \textcolor{blue}{blue} line) and MALA (dashed \textcolor{blue}{blue} line) instances for the bi-modal Gaussian targets \eqref{eq:mog_target} on the Grassmann manifold $\text{Gr}(8,3)$, as a function of $t_0$, in \emph{low mode separation} regime (left) and \emph{high mode separation} regime (right). The vertical dashed line indicates the value of $t_0$ that yields the lowest relative error on the dominant mode weight estimation.}
\label{fig:grassmann}
\end{figure}
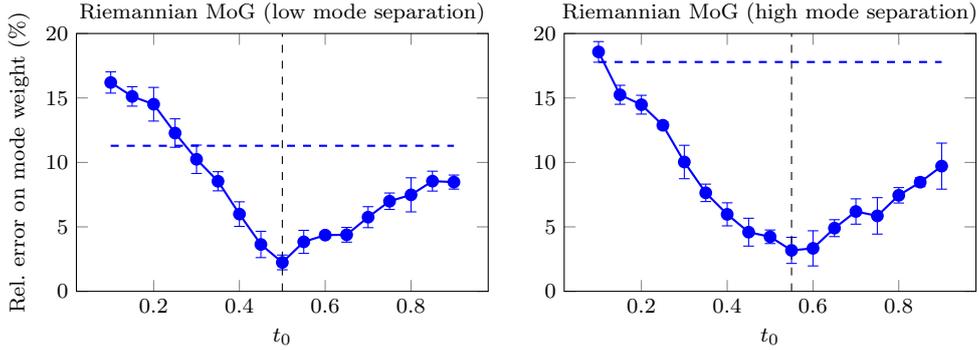

\paragraph*{Main results}
Following the same protocol as in \Cref{sub:sphere_exp}, we run \frips{} for a large range of $t_0$ values, relying on MALA as posterior sampling backbone (\Cref{alg:sample_ula}), and compare to the naive MALA baseline. As shown in \Cref{fig:grassmann}, FRIPS recovers well the true mixture weight $w$, achieving an average relative error below $3\%$ when $t_0$ is within a suitable range, independently of the multi-modality regime. In line with the results obtained on $\sphere^d$, \frips{} consistently outperforms naive MALA sampling for comparable computational budgets, highlighting its effectiveness when targeting multi-modal distributions across different manifolds.

\section{Conclusion}

In this paper, we studied the problem of sampling from multi-modal distributions defined on Riemannian manifolds when only an un-normalized density is available. Our approach draws inspiration from generative modeling, where stochastic interpolants \cite{albergo_stochastic_2023} have emerged as an effective framework for handling multi-modality. Leveraging recent extensions of these ideas to Riemannian manifolds \cite{chen_flow_2024}, we introduced a novel sampling methodology, termed Flow-based Riemannian Iterative Posterior Sampling (FRIPS). The proposed method is entirely training-free, relying solely on Monte Carlo techniques, and can be viewed as a Riemannian analogue of the Euclidean approach developed in \cite{grenioux_stochastic_2024}. To support our methodology, we developed a rigorous theoretical framework for Riemannian stochastic interpolants tailored to the sampling setting and provided practical guidelines for instantiating FRIPS. We demonstrated the effectiveness of the approach on a range of challenging multi-modal targets defined on the $d$-sphere and Grassmann manifolds, where traditional Monte Carlo samplers fail, highlighting FRIPS robustness to pronounced multi-modality and increasing dimensionality. One remaining limitation concerns the choice of the initial time of the denoising dynamics that underlies FRIPS, which currently requires a grid search to achieve optimal performance. Nevertheless, our empirical findings suggest a structured dependence of this hyperparameter with respect to the target distribution, opening the door to principled tuning strategies. Developing such automated and practical selection procedures based on theoretical investigation is an important direction for future work.

\begin{acks}[Acknowledgments]
The authors would like to thank Louis Grenioux for insightful discussion.
Part of this work has been supported by the PRIMaL project (Carnot Institute).
\end{acks}

\newpage
\bibliographystyle{imsart-number.bst}
\bibliography{biblio_clean.bib}

\newpage
\begin{appendix}

\section{Background on Riemannian geometry}
\label{app:geometry}

To complete the brief introduction to Riemannian geometry of \Cref{sub:geometry},
 we properly introduce some notions used in the paper. We refer to \cite{lee_introduction_2012, sakai_riemannian_1996} for basics on this topic.

\subsection{Riemannian structure and geodesics}
Assume $\msm$ is a $d$-dimensional smooth, connected manifold. 
\paragraph*{Smooth manifold structure} By definition, there exist a (maximal) \emph{atlas} on $\msm$, \ie{} a collection of compatible \emph{coordinate charts} whose domains cover the whole manifold. A coordinate chart at $x\in\msm$ is a homeomorphism $\varphi : \msu \rightarrow \msu'$, where $\msu$ is an open subset of $\msm$ that contains $x$ and $\msu'$ is an open subset of $\R^d$. By compatible, we mean that the compositions $\varphi\circ\psi^{-1}, \psi\circ\varphi^{-1}$ are diffeomorphisms (in the usual $\R^d$ sense), where $\varphi,\psi$ are two charts with overlapping domains. A function $f$ between two smooth manifolds is called smooth if its composition with the appropriate coordinate charts is, in the usual $\R^d$ sense.

We call \emph{local coordinates} the components of $\varphi = (x^1,...,x^d)$. Given a point $z\in\msm$, the \emph{tangent space at $z$} is a $d$-dimensional vector space spanned by the \emph{coordinate vectors} in any local coordinate system. For every $1\leqslant i\leqslant d$ and every smooth function $f : \msm \rightarrow \R$, coordinate vectors are defined by
\begin{equation}
    \left.\frac{\partial}{\partial x^i}\right\vert_z f = \partial_i\left[f\circ\varphi^{-1}\right]_{|\varphi(z)}\eqsp,
\end{equation}
where $\partial_i$ denotes the usual partial derivative in $\R^d$. The \emph{differential of $f$ at  $z$}, denoted $\rmd f_z: T_z\msm \rightarrow \R$, is the linear form defined by $\rmd f_z\left(\frac{\partial}{\partial x^i}\right) = \left.\frac{\partial}{\partial x^i}\right\vert_{z}f, 1\leq i \leq d$. The \emph{Riemannian gradient} $\grad f(z) \in T_z\msm$ is the only tangent vector at $z$ satisfying $\rmd f_z = \langle \grad f(z),\,\cdot\,\rangle_z $.

\paragraph*{Metric and vector fields} A \emph{Riemannian metric} $z\in\msm\mapsto\metric_z$ defines an inner product in $T_z\msm$ for every $z\in\msm$, varying smoothly in $z$, and denoted $\ps{v}{w}_z$ for every $v,w\in T_z\msm$.

A (smooth) \emph{vector field} $X$ is a (smooth) map from $\msm$ to the tangent bundle $T\msm$, \ie{} the disjoint union of all tangent spaces. In local coordinates, a vector field can be written as $X_z = X^i_z \left.\frac{\partial}{\partial x^i}\right\vert_{z}$ for every $z\in\msm$. The \emph{Riemannian divergence} $\text{div}X$ is defined in local coordinates by
\begin{equation}
\text{div}\left(X^i\frac{\partial}{\partial x^i}\right) = \frac{1}{\sqrt{\det g}}\frac{\partial}{\partial x^i}\left(X^i\sqrt{\det g}\right)\eqsp,
\end{equation}
where $\det g$ denotes the determinant of the Gram matrix $g$ of $\metric$ for these coordinate vectors, whose elements are denoted $g_{i,j} = \ps{\frac{\partial}{\partial x^i}}{\frac{\partial}{\partial x^j}}$.

The \emph{Levi-Civita connection} $\nabla$ \cite[Chapter 5]{lee_introduction_2012} associated to $\metric$ defines the differentiation of a vector field with respect to another: given two vector fields $X,Y$, $\nabla_X Y$ outputs a vector field. Given a smooth curve $t\in I\subset \R \mapsto \upgamma(t)\in \msm$, a vector field $X$ along $\upgamma$ is \emph{parallel} if $\nabla_{\dot{\upgamma}}X=0$, where for every $t\in I, \dot{\upgamma}(t)\in T_{\upgamma(t)}\msm$ is the velocity field of $\upgamma$. Given $a\in I$ and a tangent vector $v\in T_{\upgamma(a)}$, a vector field $X$ along $\upgamma$ satisfying $X_{\upgamma(a)} = v$ and $\nabla_{\dot{\upgamma}}X = 0$ is called the \emph{parallel transport of $v$ along $\upgamma$}.

\paragraph*{Geodesics and completeness}
\emph{Geodesics} are smooth functions $t\in I\subset \R \mapsto \upgamma(t)\in \msm$ satisfying $\nabla_{\dot{\upgamma}}\dot{\upgamma} = 0$, which results in the geodesic equation in local coordinates \cite[Chapter 6]{lee_introduction_2012}. Given $a\in I, v\in T_{\upgamma(a)}\msm$, $t\in I\mapsto \tau_{\upgamma(a),v}(t)\in T_{\upgamma(t)}\msm$ denotes the parallel transport of $v$ along $\upgamma$ at $\upgamma(t)$. 

In \Cref{ass:1}, we suppose that $\msm$ is complete, that is, metric complete for the Riemannian distance associated with $\metric$. According to the Hopf-Rinow theorem \cite[Theorem 6.19]{lee_introduction_2012}, metric completeness is actually equivalent to geodesic completeness for a connected manifold: every geodesic can be extended to $\R$, and every pair of points in $\msm$ can be connected by at least one minimizing geodesic.

\subsection{Jacobi fields and spherical coordinates}
\paragraph*{Jacobi fields}
Fix $z\in\msm$ and $v\in T_z\msm$, and let $\upgamma_{z,v}$ denote the geodesic starting at $z$ with initial velocity $v$, \ie{}, $\upgamma_{z,v}(0)=z$ and $\dot{\upgamma}_{z,v}(0)=v$, so that $\upgamma_{z,v}(t) = \Exp_z(tv)$ (the exponential map has been defined in \Cref{sub:geometry}). A \emph{Jacobi field} along $\upgamma_{z,v}$ (see \cite[Section II.5]{chavel_riemannian_2006}, \cite[Chapter 10]{lee_algorithmic_2018}) is a smooth vector field $t\in\coint{0,+\infty} \mapsto J(t)\in T_{\upgamma_{z,v}(t)}\msm$ satisfying the Jacobi equation
\begin{equation}
    \nabla_t^2 J + \curv(\dot{\upgamma}_{z,v},J)\dot{\upgamma}_{z,v}=0\eqsp,
\end{equation}
with $\curv$ the curvature tensor \cite[Section I.4]{chavel_riemannian_2006} and $\nabla_t$ the covariant derivative. Since $\nabla_t\dot{\upgamma}_{z,v} = 0 = \curv(\dot{\upgamma}_{z,v}, \dot{\upgamma}_{z,v})$, $\dot{\upgamma}_{z,v}$ is a Jacobi field, as well as $t\mapsto t\dot{\upgamma}_{z,v}(t)$. The corresponding trivial solutions, which lie in the subspace $\R\dot{\upgamma}_{z,v}(t)\subset T_{\upgamma_{z,v}(t)}\msm$ at each $t$, are called \emph{tangential} Jacobi fields. The other solutions, orthogonal to $\dot{\upgamma}_{z,v}(t)$ for any $t\geq 0$, are called \emph{normal} Jacobi fields.

Jacobi fields relate to the differential of the exponential map: from \cite[Proposition 10.10]{lee_introduction_2018}, the Jacobi field along $\upgamma$ with initial condition $J(0) = 0, \nabla_t J(0) = w$ reads
\begin{equation}
\label{eq:jacobi_exp}
    J(t) = \rmd {\Exp_z}_{\vert t v} t w\eqsp.
\end{equation}

If $\msm$ has constant sectional curvature $\kappa$  \cite[Proposition 8.29]{lee_introduction_2018}, Jacobi fields admit a closed-form expression \cite[Proposition 10.12]{lee_introduction_2018}. This condition holds \eg{} for the Euclidean space ($\kappa=0$), and the unit sphere $\sphere^d$ ($\kappa=1$). 
Normal Jacobi fields are of the form
\begin{equation}\label{eq:jacobi_cst}
    J(t) \propto s_{\kappa}(t)\bfe(t)\eqsp,
\end{equation}
with $\bfe$ a parallel unit normal vector field along $\upgamma_{z,v}$, \ie{} obtained as the parallel transport along $\upgamma$ of a unit tangent vector of $T_{z}\msm$, orthogonal to the initial velocity vector $\dot{\upgamma}_{z,v}(0)=v$. The function $s_{\kappa}$ is defined as $t\mapsto \sin(\sqrt{\kappa}t)/\sqrt{\kappa}$ if $\kappa>0$, $t\mapsto \sinh(\sqrt{-\kappa}t)/\sqrt{-\kappa}$ if $\kappa<0$, and $t\mapsto t$ if $\kappa=0$. 

\paragraph*{Spherical coordinates and Riemannian  measure}
Following \cite[Chapter 3]{chavel_riemannian_2006}, we provide a complete construction of the spherical coordinates. These coordinates allow us to write a practical expression of the \emph{Riemannian measure} $\Rvol$, proving \eqref{eq:change-variable}.
We repeat the corresponding paragraph of \Cref{subsec:posterior}, with additional details.

By definition, the Riemannian measure $\Rvol$ is constructed as follows; see also \cite[Section III.3]{chavel_riemannian_2006}. Given a chart $\varphi = (x^1,...,x^d)\,:\,\msu \rightarrow \R^d$, where $\msu$ is an open subset of $\msm$, and a function $f\,:\, \msm \rightarrow \R$ such that $f\circ x^{-1}$ is measurable, the expression

\begin{equation}
    \int_{\varphi(\msu)} \left(f\sqrt{\det g}\right)\circ \varphi^{-1}\rmd \lambda\eqsp,
\end{equation}
with $\lambda$ the Lebesgue measure on $\R^d$, does not depend on the specific chart but only on $\msu$ and $f$. Consequently, it defines $\int_{\msu}f\rmd \Rvol$.
Then, by picking any atlas on $\msm$, along with a partition of unity, the measure $\Rvol$ is extended to the whole manifold.

Although this construction of $\Rvol$ is valid for any chart, spherical coordinates yield a particularly convenient form for functions expressed along geodesics.
Fix $z\in\msm$. Recall that, under \Cref{ass:1}, the map $\Exp_z$ is a diffeomorphism from $\ID(z)\subset T_z\msm$, defined in \eqref{eq:def_id_x} onto $\msm\setminus\cut(z)$. By restriction to $\ID^*(z) = \ID(z)\setminus\{0\}$, we obtain a diffeomorphism onto $\msm^z = \msm\setminus\{\cut(z) \cup \{z\}\}$. Denote $\Smsm{z}\subset T_z\msm$ the unit sphere of $T_z\msm$.
Then, it holds that $  \ID^*(z) = \{  t\xi\,:\,\xi\in \Smsm{z}, 0< t<\cuttime_z(\xi)\}$.

Let $\Dmsm{z}\subset \Smsm{z}  \times \ooint{0,+\infty} $ denotes the set of pairs $(\xi,t)$ such that $t\xi\in\ID^*(z)$.
The map $(\xi,t) \mapsto \varphi_z(\xi,t) = \Exp_z t\xi$ is a diffeomorphism from $\Dmsm{z}$ onto $\msm^z$,   
whose inverse writes
\begin{equation}
    x\in\msm^z\mapsto \varphi^{-1}_z(x) = \left(\frac{\Log_z x}{\dist(z,x)},\dist(z,x)\right)\eqsp.
\end{equation}
Since $\Rvol(\cut(z)\cup\{z\})=0$, almost every point in $\msm$ can be parameterized in this way.

To construct an atlas on 
$\msm^z$, we pick a chart $u=(u^1,...,u^{d-1})$ mapping an open subset $\mso\subset \Smsm{z}$ to $\msu \subset \R^{d-1}$, and denote $\xi=u^{-1} : \msu \rightarrow \mso$.
Then, $(v,t) \mapsto \Exp_z t\xi(v)$ is a diffeomorphism from $\msu' \coloneq(u\times\Id)(\Dmsm{z}\cap \mso \times \ooint{0,+\infty})$ onto $\mso' \coloneq \varphi_z(\Dmsm{z}\cap \mso \times \ooint{0,+\infty})$.
The inverse map provides a chart $x = (x^1,...,x^d) : \mso' \rightarrow \msu'$ on $\msm^z$, namely
\begin{equation}
    x^{i} = u^{i}\circ\left(\frac{\Log_z}{\dist(z,\cdot)}\right), 1\leq i\leq d-1; \qquad x^d = \dist(z,\cdot)\eqsp.
\end{equation}
Then, by definition of $\Rvol$, an integral on $\mso'\subset\msm^z$ against $\Rvol$ can be computed as an integral on $\msu'\subset\R^d$, against the Lebesgue measure.
For every $\Rvol$-integrable function $f$,
\begin{align}
    \int_{\mso'} f \rmd \Rvol &= \int_{\msu'} \left(f\sqrt{\det g}\right)\circ x^{-1}(v,t)\rmd t \rmd v \\
\label{eq:change_of_measure}&= \int_{\msu}\int_0^{\cuttime_z(\xi(v))}f(\Exp_z t\xi(v))\sqrt{\det\left\langle \frac{\partial}{\partial x^i},\frac{\partial}{\partial x^j} \right\rangle_{\Exp_z t\xi(v)}}\rmd t \rmd v \eqsp.
\end{align}
To integrate over the entire manifold $\msm$ (equivalently over $\msm^z$), one selects an atlas on $\Smsm{z}$ and constructs a corresponding atlas on $\msm^z$ using the procedure described for a single chart $u$. Using a partition of unity, we can write the change of measure formula on $\msm^z$ using the expression \eqref{eq:change_of_measure} for a specific chart: see \cite[Chapter 16]{lee_introduction_2012} for details regarding integration on manifolds.

Now, still following \cite[Chapter III]{chavel_riemannian_2006}, we make explicit the determinant factor of \eqref{eq:change_of_measure} using Jacobi fields. 
Consider the coordinate vector fields $\frac{\partial}{\partial u^i}, \frac{\partial}{\partial t}$ on $\Dmsm{z}$, and write $\partial_i \xi \coloneqq \frac{\partial}{\partial u^i}$ for every $1\leqslant i\leqslant d-1$.

It follows from \eqref{eq:jac_exp} that for every $1\leq i\leq d-1$,
\begin{equation}\left.\frac{\partial}{\partial x^i}\right\vert_{\Exp_z t\xi} = J_{i}(t,\xi)\eqsp,
\end{equation}
with $J_{i}$ the Jacobi field along $ \upgamma_{z,\xi}$ satisfying the initial condition $J_{i}(0,\xi) = 0$, $\nabla_tJ_{i} (0,\xi) = \partial_{i}\xi$. Furthermore, $\left.\frac{\partial}{\partial x^d}\right\vert_{\Exp_z t\xi} = \dot{\upgamma}_{z,\xi}$. Hence, in these coordinates, the metric coefficient $g_{i,j}$ along $\upgamma_{z,\xi}$ is computed as
\begin{equation}\label{eq:metric_jac}
    \left\langle \frac{\partial}{\partial x^i},\frac{\partial}{\partial x^j} \right\rangle_{\Exp_z t\xi} = \langle J_{i}(t,\xi), J_{j}(t,\xi)\rangle_{\Exp_z t\xi} \eqsp,
\end{equation}
for $1\leq i,j\leq d-1$. As for $\frac{\partial}{\partial x^d}$, since it corresponds to the tangential Jacobi field $\dot{\upgamma}_{z,\xi}$, we simply have $\Vert\frac{\partial}{\partial x^d}\Vert =\Vert\xi\Vert= 1$ by definition, and $\langle\frac{\partial}{\partial x^d},\frac{\partial}{\partial x^i}\rangle = 0$ by \cite[Equation II.5.7]{chavel_riemannian_2006}. In contrast, the Jacobi fields $J_{i}$, $1\leq i\leq d-1$, are normal Jacobi fields.

The Jacobi equation can be restated as a differential equation in the tangent space $T_z\msm$, by introducing the linear map $\mcr_{z,\xi}(t)\colon T_z\msm \to T_z\msm$ defined as
\begin{equation}
    \mcr_{z,\xi}(t)v = \tau_{z,\xi}(t)^{-1} \parentheseDeux{ \curv\left(\dot{\upgamma}_{z,\xi}(t),\tau_{z,\xi}(t) v\right)\, \dot{\upgamma}_{z,\xi}(t)}\eqsp,
  \end{equation}
for any $v\in T_z\msm$.
We consider the associated linear second-order ODE
\begin{equation}\label{eq:ode_A_supp}
    \ddot{\scrA}_{z,\xi} + \mcr_{z,\xi}(t)\scrA_{z,\xi} = 0\eqsp,
\end{equation}
with initial conditions $\scrA_{z,\xi}(0)=0$ and $\dot{\scrA}_{z,\xi}(0)=\Id$.

Using the symmetries of the curvature tensor (see \cite[Equation I.4.4]{chavel_riemannian_2006}), the operator $\mcr_{z,\xi}(t)$ vanishes on $\R\xi$ for all $t\geq 0$. Consequently, any solution $\scrA_{z,\xi}$ of \eqref{eq:ode_A_supp} can be viewed as an endomorphism of $\xi^{\perp}\subset T_z\msm$, the orthogonal complement of $\R\xi$. Denote $(t,\xi)\in (0,+\infty)\times \Smsm{z} \mapsto \scrA_{z,\xi}(t)$ the solution of \eqref{eq:ode_A} associated with $\xi$.

We then have, for every $1\leq i\leq d-1$,
\begin{equation}
    \left.\frac{\partial}{\partial x^i}\right\vert_{\Exp_z t\xi} = J_{i}(t,\xi) = \tau_{t,\xi}\scrA_{z,\xi}(t)\partial_{i}\xi\eqsp.
\end{equation}
It follows that the square root of the determinant of \eqref{eq:metric_jac}, that appears in the change of measure formula \eqref{eq:change_of_measure}, is equal to  $\det\scrA(t,\xi(v))$ \cite[Theorem III.3.1]{chavel_riemannian_2006}. Rather than integrating using charts on $\Smsm{z}$, we can integrate directly over $\Smsm{z}$ using the measure $\mu_z$ induced from the Euclidean Lebesgue measure. We have
\begin{align}
    \int_{\msu'}f\rmd \Rvol &= \int_{\msu}\int_0^{\cuttime_z(\xi(v))}f(\Exp_z t\xi(v))\det\scrA_{z,\xi(v)}(t)\rmd t \rmd v \\
    &= \int_{\mso}\int_0^{\cuttime_{x_1}(\xi)} f(\Exp_z t\xi)\det\scrA_{z,\xi}(t)\rmd t \rmd \mu_z(\xi)\eqsp,
\end{align}
which is then extended to the whole manifold as
\begin{equation}
    \int_{\msm}f\rmd \Rvol
    = \int_{\Smsm{z}}\int_0^{\cuttime_{x_1}(\xi)} f(\Exp_z t\xi)\det\scrA_{z,\xi}(t)\rmd t \rmd \mu_z(\xi)\eqsp.
\end{equation}

\paragraph*{Computing $\det\scrA$}
If $\msm$ has constant sectional curvature $\kappa$, normal Jacobi fields are given by \eqref{eq:jacobi_cst}: it follows that $\scrA_{z,\xi}(t) = s_{\kappa}(t)I_{d-1}$, with $I_{d-1}$ the identity matrix in dimension $d-1$, hence $\det\scrA_{z,\xi}(t) = s_{\kappa}(t)^{d-1}$ for any $(t,\xi)\in \Smsm{z}$.

If $\msm$ is a locally symmetric space (see \cite[Chapter 10]{lee_introduction_2018}), $\det \scrA$ is available in closed form as well, through the Jacobian of the exponential map \cite{chevallier_exponential-wrapped_2022}. Given $v\in T_z\msm$ a non-zero vector, it holds
\begin{equation}
\label{eq:jac_exp_detA}
    \Jac_{z,v} \coloneqq \det \left[\tau_{z,v}(1)^{-1}\circ \rmd {\Exp_z}_{|v}\right] = \frac{1}{\Vert v\Vert^{d-1}}\det \scrA_{z,v/\Vert v\Vert}\left(\Vert v\Vert\right)\eqsp,
\end{equation}
where the composition with the isometry $\tau_{z,v}(1)^{-1}$ allows for a non-ambiguous definition of the determinant. On locally symmetric spaces, the left-hand term is explicit (\cite[Theorem 4.1]{chevallier_exponential-wrapped_2022}):
\begin{equation}\label{eq:jac_exp}
    \Jac_{z,v} = \prod_{i = 1}^d \left(\frac{\sinh(\sqrt{-\lambda_i(R_u)})}{\sqrt{-\lambda_i(R_u)}}\right)^{n_i}\eqsp,
\end{equation}
with $\lambda_i(R_u)$ the $i$-th complex eigenvalue (and $n_i$ its algebraic multiplicity) of the linear map $v\in T_z\msm \mapsto R_u(v) = \curv(v,u)u \in T_z\msm$.

\section{Experiments details}
\label{app:expe}
In this section, we provide additional information regarding the practical implementation of \frips{} for the three examples we consider in \Cref{sec:expe}.

\subsection{Algorithmic details}
\label{app:details}
\paragraph*{Initializing the MCMC chains} When using an MCMC method for posterior sampling, \eg{} MALA, the iterative nature of \frips{} allows for some simplifications. As already mentioned in \Cref{sec:mcmc_method}, once the first MCMC chain is run, we can reuse it to initialize the next one. However, we must keep in mind that the posterior density may have a constrained support, due to the expression of $p_{t|1}$ \eqref{eq:pt1}.
If, given a state $(t_k,X_{k})$ of \Cref{alg:1}, the final MCMC sample $x_1$ obtained from the previous chain at time $t_{k-1}$ does not satisfy the support constraint $X_k\in\mso_{t_k}(x_1)$, due to the update of $t_k, X_k$, we may project it within the appropriate domain. From \Cref{prop:diffeo}, the condition $X_k\in\mso_{t_k}(x_1)$ rewrites as
\begin{equation}
    \dist(X_k,x_1) < (1-t_k)\cuttime_{x_1}\left(\frac{\Log_{x_1}X_k}{\dist(x_1,X_k)}\right)\eqsp.
\end{equation}
Thus, replacing $x_1$ by
\begin{equation}
\upgamma_{X_k}^{x_1}(s) = \Exp_{x_1}\left((1-s)\Log_{x_1}X_k\right)\eqsp,
\end{equation}
with $s$ slightly smaller than $(1-t_k)\cuttime_{x_1}\left(\frac{\Log_{x_1}X_k}{\dist(x_1,X_k)}\right)/\dist(x_1,X_k)$ provides a plausible point to initialize the next chain. Similarly, we use this projection procedure to reuse the MCMC samples when running the RLA initialization (\Cref{alg:ula}).

The question of initializing the very first MCMC chain remains. Given a starting point $X_0$, if we initialize the MCMC chain targeting $\pi_{1|t_0}(\cdot|X_0)$ arbitrarily (for instance, distributed as $\pi_0$), it might lie outside of the support. One solution is to resort to the projection procedure we just described. Alternatively, we can sample a random vector $v$  within an appropriate disk in $T_{X_0}\msm$, and set $x_1 = \Exp_{X_0}v$. If $\msm=\sphere^d$, we can pick $\Exp_{X_0}v$, with $\Vert v\Vert < (1-t_0)\pi$ chosen uniformly, as an initial state, which is what we do in practice.

\paragraph*{MALA on manifold}
In our numerical experiments, we include a Metropolis-Hasting step after the usual RLA proposal (\Cref{alg:sample_ula}). While this step usually induces a small bias on non-trivial manifold, we find it empirically improves the sampling performance. Furthermore, it allows for geometrically increase/decrease the step-size $\delta$ to maintain a given acceptance rate. Since the posterior densities \eqref{eq:p1t} support gradually shrink as $t$ approaches $1$, the automatic tuning of the step-size is an important feature.

Given a target density $p$ on a $d$-dimensional Riemannian manifold $\msm$ and a current point $x\in\msm$, the RLA proposal is \cite{gatmiry_convergence_2022}
\begin{equation}
x^* = \Exp_y(\sqrt{2\delta} Z),\quad y = \Exp_x\bigl(\delta \grad \log p(x)\bigr)\eqsp,
\end{equation}
where $Z \in T_y \msm \simeq \R^d$ is drawn from $\mathcal{N}(0,1)$.

The corresponding proposal distribution given $x$ is therefore $({\Exp_y})_{\sharp}\mathcal{N}(0,2\delta)$. On a general Riemannian manifold, $\Exp_y$ usually fails to be a global diffeomorphism. As a result, the exact proposal density should account for all points $z \in \msm$ satisfying $\Exp_y(\sqrt{2\delta}\Log_y z) = x^*$. Assuming a small step size $\delta$, we neglect these contributions and approximate the proposal density as
\begin{equation}
\label{eq:trans_density}
f(x^*|x) = \frac{\mathcal{N}(\Log_y x^*;0,2\delta)}{\Jac_{y,\Log_y x^*}}\eqsp,
\end{equation}
where the denominator denotes the Jacobian determinant of $\rmd\Exp_y$ at $\Log_y x^*$, see \eqref{eq:jac_exp_detA}. Empirically, we find that omitting this determinant has little impact on the algorithm.
The Metropolis–Hastings acceptance probability is then computed as
\begin{equation}
\alpha = \min\Bigl(1, \frac{p(x^*)}{p(x)} \frac{f(x|x^*)}{f(x^*|x)}\Bigr)\eqsp.
\end{equation}
Note that this procedure is biased due to the approximation on $f$.

\subsection{The $d$-sphere}
\label{app:sphere}
In this section, we summarize useful formulas and algorithmic technicalities in the case $\msm = \sphere^d$ with $\pi_0$ the uniform distribution. We also provide all the hyperparameters used in the experiments of \Cref{sub:sphere_exp}.

\paragraph*{Validity of \Cref{ass:finite}}
\Cref{ass:finite}-\ref{ass:finite_iii} follows from the compactness of $\sphere^d$. We verify that \Cref{ass:finite}-\ref{ass:finite_ii} holds under mild assumptions on the density $p_1$ of the target distribution $\pi$, namely that $p_1>0$ and has regularity $C^1$ on $\sphere^d$. 

First, following \Cref{prop:density}, for every $x_t\in\sphere^d$, 
\begin{equation}
\label{eq:pt1_sphere}
    x_t \in \sphere^d\setminus\{-x_1\} \mapsto p_{t|1}(x_t|x_1) = \Rvol(\sphere^d)^{-1} \frac{\1(\widehat{x_tx_1}< (1-t)\pi)}{1-t}\left(\frac{\sin\left(\frac{\widehat{x_tx_1}}{1-t}\right)}{\sin(\widehat{x_tx_1})}\right)^{d-1} \eqsp,
\end{equation}
as $\det\scrA(s) = \sin(s)^{d-1}$ depends solely on $s$ \cite[Section III.1]{chavel_riemannian_2006}. The factor $\Rvol(\sphere^d)^{-1}$ corresponds to the uniform density on $\sphere^d$.

Let $t\in\coint{0,1}$. Leveraging the submanifold structure of $\sphere^d\subset\R^{d+1}$, to show a function defined on $\sphere^d$ is $C^1$, it suffices to prove it is the restriction of a $C^1$ function defined on an open subset of $\R^{d+1}$.
Let $x_t\in\sphere^d$. Using spherical coordinates \eqref{eq:change-variable} centred at $x_t$ and the expression of $p_{t|1}$, we get
\begin{align}
    p_t(x_t) &= \int_{\sphere^d}p_{t|1}(x_t|x_1)p_1(x_1)\Rvol(\rmd x_1) \\
    &= \Rvol(\sphere^d)^{-1}\int_{S_{x_t}\sphere^d}\int_0^{\pi} \1\left(\frac{s}{1-t}<\pi\right) \frac{\sin^{d-1}(s/(1-t))}{1-t}p_1(\Exp_{x_t}s\xi) \rmd s\rmd \mu_{x_t}(\xi) \\
    &= \Rvol(\sphere^d)^{-1}\int_{S_{x_t}\sphere^d}\int_0^{\pi}  \sin^{d-1}(s)p_1(\Exp_{x_t}s(1-t)\xi) \rmd s\rmd \mu_{x_t}(\xi) \\
    &= \Rvol(\sphere^d)^{-1}\int_{\sphere^d} p_1(\psi_t(x_1;x_t)) \Rvol(\rmd x_1)\eqsp.
\end{align}
We check that $p_t$ is the restriction to $\sphere^d$ of a function $\msu\subset\R^{d+1}\rightarrow \R$, with $\sphere^d\subset \msu$, that is $C^1$ in the usual Euclidean sense. 
For almost every $x_1\in\msm$, $x_t  \mapsto \psi_t(x_1;x_t)$ is readily extended to a  $C^1$ function on $\R^{d+1}\setminus\{0\}$.

Assuming $p_1$ is $C^1$, the usual Leibniz rule applies, as we integrate on the compact set $\sphere^d$ with respect to the finite measure $\Rvol$. $p_t$ is thus $C^1$ as well. With similar reasoning, we see that $t\mapsto p_t(x)$, $x\in\msm$ being fixed, is continuous.

We prove $\bfu_t$ is $C^1$, hence locally Lipschitz continuous, using the same arguments. We assumed that $p_1$ is non-vanishing and as a consequence, $p_t$ is non-vanishing as well. Then, for every $x_t\in\sphere^d$, using spherical coordinates again,
\begin{align}
    &\bfu_t(x_t) = \int_{\sphere^d} \frac{\Log_{x_t}x_1}{1-t} \frac{p_{t|1}(x_t|x_1)}{p_t(x_t)} p_1(x_1)\Rvol(\rmd x_1) \\
    &= \frac{\Rvol(\sphere^d)^{-1}}{p_t(x_t)}\int_{S_{x_t}\sphere^d}\int_0^{\pi} \frac{s\xi}{1-t}\1\left(\frac{s}{1-t}<\pi\right) \frac{\sin^{d-1}(s/(1-t))}{1-t}p_1(\Exp_{x_t}s\xi)\rmd s\rmd\mu_{x_t} \\
    &= \frac{\Rvol(\sphere^d)^{-1}}{p_t(x_t)}\int_{S_{x_t}\sphere^d}\int_0^{\pi} s\xi \sin^{d-1}(s) p_1(\Exp_{x_t}(1-t)s\xi)\rmd s\rmd\mu_{x_t} \\
    &= \frac{\Rvol(\sphere^d)^{-1}}{p_t(x_t)}\int_{\sphere^d} \Log_{x_t}x_1  \times p_1(\psi_t(x_1;x_t)) \Rvol(\rmd x_1)\eqsp.
\end{align}
For almost every $x_1\in\msm$, $x_t\mapsto\Log_{x_t}x_1$ can be extended to a $C^1$ function defined almost everywhere on $\R^{d+1}$, similar to $x_t\mapsto \psi_t(x_1;x_t)$. The usual Leibniz rule applies again, proving that $x_t\in\sphere^d \mapsto \bfu_t(x_t)$ is the restriction of a $C^1$ function.
For every fixed $x\in\sphere^d$, continuity of $t\in\coint{0,1}\mapsto \bfu_t(x)$ holds as well.

\paragraph*{Posterior density and score} Fix $t\in\coint{0,1}$ and $x_t\in\sphere^d$. From \eqref{eq:pt1_sphere},
\begin{equation}
    x_1 \in \sphere^d\setminus\{-x_t\} \mapsto p_{1|t}(x_1|x_t) \propto \1(\widehat{x_tx_1}< (1-t)\pi)\left(\frac{\sin\left(\frac{\widehat{x_tx_1}}{1-t}\right)}{\sin(\widehat{x_tx_1})}\right)^{d-1} \frac{q_1(x_1)}{1-t}\eqsp.
\end{equation}
 The score $\grad \log p_{t|1}(\cdot|x_1)$ appearing in Tweedie's formula \eqref{eq:tweedie}, essential to run \Cref{alg:ula}, is computed by differentiating the logarithm of the Jacobian term $\Jac_t(\cdot;x_1)$ in the ambient space $\R^{d+1}$ and projecting on the tangent space. This yields, for every $x_1\in\sphere^d$, $x_t\in \mso_t(x_1)$,
\begin{equation}\label{eq:score_ula_sphere}
    \grad_{x_t}\log p_{t|1}(x_t|x_1) = (d-1)\left(\frac{\cos(\widehat{x_tx_1})}{\sin(\widehat{x_tx_1})}-\frac{1}{1-t}\frac{\cos\left(\frac{\widehat{x_tx_1}}{1-t}\right)}{\sin\left(\frac{\widehat{x_tx_1}}{1-t}\right)}\right)\frac{x_1-\cos(\widehat{x_tx_1})x_t}{\sin(\widehat{x_t x_1})}\eqsp.
\end{equation}
Similarly, given $x_t$, the score with respect to the $x_1$ variable of $p_{1|t}(\cdot|x_t)$ (used \eg{} in \Cref{alg:sample_ula}) writes
\begin{align}
    \grad_{x_1}\log p_{1|t}(x_1|x_t) = (d-1)\left(\frac{\cos(\widehat{x_tx_1})}{\sin(\widehat{x_tx_1})}-\frac{1}{1-t}\frac{\cos\left(\frac{\widehat{x_tx_1}}{1-t}\right)}{\sin\left(\frac{\widehat{x_tx_1}}{1-t}\right)}\right)\frac{x_t-\cos(\widehat{x_tx_1})x_1}{\sin(\widehat{x_t x_1})} \\
    + \grad \log q_1(x_1)\eqsp.
\end{align}

Recall that in both \Cref{sub:sphere_exp,sub:grass_exp}, $q_1$ is proportional to a mixture of Riemannian normal distributions. The score of a Riemannian normal distribution with density proportional to $x\mapsto \exp\left(-\frac{\dist(\mu,x)^2}{2\sigma^2}\right)$ can be computed as 
\begin{equation}
x\mapsto\frac{\Log_{x}\mu}{\sigma^2}\eqsp,
\end{equation}
for almost all $x\in\sphere^d$. This formula actually holds for any manifold satisfying \Cref{ass:1}. 

\paragraph*{Sampling from $\nu_{1|t}$} Assuming $\pi_0$ is the uniform distribution, samples from $\nu_{1|t}(\cdot|x_t)$, the distribution with un-normalized density $x_1 \mapsto p_{t|1}(x_t|x_1)$ are easily obtained.
Indeed, for any measurable function $h$, using spherical coordinates \eqref{eq:change-variable} and the expression of $p_{t|1}$,
\begin{align}
    \int_{\sphere^d} h(x_1) p_{t|1}(x_t|x_1) \Rvol(\rmd x_1) &\propto \int_{S_{x_t}\sphere^d}\int_0^{\pi} h(\Exp_{x_t}s\xi)\frac{\sin^{d-1}\left(\frac{s}{1-t}\right)}{1-t} \1(s<(1-t)\pi) \rmd s \mu_{x_t}(\rmd \xi) \\
    &= \int_{S_{x_t}\sphere^d}\int_0^{\pi} h(\Exp_{x_t}s(1-t)\xi)\sin^{d-1}(s)  \rmd s \mu_{x_t}(\rmd \xi) \\
    &= \int_{\sphere^d} h\circ\psi_t(x_1;x_t) \Rvol(\rmd x_1)\eqsp.
\end{align}
 This trick uses the fact that $\sphere^d$ has constant sectional curvature, which implies that $\det\scrA_{x_1,\xi_t} = \det\scrA_{x_t,\tilde{\xi}_t}$, where $\xi_t=\Log_{x_1}x_t/\dist(x_t,x_1)$ and $\tilde{\xi}_t = \Log_{x_t}x_1/\dist(x_t,x_1)$. As a result, it holds $\nu_{1|t}(\cdot|x_t) = {\psi_t(\cdot;x_t)}_{\sharp}\pi_0$: an exact sample from $\nu_{1|t}(\cdot|x_t)$ is obtained by applying the map $\psi_t(\cdot;x_t)$ to a uniform sample, which is the procedure we use within \Cref{alg:rejection,alg:importance}.

\paragraph*{Mixture of Gaussian distributions on $\sphere^d$: hyperparameters}
The bi-modal targets are such that $\sigma_4 = \pi/10, \sigma_{16}=\pi/12$ and $\sigma_{32}=\sigma_{64} = \sigma_{92} = \pi/18$, with a mode at each pole.

\begin{itemize}
\item First, initial samples from $\pi_0$ are refined using \Cref{alg:ula}, with $128$ RLA steps for the initialization and step-size $\delta = 0.05$ (except in dimension $d=96$ where $\delta=0.01$).
\item Posterior sampling being more challenging for smaller $t$, we suggest using a larger part of the computational budget at the beginning of the algorithm: during RLA initialization at $t_0$, we set $M=320$.
\item We use $K=128$ uniform timesteps ranging from $t_0$ to $0.99$, to avoid numerical instabilities when $t\approx 1$.

\item Each MALA posterior sampling step (\Cref{alg:sample_ula}) uses $n_{\text{chains}}=8$ chains and $M=32$ steps, keeping the last $8$ samples of each chain. We geometrically decrease or increase the step-size ( initially set to $0.01$) of each chain to maintain an acceptance rate of $\approx 0.57$. The step-size is propagated from the chain at time $t_k$ to the one at time $t_{k+1}$, along with the final state of the chain. 
\item Each IS/RS posterior sampling step (\Cref{alg:importance,alg:rejection}) uses $M\times n_{\text{chains}}$ proposals, to ensure fairness. If every proposal is rejected during RS, random samples from the support of the posterior density are used instead. For simplicity, RS is informed with $q^\star$. 
\item We run MALA chains targeting the distribution $\pi_1$ directly, with the same overall number of MCMC steps, and similarly for direct IS/RS sampling of $\pi_1$.
\end{itemize}

\paragraph*{Mixture of Gaussian distributions on $\sphere^d$: local metric}
\begin{table}
\centering
\caption{Wasserstein distance between approximate samples and exact ones for the bi-modal Gaussian mixture target \eqref{eq:mog_target}. The setting is identical to \Cref{tab:weight}.}

\begin{tabular}{@{}l c c c c c@{}}
\hline
\textsc{Methods} 
& $d=4$ 
& $d=16$ 
& $d=32$ 
& $d=64$ 
& $d=96$\\
\hline
\frips{} \textsc{MALA} 
& $0.22 \pm \text{\scriptsize 0.01}$ 
& $\mathbf{ 0.61\pm \text{\scriptsize 0.00}}$ 
& $0.86\pm \text{\scriptsize 0.00}$ 
& $\mathbf{1.09\pm \text{\scriptsize 0.00}}$
& $\mathbf{1.16\pm \text{\scriptsize 0.00}}$\\

\frips{} \textsc{IS} 
& $0.17\pm \text{\scriptsize 0.01}$ 
& $ 0.63\pm \text{\scriptsize 0.00}$ 
& $ 0.86\pm \text{\scriptsize 0.01}$ 
& $ 1.16\pm \text{\scriptsize 0.00}$
& $ 1.24\pm \text{\scriptsize 0.00}$\\

\frips{} \textsc{RS} 
& $ 0.23\pm \text{\scriptsize 0.01}$ 
& $ 0.85\pm \text{\scriptsize 0.00}$ 
& $ 1.08\pm \text{\scriptsize 0.00}$ 
& $ 1.18\pm \text{\scriptsize 0.00}$
& $ 1.24\pm \text{\scriptsize 0.00}$\\
\hline
\textsc{MALA} 
& $0.46 \pm \text{\scriptsize 0.02}$ 
& $ 0.69\pm \text{\scriptsize 0.00}$ 
& $ 0.92\pm \text{\scriptsize 0.01}$ 
& $\mathbf{1.09\pm \text{\scriptsize 0.00}}$ 
& $ 1.18\pm \text{\scriptsize 0.00}$\\
\textsc{IS} 
& $\mathbf{0.16} \pm \text{\scriptsize 0.01}$ 
& $\mathbf{0.61\pm \text{\scriptsize 0.00}}$ 
& $\mathbf{0.78\pm \text{\scriptsize 0.01}}$ 
& $ 1.10\pm \text{\scriptsize 0.00}$
& $ 1.24\pm \text{\scriptsize 0.00}$\\
\textsc{RS} 
& $ \mathbf{0.16}\pm \text{\scriptsize 0.01}$ 
& $\mathbf{0.61\pm \text{\scriptsize 0.00}}$ 
& N/A 
& N/A 
& N/A\\
\hline
\textsc{Base} &$0.15\pm \text{\scriptsize 0.00}$&$0.60\pm \text{\scriptsize 0.00}$&$0.76\pm \text{\scriptsize 0.00}$& $1.02\pm \text{\scriptsize 0.00}$ 
& $1.15\pm \text{\scriptsize 0.00}$\\
\hline
\end{tabular}

\label{tab:wass}
\end{table}
Even though we focus on the estimation of the dominant mode weight, we also compute the Wasserstein distance between each batch of samples and exact samples from $\pi_1$ (numerically solving the optimal transport problem with a cost matrix matching the Riemannian distance on $\sphere^d$). We report the results in \Cref{tab:wass}, including the distance between two batches of exact samples. While this metric corroborates the relative weight estimation error in low-dimensional settings (see also \Cref{fig:weight}), it becomes less informative in higher dimensions, where it fails to adequately discriminate between the relative quality of the samples. 

\subsection{Stereographic projection} We detail below how to compute the target density and its score once projected to $\sphere^d$, and provide the hyperparameters of the experiments of \Cref{sub:stereo}.
\label{app:stereo}
\paragraph*{Density and score}
Given a scale parameter $R>0$, the stereographic projection operator $\mathrm{SP} : \sphere^d\setminus{N} \subset \mathbb{R}^{d+1} \rightarrow \mathbb{R}^d$, where $N$ denotes the north pole $(0,...,0,1)\in\R^{d+1}$, is defined by
\begin{equation}
    \mathrm{SP}(z) = R\frac{z-z_{d+1}e_{d+1}}{1-z_{d+1}}\eqsp,
\end{equation}
where $z_{d+1}$ is the $d$-th component of $z$ in the canonical basis $(e_i)_{i=1}^{d+1}$ of $\mathbb{R}^{d+1}$.
The projected target distribution is defined as $\pi_1^S = {\mathrm{SP}^{-1}}_{\sharp}\pi_1$, and the associated unnormalized density is given by
\begin{eqnarray}
    z \in \sphere^d\setminus{N}&\mapsto q_1^S(z) = q_1(\mathrm{SP}(z)) \left(\frac{\Vert \mathrm{SP}(z)\Vert^2 + R^2}{2R}\right)^d\eqsp,
\end{eqnarray}
where the multiplicative factor is the Jacobian determinant of the transformation. 

To apply first-order sampling schemes such as MALA (\Cref{alg:sample_ula}) within \frips{}, we compute the gradient log of this density. With $\cdot^T$ denoting the transpose, it is given by
\begin{equation}
    \grad\log q_1^S(z) = \left[\grad \log q_1(\text{SP}(z))\right]^T\rmd \text{SP}_z + d\times\frac{e_{d+1}-z_{d+1}z}{1-z_{d+1}}\eqsp.
\end{equation}
The differential $\rmd \text{SP}_z$ admits the explicit expression
\begin{align}
\begin{split}
    &\rmd\text{SP}_z = \frac{R}{1-z_{d+1}}\left[(z-z_{d+1}e_{d+1})\left(\frac{e-z_{d+1}z}{1-z_{d+1}^2}\right)^T\right. \\
    &\hspace{4cm}\left.+I_{d+1}-zz^T-(e-z_{d+1}z)\left(\frac{e-z_{d+1}z}{1-z_{d+1}^2}\right)^T\right]\eqsp.
\end{split}
\end{align}
This expression is obtained by computing the differential in the ambient space $\R^{d+1}$ and subsequently projecting onto the tangent space of $\sphere^d$.
The posterior density $p_{1|t}$ and its score are then derived similarly to \Cref{app:sphere}.

\paragraph*{Mixture of Student's distributions: hyperparameters} 
While the Euclidean \frips{} runs conducted in \Cref{sub:stereo} are essentially a reparametrization of \cite{grenioux_stochastic_2024}, additional details are provided in \Cref{app:euclidean}. 

For both the $\R^d$ and $\sphere^d$ runs of \frips{}, we use $K=128$ uniform timestep ranging from $t_0$ to $0.99$, and 128 RLA steps during initialization with step-size $\delta=0.01$. MALA posterior sampling steps use $n_{\text{chains}}=8$, $M=32$ steps, and we keep the last 8 samples of each chain for Monte-Carlo estimation. During RLA initialization, we set $M=160$ instead. For fairness, naive sampling methods with MALA, in both $\R^d$ and $\sphere^d$, use the same total number of MALA steps.

Following the recommendation of \cite{yang_stereographic_2024}, we use a radius $R = \sqrt{d}$ for the stereographic projection.

While the uniform distribution is the natural choice for $\pi_0$ on $\sphere^d$, we use a standard Gaussian in the $\R^d$ case. In \cite{grenioux_stochastic_2024}, it is suggested to adjust the Gaussian base distribution using estimators of the mean and variance of the target distribution. The heavy-tailed target distributions of our experiments have undefined mean and variance, hence this default choice.

\paragraph*{Heavy-tail metric} The heavy-tail metric reported on \Cref{fig:student} is defined in \eg{} \cite{allouche_ev-gan_2022}. We include it here for completeness.
Given $n$ exact samples from $\pi_1$, $(x_i)_{1\leq i\leq n}$, for every $1\leq j\leq d$, order the $j$-th components of the samples, so we have $x_1^j\leq...\leq x_n^j$. Then, if $(\tilde{x}_i)_{1\leq i\leq n}$ are approximate samples, set for any $\xi\in\ccint{0,1}$:
\begin{equation}\label{eq:msle}
    \text{MSLE}(\xi) = \frac{1}{d \lceil (1-\xi)n\rceil}\sum_{j=1}^d\sum_{i=1}^{\lceil (1-\xi)n \rceil} \left(\log x_{n-i+1}^j - \log \tilde{x}_{n-i+1}^j\right)^2\eqsp,
\end{equation}
granted every $x_{n-i+1}^j,\tilde{x}_{n-i+1}^j, 1\leq j\leq d, 1\leq i\leq \lceil (1-\xi)n \rceil$ is non-negative. By setting $\xi$ close to $1$, this metric measures the discrepancy between tail samples of the target distribution and approximate ones. In our numerical experiments, we set $\xi=0.99$.

\subsection{Grassmann manifold}
\label{app:grassmann}
We introduce the properties of the Grassmann manifold required to run \frips{}, and specify the hyperparameters used in the corresponding experiment (\Cref{sub:grass_exp}).

\paragraph*{Definition and basic properties}
We refer to \cite{bendokat_grassmann_2020} for a comprehensive introduction.
The Grassmann manifold $\grass$ is the set of all $p$-dimensional linear subspaces of $\R^n$.
It can be identified with the set of rank-$p$ orthogonal projectors $P\in\R^{n\times n}$, satisfying $P^T=P$ and $P^2=P$.
An alternative and useful viewpoint is to see $\grass$ as a quotient of the orthogonal group $\mso(n)$, the group of $n\times n$ matrices $Q$ such that $Q^T Q = I_n$ (the identity matrix of $\R^n)$.
Since $\mso(n)$ is a compact Lie group, its quotients naturally inherit a smooth manifold structure.

To make this construction explicit, consider the \emph{Stiefel manifold} $\stiefel$, defined as the set of $n\times p$ matrices $U$ satisfying $U^T U = I_p$.
Such matrices are obtained by retaining the first $p$ columns of an element of $\mso(n)$.
Each point $P\in\grass$ then corresponds to an equivalence class of matrices $U\in\stiefel$ through the projection map $\pi(U)\coloneqq UU^T=P$.

The Grassmann manifold $\grass$ is a compact, embedded submanifold of $\mathrm{Sym}(n)$, the space of $n\times n$ symmetric matrices, with dimension $d=p(n-p)$.
Equipped with its canonical Riemannian structure, $\grass$ is complete and connected \cite[Section 4.1]{bendokat_grassmann_2020}, and therefore satisfies \Cref{ass:1}.

In practice, it is computationally advantageous to work with $n\times p$ Stiefel representatives rather than $n\times n$ projectors.
Importantly, both the exponential and logarithm maps on $\grass$ can be computed entirely using Stiefel representatives and their tangent spaces; see \cite[Proposition 3.3 and Algorithm 5.3]{bendokat_grassmann_2020}.

Let $U\in\stiefel$.
The tangent space $T_U\stiefel$ admits an orthogonal decomposition into a vertical space $\mathsf{Ver}_U\stiefel$, defined as the kernel of $\rmd \pi_U$, and a horizontal space $\mathsf{Hor}_U\stiefel$, given by its orthogonal complement with respect to the Riemannian metric on $\stiefel$.
The tangent space at $P=UU^T\in\grass$ can then be identified with $\mathsf{Hor}_U\stiefel$.
As a consequence, any tangent vector $\Delta\in T_P\grass$ admits a \emph{horizontal lift} $\Delta_U^{\mathsf{hor}}\in\mathsf{Hor}_U\stiefel$, which can be written as
\begin{equation}
\Delta_U^{\mathsf{hor}} = U_\perp \Delta\eqsp,
\end{equation}
where $U_\perp$ denotes any orthogonal completion of $U$.

\paragraph*{Posterior density}
Following \cite[Section 5.1]{bendokat_grassmann_2020}, the cut time at $P$ in direction $\Delta\in T_P\grass$ is
\begin{equation}
\label{eq:cut_time_grass}
    \cuttime_P(\Delta) = \frac{\pi}{2\sigma_1}\eqsp,
\end{equation}
with $\sigma_1$ the largest the singular value of $\Delta_U^{h}$, where $U\in\stiefel$ is any Stiefel representative of $P$. From \Cref{prop:density}, the cut time defines the support $\mso_t(x_1)$ of the density $p_{t|1}(\cdot|x_1)$ for every $t\coint{0,1},\,x_1\in\grass$.

Then, relying on the symmetric space structure of $\grass$, the Jacobian determinant of $\Exp_P$ at $\Delta$ can be computed from the singular values of $\Delta_U^{h}$. From \eqref{eq:jac_exp}, we can derive \cite[Section 4.2.1]{chevallier_exponential-wrapped_2022}:
\begin{equation}
\label{eq:jac_exp_grassmann}
\Jac_{P,\Delta} = \prod_{1\leq i < j \leq q} \sinc(\sigma_i + \sigma_j)\sinc(\sigma_i -\sigma_j)\prod_{i=1}^q\sinc(\sigma_i)^{\vert n-2p\vert}\eqsp,
\end{equation}
where $x\mapsto \sinc(x) = \sin(x)/x$ and $\sigma_i,\, 1\leq i\leq q\coloneqq \min(p,n-p)$, denote the singular values of $\Delta_U^{\mathsf{hor}}$ counted with multiplicity one. A closed-form expression of $\det\scrA$ then follows from \eqref{eq:jac_exp_detA}, and as a consequence, we obtain a closed-form expression of the density $p_{t|1}$ of \Cref{prop:density} as well. 

Notice that the Jacobian determinant \eqref{eq:jac_exp_grassmann} \emph{does not} vanish on the boundary of the injectivity domain, which corresponds to tangent vectors $\Delta$ such that $\sigma_1 = \pi/2$. This differs from the $\sphere^d$ case, where we easily check that the Jacobian term \emph{does vanish}: as a result, the density $p_{t|1}$ from \Cref{prop:density} is continuous on $\sphere^d$, but not on $\grass$, where it only continuous on its support.

\paragraph*{Score of the posterior density} To use a first-order posterior sampling scheme like MALA within \frips{}, we need to compute the score $x_1\mapsto \grad \log p_{t|1}(x_t|x_1)$, $x_t\in\grass, t\in\coint{0,1}$ being fixed. We suggest using automatic differentiation, using the closed-form expression of $p_{t|1}$ derived above. Since we are working with Stiefel representatives, we actually compute the usual Euclidean gradient of $\log p_{t|1}(x_t|\cdot)\circ \pi$, with $\pi$ the projector $\stiefel \subset \R^{n\times p}\rightarrow \grass$, using automatic differentiation. Then, we project the result to $\mathsf{Hor}_{\tilde{x}_1}\stiefel$, where $\tilde{x}_1$ is a Stiefel representative of $x_1$. We refer to \cite[Section 3.3]{bendokat_grassmann_2020} for a precise justification.

\paragraph*{Sampling from $\nu_{1|t}$} Similar to the sphere case, the distribution $\nu_{1|t}(\cdot|x_t)$, with density $x_1\mapsto p_{t|1}(x_t|x_1)$ for a fixed $x_t\in\grass$, is actually the pushforward measure $\psi_t(\cdot;x_t)_{\sharp}\pi_0$, when $\pi_0$ is the uniform distribution on $\grass$. The justification is similar (see \Cref{app:sphere}), noticing that the horizontal lifts of $\Log_{x_1} x_t$ and $\Log_{x_t}x_1$ share the same singular values: this follows directly from the computation of the logarithm \cite[Algorithm 5.3]{bendokat_grassmann_2020}. As a consequence, it holds $\det\scrA_{x_1,\xi_t} = \det\scrA_{x_t,\tilde{\xi}_t}$, with $\xi_t = \Log_{x_1} x_t$ and $\tilde{\xi}_t = \Log_{x_t}x_1$.

\paragraph*{Mixture of Gaussian distributions on $\grass$: hyperparameters} We set $\sigma = 0.25$, and a mode distance $\dist(\mu_1,\mu_2) = 2.18$ and $\dist(\mu_1,\mu_2) =2.45$ for the low and high mode-sepration regimes, respectively. 
We use $K=128$ uniform timesteps ranging from $t_0$ to $0.99$, with one MALA chain of length $M=32$ at each posterior sampling step. The IMH  initialization (\Cref{alg:pseudo_mcmc}) uses $M=256$ steps, where each evaluation of the posterior $p_{t_0}$ is estimated with a batch of $512$ samples from $\nu_{1|t_0}$.

As explained in \Cref{sub:grass_exp}, Tweedie's formula \eqref{eq:tweedie} cannot be applied here due to the discontinuity exhibited above, which is why we settle for a zero-th order scheme for the initialization at $t_0$.

\subsection{Euclidean space} 
\label{app:euclidean}
In \Cref{sub:stereo}, we run \frips{} in the Euclidean space $\R^d$, which obviously satisfies \Cref{ass:1}. This particular setting is closely related to \eg{} \cite{grenioux_stochastic_2024}, and not the main object of this paper. Still, for completeness, we provide some details below.

\paragraph*{Posterior density}
The interpolation process \eqref{eq:interpolation_def} is simply $X_t = tX_1 + (1-t)X_0$, with $X_0\sim\pi_0 = \mathcal{N}(0,\sigma^2I_d),\, \sigma>0$. A simple change of variable shows that for every $t,x_t$,
\begin{equation}
    p_{t|1}(x_t|x_1) =  \mathcal{N}(x_t;tx_1,(1-t)^2\sigma^2I_d)\eqsp,
\end{equation}
which is consistent with \Cref{prop:density}.

\paragraph*{Validity of \Cref{ass:finite}} We can check the technical assumption \Cref{ass:finite}, assuming the target density $p_1$ has a finite first moment, which ensures \Cref{ass:finite}-\ref{ass:finite_ii}, and that it has regularity $C^1$. 

Let $t\in\coint{0,1}$. For every $x_t\in\R^d$, notice that $p_t(x_t) = \int p_{t|1}(x_t|x_1)p_1(x_1)\rmd x_1$ is non-negative and $C^1$ using the Leibniz rule and the fact that for every $x_1\in\R^d, x_t\mapsto p_{t|1}(x_t|x_1)$ is $C^1$ and bounded. Then,
\begin{equation}
    x_t\mapsto\bfu_t(x_t) = \frac{1}{p_t(x_t)}\int \frac{x_1-x_t}{1-t}p_{t|1}(x_t|x_1)p_1(x_1)\rmd x_1
\end{equation}
is $C^1$ using the Leibniz rule and the first moment assumption on $p_1$. Similarly, it is continuous in the $t$ variable, $x_t$ being fixed. Thus, \Cref{ass:finite}-\ref{ass:finite_iii} holds.

\section{Proofs}\label{app:proof}
\subsection{Proof of \Cref{prop:diffeo}}\label{proof:diffeo}
\begin{proof}

\ref{prop_smooth_psi_t_i}
Under \Cref{ass:1}, \cite[Theorem 16.34]{lee_introduction_2018} ensures that $\Rvol(\mathrm{Cut}(x_1)) = 0$ which implies by the Fubini theorem that $\Rvol \otimes \Rvol (\msd^{\complement}) = \int \rmd \Rvol(x_1) \Rvol(\cut(x_1)) = 0$. Then for $\Rvol\otimes\Rvol$-almost all $(x_0,x_1)$, $t\in\ccint{0,1}\mapsto\psi_t(x_0;x_1)$ is the unique minimizing geodesic between $x_0$ and $x_1$, hence smooth by definition of geodesics \cite[Chapter 4]{lee_introduction_2018}.

  \ref{prop_smooth_psi_t_ii}
  Let $t\in\coint{0,1}$ and $x_1\in\msm$. Under \Cref{ass:1}, the map $v\in\ID(x_1) \mapsto \Exp_{x_1}v$ is a smooth diffeomorphism onto its image $\mcut$ \cite[Proposition 5.19, Theorem 10.34]{lee_introduction_2018}, and its inverse $x\mapsto \Log_{x_1}x$ defines a diffeomorphism from $\mcut$ to $\ID(x_1)$. Furthermore $v\in \ID(x_1) \mapsto (1-t)v $ is a diffeomorphism valued in $\ID(x_1)$ by definition of $\ID(x_1)$ and since $t\neq 1$. We conclude that  $\psi_t(\cdot;x_1)$ is a diffeomorphism from $\mcut$ into its image. The map defined in \eqref{eq:inv_psi} is actually well defined for every $x_t\in \mcut$, and we easily verify using the properties of the exponential and logarithm maps that it is the desired inverse map when restricted to $\mso_t(x_1)$.

\ref{prop_smooth_psi_t_iii}
The fact that $\mso_t(x_1)$ is open is an immediate consequence of the fact that $\psi_t(\cdot;x_1)$ is a diffeomorphism and $\msm\setminus \cut(x_1)$ is open \cite[Theorem 16.34]{lee_introduction_2018}. Regarding the last point, any point  $x_t\in \mcut$ satisfies $\dist(x_1,x_t)<(1-t)\cuttime_{x_1}\left(\frac{\Log_{x_1}x_t}{\dist(x_1,x_t)}\right)$ if and only if $\Log_{x_1}x_t/(1-t)\in\ID(x_1)$ by definition of the cut time, and this statement is equivalent to
\begin{equation}
        \Log_{x_1}\Exp_{x_1}(\Log_{x_1}x_t/(1-t)) = \Log_{x_1}x_t/(1-t)\eqsp, \end{equation}
        which rewrites as $x_t = \psi_t(\psi_t^{-1}(x_t;x_1);x_1)\in\mso_t(x_1)$.
  
\end{proof}

\subsection{Proof of \Cref{lem:condi_vector_field}}
\label{proof:log}
\begin{proof}
By \Cref{prop:diffeo}, $t\in\ccint{0,1}\mapsto \psi_t(x_0;x_1)$ is smooth for almost every $(x_0,x_1)\in\msm\times\msm$; hence the conditional vector field $t\in\ccint{0,1}\mapsto \dot{X}_t = \dot{\psi_t}(X_0;X_1) \in T_{X_t}\msm$ is $\pi_{0,1}$-almost surely well-defined.

Recall that, by \eqref{eq:psi_def}, for every $(x_0,x_1)\in\msd,\, t\in\coint{0,1}$, $\dot{\psi}_t(x_0;x_1)$ is the velocity vector at time $t$ of the minimizing geodesic $\dot{\upgamma}_{x_0}^{x_1}$. Denote $x_t = \upgamma_{x_0}^{x_1}(t)$ and notice that the time-shift $s\in\ccint{0,1}\mapsto \upgamma_{x_0}^{x_1}(s(1-t)+t)$ coincides with the minimizing geodesic $s\in\ccint{0,1}\mapsto\upgamma_{x_t}^{x_1}(s)$, as $0\leq s(1-t)+t\leq 1$ for every $s\in\ccint{0,1}$, by uniqueness of $ \upgamma_{x_0}^{x_1}$. Therefore, since $\dot{\upgamma}_{x_t}^{x_1} (0) = \Log_{x_t}{x_1}$ by definition of minimizing geodesics and the logarithm map, we have $(1-t)\dot{\upgamma}_{x_0}^{x_1}(t) = \Log_{x_t}x_1$. Thus, almost surely,~\eqref{eq:lem:condi_vector_field} holds.
\end{proof}

\subsection{Proof of \Cref{prop:verify_prop_cont}}
\label{proof:mass}
\begin{proof}
By \Cref{lem:condi_vector_field}, for any smooth function with compact support $f:\ooint{0,1} \times \msm\to \rset$, almost surely, we have for every $t\in\ooint{0,1}$,
  \begin{equation}
    \frac{\rmd}{\rmd t}f(t,X_t) = \partial_tf(t,X_t) + \ps{\grad_x f(t,X_t)}{\dot{X}_t}_{X_t}\eqsp,
  \end{equation}
  hence
  \begin{equation}
    \PE\left[\frac{\rmd}{\rmd t}f(t,X_t)\right] = \PE[\partial_tf(t,X_t)] + \PE\left[\ps{\grad_xf(t,X_t)}{\PE[\dot{X}_t|X_t]}_{X_t}\right]\eqsp.
  \end{equation}
  Integrating over $\ooint{0,1}$, the left-hand term vanishes, yielding
  \begin{equation}
    \int_0^1 \int_{\msm} \partial_t f(t,x)\rmd \pi_t(x) \rmd t + \int_0^1\int_{\msm} \ps{\grad_xf(t,x)}{\bfu_t(x)}_{x} \rmd\pi_t(x) \rmd t = 0\eqsp,
  \end{equation}
  completing the proof.

\end{proof}

\subsection{Proof of \Cref{prop:density}}\label{proof:spherical}
\begin{proof}

 Let $t\in\coint{0,1}$ and $x_1\in\msm$.
  Let $f:\msm \rightarrow \R$ be a bounded continuous function. Since $\pi_{t|1}(\cdot|x_1) = \psi_t(\cdot;x_1)_{\sharp}\pi_{0|1}(\cdot|x_1)$, $\psi_t(\cdot;x_1)$ being well-defined almost everywhere, we have
\begin{equation}
    \int_{\msm} f(x_t)\pi_{t|1}(x_t|x_1) = \int_{\msm} f\circ\psi_t(x_0;x_1)p_{0|1}(x_0|x_1)\Rvol(\rmd x_0)\eqsp.
\end{equation}
 We write this integral using spherical coordinates centred at $x_1$: see \Cref{eq:change-variable}. 

\begin{equation}
\int_{\Smsm{x_1}}
\int_0^{\cuttime_{x_1}(\xi)}f\left(\psi_t(\Exp_{x_1}s\xi;x_1)\right) p_{0|1}\left(\Exp_{x_1}s\xi|x_1\right) \det \detA(s) \rmd s \rmd \mu_{x_1}(\xi) \eqsp.
\end{equation}
    Now, we can use the expression of $\psi_t$ to simplify the Riemannian exponential and logarithm maps, yielding
    \begin{align}&\int_{\Smsm{x_1}}\int_0^{\cuttime_{x_1}(\xi)}f\left(\Exp_{x_1}s(1-t)\xi\right) p_{0|1}\left(\Exp_{x_1}s\xi|x_1\right) \det \detA(s) \rmd s \rmd \mu_{x_1}(\xi) \\
    &=\int_{\Smsm{x_1}}\int_0^{\cuttime_{x_1}(\xi)(1-t)}f\left(\Exp_{x_1}s\xi\right) p_{0|1}\left(\Exp_{x_1}\frac{s}{1-t}\xi|x_1\right) \det \detA\left(\frac{s}{1-t}\right)\frac{\rmd s}{1-t} \rmd \mu_{x_1}(\xi) \\
    \begin{split}
&=\int_{\Smsm{x_1}}\int_0^{\cuttime_{x_1}(\xi)}\1\left(\frac{s}{1-t}\xi \in \ID(x_1)\right)f\left(\Exp_{x_1}s\xi\right) p_{0|1}\left(\Exp_{x_1}\frac{s}{1-t}\xi|x_1\right) \\ &\hspace{8cm}\det \detA\left(\frac{s}{1-t}\right) \frac{\rmd s}{1-t} \rmd \mu_{x_1}(\xi)\eqsp,
    \end{split}
    \end{align}
    where we first applied a linear change of variable $s\leftarrow s(1-t)$, and then used that for a given $\xi\in\Smsm{x_1}$, $s\xi\in\ID(x_1)$ if and only if $s<\cuttime_{x_1}(\xi)$. Finally, we recognize the expression of $\psi_t^{-1}(\cdot;x_1)$, and note that $\det \detA \neq 0$ when $s\xi\in\ID(x_1)$ \cite[Section III.2]{chavel_riemannian_2006}, to go back to an integral on $\msm$. Using the variable $x_t = \Exp_{x_1}s\xi$ and denoting $\Log_{x_1}x_t/\dist(x_1,x_t) = \xi_t$, we get
\begin{align}
\begin{split}
&=\int_{\Smsm{x_1}}\int_0^{\cuttime_{x_1}(\xi)}\1\left(\frac{s}{1-t}\xi \in \ID(x_1)\right) f\left(\Exp_{x_1}s\xi\right) p_{0|1}\left(\Exp_{x_1}\frac{1}{1-t}\Log_{x_1}\Exp_{x_1}s\xi|x_1\right) \\
&\hspace{8cm}\frac{\detA\left(\frac{s}{1-t}\right)}{\detA(s)} \frac{\rmd s}{1-t} \rmd \mu_{x_1}(\xi)
\end{split}\\
\begin{split}
&=\int_{\msm}\1\left(\frac{\Log_{x_1}x_t}{1-t} \in \ID(x_1)\right) f(x_t)  p_{0|1}\left(\Exp_{x_1}\frac{1}{1-t}\Log_{x_1}x_t|x_1\right) \\
    &\hspace{8cm}\frac{\det \scrA_{x_1,\xi_t}\left(\frac{\dist(x_t,x_1)}{1-t}\right)}{(1-t)\det \scrA_{x_1,\xi_t}(\dist(x_t,x_1))} \Rvol(\rmd x_t)\end{split}\\
    &= \int_{\msm} f(x_t) \1_{\mso_t(x_1)}(x_t) p_{0|1}\left(\psi_t^{-1}(x_t;x_1)|x_1\right) \Jac_t(x_t;x_1) \Rvol(\rmd x_t)\eqsp,
\end{align}
where $\Jac_t(x_t;x_1)$ is defined for every $x_t\in\mso_t(x_1)$ as
\begin{equation}
    \Jac_t(x_t;x_1) = \frac{\det \scrA_{x_1,\xi_t}\left(\frac{\dist(x_t,x_1)}{1-t}\right)}{(1-t)\det \scrA_{x_1,\xi_t}\left(\dist(x_t,x_1)\right)}\eqsp.
\end{equation}
 This proves that $\pi_{t|1}(\cdot|x_1)$ has a density with respect to $\Rvol$, given by
\begin{equation}
    p_{t|1}(x_t|x_1) = \1(x_t\in\mso_t(x_1)) p_{0|1}(\psi_t^{-1}(x_t;x_1)|x_1)\Jac_t(x_t;x_1)\eqsp.
\end{equation}

Since $x_t\in\mso_t(x_1)\mapsto \psi_t^{-1}(x_t;x_1)$ is smooth, as well as $(s,\xi)\mapsto \scrA_{x_1,\xi}(s)$ (from \cite[Equation III.1.4]{chavel_riemannian_2006}, it is defined as the solution of a second-order ODE with smooth coefficients), this density is smooth on $\mso_t(x_1)$ granted smoothness of $p_{0|1}(\cdot|x_1).$

\end{proof}

\end{appendix}

\end{document}